\newtheorem{theorem}{Theorem}
\newtheorem{remark}{Remark}
\newtheorem{assumption}{\bf{Assumption}}
\newtheorem{corollary}{Corollary}
\newtheorem{definition}{\bf{Definition}}
\begin{document}

\title[Article Title]{RSPECT: Robust and Scalable Planner for Energy-Aware Coordination of UAV-UGV Teams in Aerial Monitoring}


\author[1]{\fnm{Cahit Ikbal} \sur{Er}}\email{er.c@northeastern.edu}

\author[2]{\fnm{Amin} \sur{Kashiri}}\email{kashiri.a@northeastern.edu}

\author*[1,2]{\fnm{Yasin} \sur{Yaz{\i}c{\i}o\u{g}lu}}\email{y.yazicioglu@northeastern.edu}

\affil[1]{\orgdiv{Department of Mechanical and Industrial Engineering}, \orgname{Northeastern University}, \city{Boston}, \postcode{02115}, \state{MA}, \country{USA}}

\affil[2]{\orgdiv{Department of Electrical and Computer Engineering}, \orgname{Northeastern University},  \city{Boston}, \postcode{02115}, \state{MA}, \country{USA}}

\abstract{We consider the robust planning of energy-constrained unmanned aerial vehicles (UAVs) and unmanned ground vehicles (UGVs), which act as mobile charging stations, to perform long-horizon aerial monitoring missions. More specifically, given a set of points to be visited by the UAVs and desired final positions of the UAV-UGV teams, the objective is to find a robust plan (the vehicle trajectories) that can be realized without a major revision in the face of uncertainty (e.g., unknown obstacles/terrain, wind) to complete this mission in minimum time.  We provide a formal description of this problem as a mixed-integer program (MIP), which is NP-hard. Since exact solution methods are computationally intractable for such problems,  we propose RSPECT, a scalable and efficient heuristic. We provide theoretical results on the complexity of our algorithm and the feasibility and robustness of resulting plans. We also demonstrate the performance of our method via simulations and experiments.}

\keywords{Multi-robot planning, UAV-UGV teams, robust planning, energy-aware planning, vehicle routing problem.\newline
\textbf{Paper categories:} (5), (3)}


\pacs[MSC Classification]{90C27, 93C85}

\maketitle

\section{Introduction}
\label{sec:intro}
Unmanned aerial vehicles (UAVs) have proven to be invaluable in various domains such as precision agriculture, logistics, disaster response, security, and environment/infrastructure monitoring (e.g., \cite{tokekar2016sensor, manfreda2018uavmonitoring, ham2016visual, ren2019uavmonitoring, boccardo2015uav, asarkaya2021temporal}). One of the main roles of UAVs in such applications is to perform aerial monitoring, i.e., collecting sensor data (e.g., aerial images) from desired locations. In missions defined over large areas or long time-horizons, a major limitation of UAVs is their limited battery capacity. This limitation can be addressed by deploying some charging stations (e.g, \cite{Mulgaonkar2014AutonomousCharging, bacanli2021charging, Nigam2012ControlMultipleUAVs}). However, this approach requires establishing some infrastructure on the ground, which may be very inefficient or infeasible in some scenarios (e.g., missions over very large areas or in remote locations). Alternatively, the UAVs can be supported with unmanned ground vehicles (UGVs), which serve as mobile charging stations and transport the UAVs between different take-off/landing locations. Recently, there has been a growing interest in energy-aware planning of UAV-UGV teams. Coordination of UAVs with UGVs for monitoring/coverage tasks intersects with several research areas.

In transportation and logistics, problems similar to the UAV-UGV cooperative routing problem are studied as Vehicle Routing Problem with Drones (VRP-D) or Traveling Salesperson Problem with Drones (TSP-D) \cite{wang2019vehicle, tang2019study}. In \cite{murray2015flying}, a Mixed Integer Linear Programming (MILP) formulation was investigated for a similar problem regarding last-mile delivery with truck-drone system. A MILP formulation and a heuristic approach based on TSP-LS was studied in \cite{ha2018min}. In \cite{bouman2018dynamic}, an exact solution approach for TSP-D was provided with dynamic programming. In \cite{chen2021delivery}, a heterogeneous robot planning problem for urban parcel delivery was explored, where UGVs were navigating in a road network. Binary integer programming models for two-echelon cooperative routing were introduced in \cite{luo2017two, li2021ground}. A two-stage, route-focused framework with a truck and a drone was devised in \cite{liu2020two}, optimizing both vehicle routes via hybrid heuristics. While these works show some similar applications to aerial-ground cooperation, they focus on delivery and logistics  rather than aerial monitoring.

In robotics, cooperative routing problems involving single UAV-UGV systems were addressed in \cite{maini2015cooperation, manyam2019cooperative}, using greedy heuristics, MILP formulations, and Branch and Cut algorithms. In \cite{ropero2019terra}, given a set of points to be visited by the UAV, first a set of rendezvous locations (charging stops) are computed based on Voronoi tessellations, then the UAV and UGV trajectories are computed separately via a genetic algorithm and a variant of $A^*$. In \cite{yu2018algorithms}, an algorithm was proposed that determines the UAV path and the optimal locations for landing on stationary or mobile UGV-based charging stations by formulating the problem as a variant of the Generalized Traveling Salesperson Problem (GTSP). In \cite{maini2019coverage}, a two-stage heuristic for UAV-UGV route planning with energy constraints was proposed, leveraging a mobile ground vehicle constrained to a road network as a refueling station to maximize coverage. 

Coordinating multiple UAV-UGV teams relates to multi-agent task allocation and the multiple Traveling Salesperson Problem (mTSP), where multiple agents collectively visit locations while minimizing total cost. In \cite{seyedi2019persistent}, a scalable heuristic for persistent surveillance was presented, partitioning the environment into smaller regions that can be covered in a single tour by the energy-limited UAVs, which are recharged and transported between those regions by the UGVs. Hierarchical, bi-level optimization frameworks for routing of multiple energy-aware UAVs and a single UGV have been explored in \cite{ramasamy2021cooperative, ramasamy2022coordinated, mondal2023optimizing}, typically employing clustering methods  (e.g., K-means) to determine UGV visits and TSP/VRP for route planning. A sequential planning framework was proposed in \cite{mondal2025cooperative}, using minimum set cover for UGV stops and energy-constrained vehicle routing for UAV paths. A deterministic multi-agent routing framework for heterogeneous UAV-UGV teams with asynchronous planning was proposed in \cite{mondal2024robust}, addressing constraint satisfaction across varying team compositions.

A common limitation across most of these approaches is the assumption of complete environmental knowledge. When unknown obstacles are encountered or travel times exceed predictions due to disturbances, such methods may become infeasible and require complete re-planning—a computationally expensive process. Few works explicitly address uncertainty in UAV-UGV coordination. In \cite{shi2022risk}, linear programming was used to find optimal rendezvous points to modify the trajectories of a UGV (charging station) and an energy-limited UAV with stochastic fuel consumption. In \cite{mondal2025risk}, a risk-aware reinforcement learning framework was proposed, accounting for stochastic energy consumption, enforcing empirical risk constraints during policy optimization. In \cite{Lin2022}, building on \cite{seyedi2019persistent}, scalable and robust planning of UAV-UGV teams in persistent surveillance problems in the face of a priori unknown obstacles was achieved. While these approaches represent important progress in robust planning, they either lack formal guarantees on mission feasibility or rely on environment partitioning strategies. Our work provides a complementary approach with formal robustness guarantees and addresses scalable multi-team coordination, an area that has received limited attention in robust UAV-UGV planning.

\color{black}
In this paper, we propose a scalable and efficient algorithm, named \textbf{R}obust and \textbf{S}calable \textbf{P}lanner for
    \textbf{E}nergy-Aware \textbf{C}oordination of UAV-UGV \textbf{T}eams in Aerial Monitoring (RSPECT), for robust planning of UAV-UGV teams to perform long-horizon aerial monitoring missions. \color{black} In particular, we consider missions where a given set of aerial locations must be visited by a collection of UAV-UGV teams, each of which consist of  an energy-limited UAV and a supporting mobile charging station (UGV). Each team starts from an initial location and must end the mission at a specified final location. The goal is to achieve robust offline planning of the vehicles to minimize the overall mission completion time while ensuring that all aerial locations will be visited by one of the UAVs without violating the UAVs' energy constraints, even in the face of bounded disturbances that may cause longer travel times or deviations from the offline plan. The main contributions of this paper are as follows:

\begin{enumerate}
    \item We formulate a novel robust, energy-aware, multi UAV-UGV planning problem as a mixed-integer program for aerial monitoring missions.
    \item We propose a polynomial-time and efficient heuristic, RSPECT (Alg.\ref{alg:integrated_path_planning}), to solve this NP-hard planning problem.
    \item We present theoretical results on the computational complexity (Theorem \ref{th-1}) and robustness (Theorem \ref{th2}, Corollary \ref{cor}) of our algorithm. 
    \item We demonstrate the performance of our approach via simulations, numerical comparisons with standard optimization techniques (branch and cut, simulated annealing, genetic algorithm) and other heuristics proposed in closely related works \cite{ropero2019terra,yu2018algorithms,maini2019coverage}, and real-world experiments.
\end{enumerate}
\color{black}

\section{Problem Formulation}
\label{sec:problem_formulation}

 An aerial monitoring mission is being considered, where small UAVs are supported by UGVs, which serve as mobile charging stations and transport the UAVs between take-off/landing locations. 
An overview of the representative multi-team UAV–UGV aerial monitoring mission environment considered in this work is illustrated in Fig. \ref{fig:concept}
\begin{figure}[htpb]
  \centering
  \includegraphics[width=1.0\linewidth]{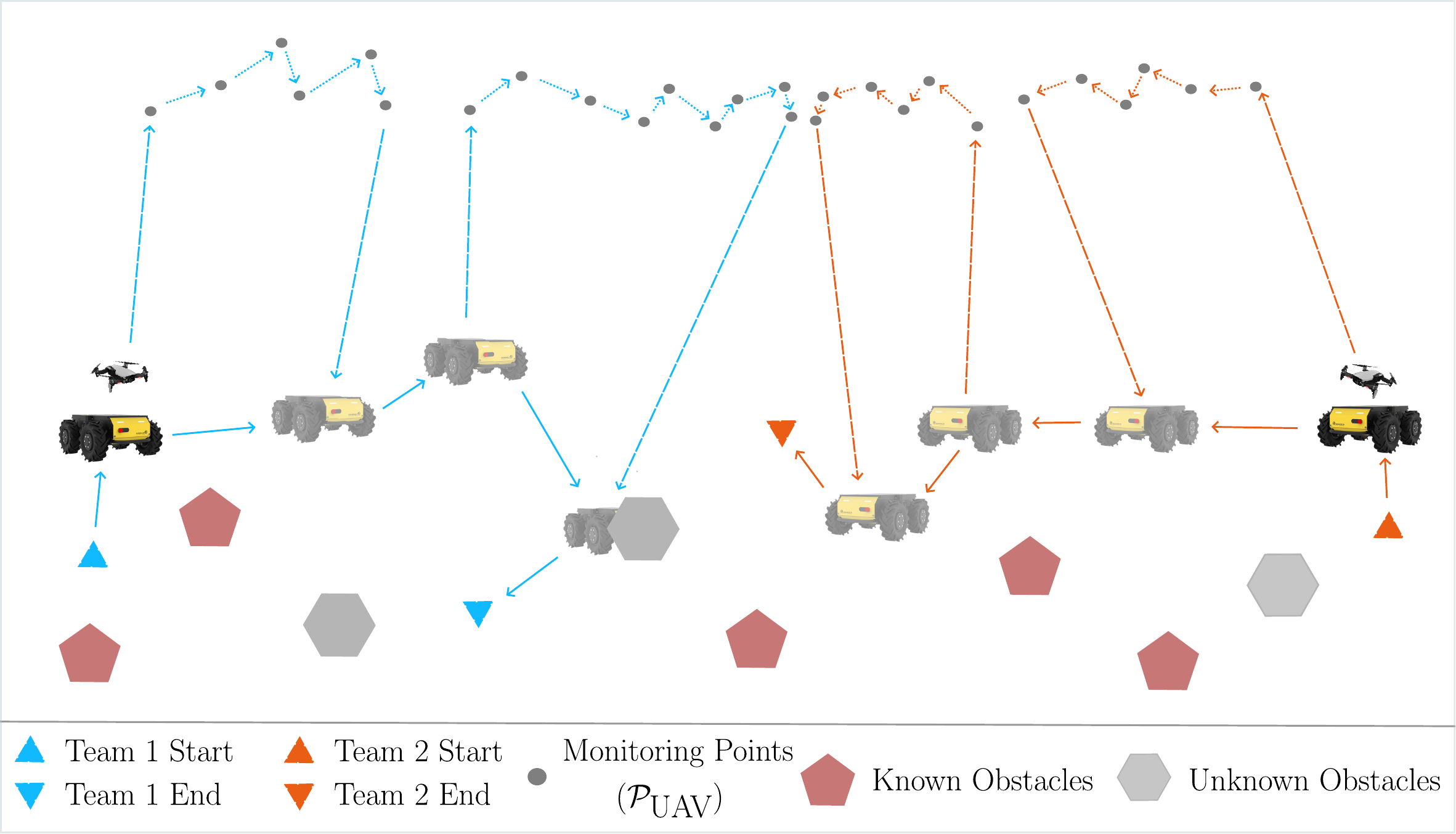}
    \caption{Each team (UAV-UGV pair) has given start/end positions. The goal is to achieve robust offline planning of the UAV-UGV teams to minimize the mission completion time while ensuring all aerial monitoring points are visited by one of the UAVs without violating the UAVs' energy constraints, even under disturbances that cause longer travel times or deviations from the plan.}
    \label{fig:concept}    
\end{figure}

In the remainder, $\mathbb{R}$ and $\mathbb{R}_{\geq 0}$ will be used to denote the sets of real numbers and non-negative real numbers, respectively. Let the environment be represented as 
\begin{equation}
\label{env}
    \mathcal{Q}{:}{=}\{(x,y,z) \in \mathbb{R}^{3} | \,\, 0 \leq x \leq \bar{x},\,0 \leq y \leq \bar{y},\,
 0 \leq z \leq \bar{z}\},
\end{equation}
\textcolor{black}{which contains a feasible space $\mathcal{Q}_f \subseteq \mathcal{Q}$ determined by the obstacles that are known a priori. For example, if none of the obstacles are known, then $\mathcal{Q}_f = \mathcal{Q}$.} Furthermore, let $\mathcal{Q}^g_f$ and $\mathcal{Q}^a_f$ denote the set of feasible points on the ground and in the air, respectively, i.e.,  $\mathcal{Q}^g_f=\{(x,y,z)\in \mathcal{Q}_f |  z=0\}$, and $\mathcal{Q}^a_f=\mathcal{Q}_f \setminus \mathcal{Q}^g_f$. Both \( \mathcal{Q}_f \) and its subset on the ground \( \mathcal{Q}_f^g \) are assumed to be connected, meaning that there exists a feasible path between any pair of points within the respective sets. This is just a mild assumption to ensure the feasibility of the monitoring mission, as otherwise part of the environment would be inaccessible to the vehicles. The function $\ell:\mathcal{Q}_f \times \mathcal{Q}_f \rightarrow \mathbb{R}_{\geq 0}$ is defined as a function that returns the distance (length of the shortest path) between any two points in the feasible space, $\mathcal{Q}_f$. The overall mission goal is for the teams to collectively visit a set of $n$ points in the air, $\mathcal{P}_{\text{UAV}}= \{p_1, \dots, p_n\} \subset \mathcal{Q}_f^a$.\footnote{\color{black}The formulation makes no assumption on how $\mathcal{P_\text{UAV}}$ is generated. The points may arise from arbitrary mission requirements such as imaging regions of interest from specific viewpoints with a desired resolution. } Let there be $m$ teams (UAV–UGV pairs) indexed by $\mu = \{1,\ldots,m\}$, where $m\leq n$. Each team starts from its initial position $p_{\text{o}}^{\mu} \in \mathcal{Q}_f^g$ and must reach to its specified final position $p_{\text{f}}^{\mu} \in \mathcal{Q}_f^g$. Collectively, the teams must ensure that all points in $\mathcal{P}_{\text{UAV}}$ are visited without violating UAVs' energy constraints, while minimizing completion time.

During this mission, the UAVs should never be expected to continuously fly longer than a specified duration due to their energy limitation. \textcolor{black}{For UAVs where the majority of the mission is flown at approximately constant average speed, power consumption is approximately constant \cite{abdilla2015power, stolaroff2018energy}, so energy consumed is proportional to flight time. Under this assumption, maximum flight time directly represents the UAV's energy budget i.e., the flight time at which the battery is exhausted. This endurance-based modeling is standard in the UAV-UGV planning literature \cite{ropero2019terra,yu2018algorithms,maini2019coverage}.}

In contrast, UGVs have substantially larger onboard batteries and are assumed to have sufficient energy for the mission. To satisfy the UAVs' energy requirements, we focus on an approach where the UGVs transport the UAVs through a sequence of points, which are called the \emph{release} and \emph{collect points}. At each release point, the UAV takes-off and visits a subset of $\mathcal{P}_{\text{UAV}}$, then lands on the UGV at the corresponding collect point to be recharged and transported to the next release point (or the respective final position if no more points from $\mathcal{P}_{\text{UAV}}$ will be visited by the team). Clearly, a necessary condition for the optimality of such a solution is that the UAV samples at least one point from $\mathcal{P}_{\text{UAV}}$ in each tour. Furthermore, each tour can be defined by at most $n+2$ points (a release and a collect point, and at most $n$ points from $\mathcal{P}_{\text{UAV}}$). Accordingly, the mission plan of each team $\mu \in \{1,2,\hdots, m\}$ is encoded as a matrix:
\begin{equation*}
    {X}^{\mu} = \begin{bmatrix}
        {X}^{\mu}_{1,1} & \cdots & {X}^{\mu}_{1,n+2} \\
        \vdots & \ddots & \vdots \\
        {X}^{\mu}_{n,1} & \cdots & {X}^{\mu}_{n,n+2}
    \end{bmatrix},
\end{equation*}
where each \({X}^{\mu}_{i,j} \in \mathcal{Q}_f\) denotes a release point if $j= 1$, a collect point if $j= n+2$,  or a point to be visited by the UAV if $ j \in \{2, \hdots ,n+1\}$. Each row $i$ of ${X}^{\mu}$ (denoted by ${X}^{\mu}_{i}$) is used to encode a tour, i.e., the UAV traversing a shortest path along the following sequence of waypoints in that row and meeting with the UGV at ${X}^{\mu}_{i,n+2}$. Using this, any plan with fewer than $n$ tours can be expressed by having ${X}^{\mu}_{i,1}{=}{X}^{\mu}_{i,2}{=}\hdots{=}{X}^{\mu}_{i,n{+}2}$ in certain rows of ${X}^{\mu}$ (indicating a ``trivial tour" of zero length).
After each tour $i$ is completed, the UGV transports the UAV from the collect point ${X}^{\mu}_{i,n+2}$ to the next release point ${X}^{\mu}_{i+1,1}$ (or the specified final position $p^{\mu}_{\text{f}}$ if $i=n$) while recharging it. For each team $\mu$, the total time to execute the respective plan ${X}^{\mu}$ is 

\begin{equation}
\begin{aligned}
\tau({X}^{\mu}) ={} & \tau_g(p^{\mu}_{\text{o}}, {X}^{\mu}_{1,1}) + \tau_g({X}^{\mu}_{n,n+2}, p^{\mu}_{\text{f}}) \\
& {}+ \sum_{i=1}^{n} \max\!\big(\tau_a({X}^{\mu}_{i}), \tau_g({X}^{\mu}_{i,1}, {X}^{\mu}_{i,n+2})\big) \\
& {}+ \sum_{i=1}^{n-1} \max\!\big(\tau_g({X}^{\mu}_{i,n+2}, {X}^{\mu}_{i+1,1}), \tau_c({X}^{\mu}_{i})\big)\,,
\end{aligned}
\label{eq:objective_function}
\end{equation}
\color{black}
where $\tau_a:\mathcal{Q}_f^{n+2} \rightarrow \mathbb{R}_{\geq 0}$ and $\tau_g:\mathcal{Q}_f \times \mathcal{Q}_f \rightarrow \mathbb{R}_{\geq 0}$ are functions denoting the time to traverse the corresponding paths by the UAV and the UGV \textcolor{black}{within the feasible space $\mathcal{Q}_f$}, respectively. By definition, $\tau_a$ and $\tau_g$ equal zero for zero-length paths.\footnote{We do not provide any specific functions $\tau_a$ and $\tau_g$ in the formulation to keep it generic. Our approach can accommodate arbitrary choices of $\tau_a$ and $\tau_g$. One simple example is setting the travel time equal to the length of respective shortest path divided by a constant speed.}  Each $\tau_a({X}^{\mu}_{i})$ denotes the time required for the  $\mu^{\text{th}}$ UAV to complete the $i^{\text{th}}$ tour, starting from its release point ${X}^{\mu}_{i,1}$, traversing the following sequence of points in the tour, ${X}^{\mu}_{i,2}, \hdots, {X}^{\mu}_{i,n+1}$, and meeting with the $\mu^{\text{th}}$ UGV  at the collect point ${X}^{\mu}_{i,n+2}$. The term $\tau_g(p^{\mu}_{\text{o}}, {X}^{\mu}_{1,1})$ represents the time for the $\mu^{\text{th}}$ UGV to travel from its start position $p^{\mu}_{\text{o}}$ to the first release point, whereas $\tau_g({X}^{\mu}_{n,n+2}, p^{\mu}_{\text{f}})$ denotes the time for the $\mu^{\text{th}}$ UGV to travel from the final collect point to $p^{\mu}_{\text{f}}$. 
Similarly, $\tau_g({X}^{\mu}_{i,n+2}, {X}^{\mu}_{i+1,1})$ captures the time spent by the UGV while moving between the collect and release points of consecutive tours $i$ and $i+1$, during which the UAV is recharged for the next flight. The third term of the summation in \eqref{eq:objective_function} is the total time spent by team $\mu$ to complete the specified tours,
and the last term is the total time spent between the end and start of consecutive tours of team $\mu$. Here, for each tour $i$, $\tau_{\text{g}}({X}^{\mu}_{i,1}, {X}^{\mu}_{i,n+2})\geq0$ is the time for UGV to go from \({X}^{\mu}_{i,1}\) to \({X}^{\mu}_{i,1n+2}\) over a shortest path. Similarly,  $\tau_{\text{g}}({X}^{\mu}_{i,n+2}, {X}^{\mu}_{i+1,1})\geq0$ is the time for the $\mu^{\text{th}}$ UGV to transport the $\mu^{\text{th}}$ UAV from \({X}^{\mu}_{i,n+2}\) to \({X}^{\mu}_{i+1,1}\) over a shortest path. The time to fully recharge\footnote{As an alternative to recharging, a battery-swapping approach can also be modeled like this by setting $\tau_{\text{c}}({X}^{\mu}_{i})$ equal to a small constant.} the UAV after it completes the tour $i$ is denoted by $\tau_{\text{c}}({X}^{\mu}_{i})\geq0$. Since UAVs start the mission fully charged, this ensures that they are fully charged before each tour. 
We define our robust planning problem as minimizing the maximum mission time among $m$ teams under the mission constraints, i.e.,

\color{black}
\begin{mini!}|s|[2] 
    {\{X^{\mu}\}_{\mu{=}1}^{m}}                        
    {\displaystyle \max_{\mu = {\{1,\dots,m}\}} \, \tau(X^{\mu}) \label{eq:eq1}}   
    {\label{opt_problem}} 
    {}
    \addConstraint{\forall p_k \in \mathcal{P}_{\text{UAV}},\, \exists\, X^{\mu}_{i,j} = p_k}{ \label{eq:con1}}    
    \addConstraint{\tau_{\text{a}}(X^{\mu}_{i}) + \delta_{a} \leq \overline{\tau_{a}},}{\forall i, \forall \mu \label{eq:con2}}
    \addConstraint{\tau_{g}(X^{\mu}_{i,1}, X^{\mu}_{i,n+2}) + \delta_{g} \leq \overline{\tau_{a}},}{\forall i, \forall \mu \label{eq:con3}}
    \addConstraint{X^{\mu}_{i,1},\, X^{\mu}_{i,n+2} \in \mathcal{Q}_f^g,}{\forall i, \forall \mu \label{eq:con4}}
    \addConstraint{X^{\mu}_{i,j} \in \mathcal{P}_{\text{UAV}} \cup \{X^{\mu}_{i,j-1}\},}{\forall j \in \{2, {\hdots}, n{+}1\}, \forall i, \forall \mu, \label{eq:con5}}
\end{mini!}
\color{black}
where $\tau(X^{\mu})$ is the total time for team $\mu$ to complete its plan as given in \eqref{eq:objective_function}, and the constraints are as follows: First, \eqref{eq:con1} ensures that all points in \(\mathcal{P}_{\text{UAV}}\) are visited by at least one team. Second,  \eqref{eq:con2} implies that none of the tours require the respective UAV to fly longer than $\overline{\tau_{\text{a}}}$, even if the time to complete the tour increases by some specified $\delta_a \geq 0$ during the mission due to disturbances. In any tour, if the UAV arrives at the collect location earlier than the UGV, it needs to hover until the UGV's arrival.
Accordingly, \eqref{eq:con3} ensures that, for any team, the UGV arrives at the collect location before the UAV runs out of energy, even if the ground travel time increases by $\delta_g \ge 0,$ playing a similar role for the ground leg as $\delta_a$ does for the aerial leg.   Here, the choice of robustness parameters, $(\delta_a,\delta_g)$\footnote{\color{black}The parameters define interval uncertainty sets for possible increase in the UAV and UGV travel times, $[0,\delta_a]$ and $[0,\delta_g]$, respectively. Constraints \eqref{eq:con2}, \eqref{eq:con3} enforce mission feasibility for all realizations of uncertainty from these sets. This formulation thus follows the deterministic worst-case robust optimization paradigm under interval uncertainty~\cite{ben2009robust, bertsimas2011theory}.}, determines how much increase in travel times due to disturbances (e.g., uncertain terrain/wind, avoiding collisions with unknown obstacles or other robots\footnote{\textcolor{black}{We do not introduce any inter-robot collision avoidance constraint in \eqref{opt_problem} since it can be handled during execution with negligible increase in travel times for long-horizon missions in large environments.}}) can be tolerated during the mission execution without needing a major revision (changing the tours taken by the UAV) to the resulting plan. 
Fourth, for any team, \eqref{eq:con4} ensures that all the release and collect points are feasible points on the ground. Finally, \eqref{eq:con5} ensures that, in any row $i$ of ${X}^{\mu}$, each entry other than the first and the last ones is either a point from $\mathcal{P}_{\text{UAV}}$ or just a repetition of the previous entry. Accordingly, either all those points are equal to the respective release point ${X}^{\mu}_{i,1}$ (e.g., a trivial tour) or there exists some ${X}^{\mu}_{i,j} \neq {X}^{\mu}_{i,1}$ (i.e., the UAV departs from $X^{\mu}_{i,1}$) such that all the following entries in that row until the collect point are from $\mathcal{P}_{\text{UAV}}$, i.e., ${X}^{\mu}_{i,j'} \in \mathcal{P}_{\text{UAV}}$ for all $j' \in \{j, \hdots, n+1\}$.  Note that \eqref{eq:con5} does  not impose any restriction on how many points from $\mathcal{P}_{\text{UAV}}$ can be included in each tour since the respective entries, ${X}^{\mu}_{i,j}$, are not necessarily distinct.
The overall mission plan is denoted as 
$X = \{ X^{1}, \hdots, X^{m}  \}$,
representing the collection of all feasible team plans, where team $\mu$'s plan is denoted by $X^{\mu}$.



\begin{definition} (Tour Robustness) For any feasible mission plan $X = \{ X^{1}, \hdots, X^{m} \}$, the robustness of the $i^{\text{th}}$ tour of the $\mu^{\text{th}}$ team is given by the maximum increases in the travel times of the UAV/UGV in that tour that do not violate UAVs' energy limitations encoded in \eqref{eq:con2},\eqref{eq:con3}:
\begin{equation*}
\hat{\delta}^{\mu, i}_a(X)
=  \overline{\tau_{a}} - \tau_a(X^{\mu}_{i}), 
\end{equation*}
\begin{equation*}
\hat{\delta}^{\mu,i}_g(X)
= \overline{\tau_{a}} - \tau_g(X^{\mu}_{i,1}, X^{\mu}_{i,n+2}).
\end{equation*}
\label{def1}
\end{definition}
For any feasible~$X$, since (\ref{eq:con2}) and (\ref{eq:con3}) are satisfied for every team $\mu$ and every tour $i$, $\delta_a$ and $\delta_g$ in~\eqref{opt_problem} impose lower bounds on the robustness of every tour, i.e., $\delta^{\mu,i}_a(X) \geq \delta_a$ and 
$\delta^{\mu,i}_g(X) \ge \delta_g$ for every $i$ and $\mu$.  Accordingly, the robustness parameters $\delta_a$ and $\delta_g$ in \eqref{opt_problem} can be tuned to achieve a desired trade-off between robustness and efficiency. Selecting larger values of $\delta_a$ and $\delta_g$ leads to a plan $X$ with more robust tours, but it also reduces the feasible space of the planning problem, potentially leading to solutions with higher cost mission times. \textcolor{black}{For example, if many/large unknown ground obstacles are anticipated, a larger $\delta_g$ can be selected, which effectively forces the release and collect points to be closer together for caution. Analogously, it would be desirable to select $\delta_a$ as the maximum possible increase in the UAVs' travel times during tours, accounting for both potential changes in the release/collect points due to obstacles and in-flight disturbances (obstacles, wind, etc.).}


\section{Proposed Solution}
\label{sec:proposed_solution}
The robust planning problem given in \eqref{opt_problem} is a mixed-integer program (MIP), which is NP-Hard. Hence, exact solution methods (e.g., Branch \& Cut \cite{mitchell2002branch}) are computationally intractable with large $|\mathcal{P}_{\text{UAV}}|$. We propose a scalable and efficient heuristic to overcome this challenge. In this section, we present our approach and its \textcolor{black}{formal guarantees.\footnote{\color{black} The formal guarantees provided here concern feasibility and worst-case time complexity (Theorem \ref{th-1}) and robustness (Theorem \ref{th2}, Corollary \ref{cor}). No approximation ratio or bound on solution quality relative to the optimum is established. Solution quality is assessed empirically in Section \ref{sec:simulations}.}} 
\subsection{Proposed Robust Planning Algorithm}
\label{subsec:proposed_algorithm}
Our proposed  algorithm, Alg.\ref{alg:integrated_path_planning}, uses the fact that \eqref{opt_problem} share structural similarity with multiple Traveling Salesperson Problem (mTSP). The mTSP is a generalization of TSP, where multiple agents collectively visit all nodes while minimizing overall cost. In our setting, there are additional constraints that stem from the limitations of UAV-UGV teams. Our approach consists of 2 steps, as illustrated in Fig. \ref{fig:alg1}. 

\begin{figure*}[htpb]
  \centering
  \includegraphics[width=1.0\linewidth]{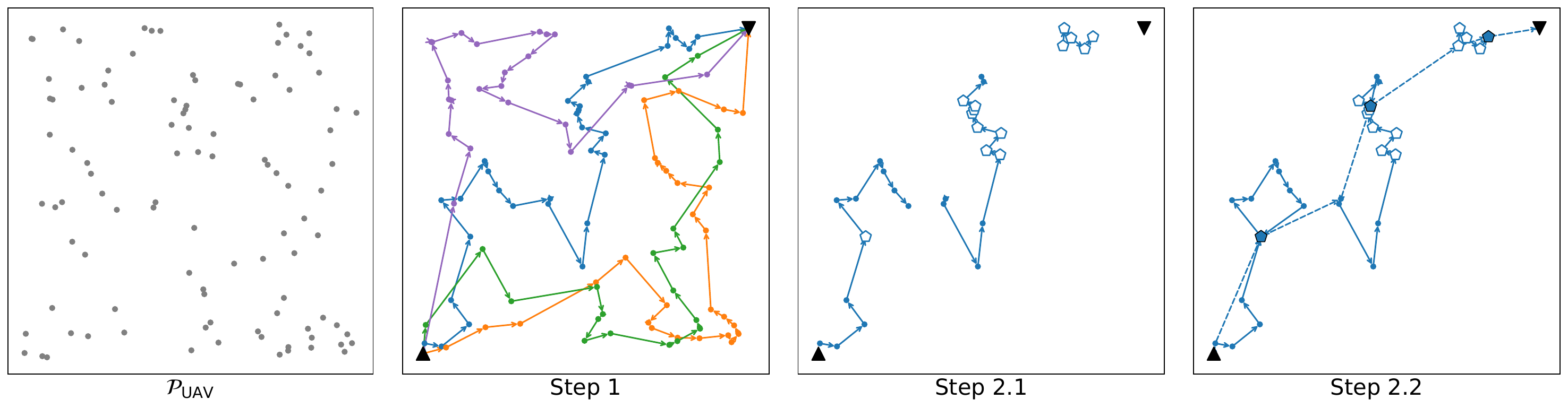}
    \caption{\color{black}Output of Alg.~\ref{alg:integrated_path_planning} for $m=4$ teams 
(Top View). Left: $\mathcal{P}_{\text{UAV}}$ (gray points). In Step~1, monitoring points are partitioned among the $m$ teams and a visit sequence is constructed for each team; the resulting sequences are shown as color-coded polylines, with one color per team. Upward and downward triangles show $p_{\text{o}}^{\mu}$ and $p_{\text{f}}^{\mu}$; in this example, all teams share the same 
start and end positions (both shown in black). Steps 2.1-2.2 show the detailed planning for one of the teams (blue, $\mu=1$). In Step~2.1, blue lines are feasible tours for the UAV-UGV team, 
and hollow pentagons are feasible collect points for each tour. In Step~2.2, the plan is obtained: filled pentagons represent the selected collect point for each tour, and dashed blue lines are the UGV path.}
    \label{fig:alg1}    
\end{figure*}

\color{black}
In Step~1, we partition $\mathcal{P}_{\text{UAV}}$ among the $m$ teams while simultaneously constructing an efficient visit sequence for each team. The algorithm proceeds iteratively: at each iteration, an unassigned monitoring point $p$ is selected uniformly at random from the set of remaining points $\mathcal{P}_{\text{rem}}$ (line~10). For each team $\mu$, a candidate TSP path $\mathcal{T}_{\mu}$ is computed over $\mathcal{P}_{\text{UAV}}^{\mu} \cup \{p\}$, where $\mathcal{P}_{\text{UAV}}^{\mu}$ denotes the set of points assigned to team $\mu$ so far, with $p_o^{\mu}$ and $p_f^{\mu}$ as fixed start and end points. This is achieved by using a Nearest-Neighbor TSP heuristic (NN-TSP) \cite{rosenkrantz1977analysis} and the resulting path cost $D'_{\mu}$ is recorded (line~12). The point $p$ is then assigned to the team $\mu^{\star}$ whose addition minimizes the resulting maximum path cost across all teams (line~13):
\begin{equation}
\mu^{\star} \gets \mathop{\mathrm{arg\,min}}_{\mu \in \{1,\dots,m\}} 
\max\big(\{D_{\nu} \mid \nu \neq \mu\} \cup \{D'_{\mu}\}\big),
\label{eq:minmax_assignment}
\end{equation}
where $D_{\nu}$ denotes the current TSP path cost of team $\nu$ based on its assigned points $\mathcal{P}_{\text{UAV}}^{\nu}$.  After assignment, $p$ is added to $\mathcal{P}_{\text{UAV}}^{\mu^{\star}}$, the team's sequence $\mathcal{S}^{\mu^{\star}}$ is updated to the new path (with the $p_o^{\mu^{\star}}$ and $p_f^{\mu^{\star}}$ removed, so that $\mathcal{S}^{\mu^{\star}}$ only contains the air points $\mathcal{P}_{\text{UAV}}^{\mu^{\star}}$), and $D_{\mu^{\star}}$ is updated (line~14). This process repeats until all points are assigned. The output of Step~1 is the collection of sequences $\{\mathcal{S}^{\mu}\}_{\mu=1}^{m}$, where each $\mathcal{S}^{\mu}$ is a TSP-ordered permutation of $\mathcal{P}_{\text{UAV}}^{\mu}$. For the special case $m=1$, this procedure is replaced by a single TSP instance over $\mathcal{P}_{\text{UAV}}$ as given in line 4.

\color{black}
In Step~2, we construct a feasible plan for each team $\mu$ based on its assigned sequence $\mathcal{S}^{\mu}$ from Step~1. The algorithm iterates over all teams $\mu \in \{1,\ldots,m\}$ and, for each team, constructs $X^{\mu}$. This process has 2 sub-steps:

In Step~2.1, we obtain a feasible plan $X^{\mu}$ for team $\mu$, (the collection of all such plans, $X = \{X^{1}, \hdots, X^{m}\}$, constitutes a feasible solution to \eqref{opt_problem}) by breaking the sequence $\mathcal{S}^{\mu}$ from Step~1 into maximal tours that are feasible both for the $\mu^{\text{th}}$ energy-limited UAV and the $\mu^{\text{th}}$ UGV. Starting with $i=1$, each row $i$ of $X^{\mu}$ is iteratively built. If all the points in $\mathcal{S}^{\mu}$ have already been added to $X^{\mu}$, no further tours are needed, and $X^{\mu}_{i,1}$ is set to $X^{\mu}_{i-1,1}$ (lines~23-24). Otherwise, $X^{\mu}_{i,1}$ is set to $\pi(\mathcal{S}_k^{\mu}) = \mathop{\mathrm{arg\,min}}_{p \in \mathcal{Q}_f^g} \ell(p, \mathcal{S}_k^{\mu})$, the closest feasible point on the ground to $\mathcal{S}_k^{\mu}$, i.e., the next point in $\mathcal{S}^{\mu}$ (lines~25-26). The collect point of the current tour is set as the release point, serving as a valid default structure (line~27). For the remaining entries in the row, i.e., each $X^{\mu}_{i,j}$ for $n_{\mu}+1 \geq j \geq 2$, if all the points in $\mathcal{S}^{\mu}$ have already been visited ($k = n_{\mu}+1$), then $X_{i,j} = X_{i,j-1}$, which implies no motion (line~39). Otherwise (lines~29-37), $\mathcal{S}_k^{\mu}$ is added to the current tour, and a set of feasible collect points, $C$, is created. Specifically, for each candidate collect point $\pi(X^{\mu}_{i,r})$ ($2 \leq r \leq j$), we check:
\begin{enumerate}
    \item Whether the UAV can complete its tour and land at $\pi(X^{\mu}_{i,r})$ before violating its battery constraint, \eqref{eq:con2}.
    \item Whether the UGV can meet the UAV at $\pi(X^{\mu}_{i,r})$ before the UAV's battery is depleted, \eqref{eq:con3}.
\end{enumerate}
Note that $\pi(X^{\mu}_{i,2}) = X^{\mu}_{i,1}$. Therefore, all the points in the current tour are checked as possible collect points. If both conditions are satisfied, $\pi(X^{\mu}_{i,r})$ is added to $C$. If $C$ is non-empty, there is at least one feasible collect point. In that case, $C$ is stored as $C^{\mu}_i$, the set of feasible collect points for the $i^{\text{th}}$ tour. Otherwise, $\mathcal{S}_k^{\mu}$ is removed from the tour and the previous point is copied.

In Step~2.2, we select the best collect point for each tour by solving a Generalized Traveling Salesman Problem (GTSP). The objective is to minimize team $\mu$'s plan execution time, $\tau(X^{\mu})$. GTSP extends TSP by enforcing the agent to visit exactly one node from each cluster. Here, the clusters are the feasible collect points $C^{\mu}_1, \hdots, C^{\mu}_{n_{\mu}}$ obtained in Step~2.1. All other nodes (the release points, $p^{\mu}_o$ and $p^{\mu}_f$) are treated as clusters with a single node and therefore always chosen. The graph is created as follows (lines~41-49):
\begin{itemize}
    \item Add nodes for $p^{\mu}_o$, $p^{\mu}_f$, all release points, and all feasible collect points for each tour.
    \item Connect $p^{\mu}_o$ to the first release point $X^{\mu}_{1,1}$ with an edge weighted by the UGV travel time between them.
    \item Connect the release point of each tour to each of its feasible collect points, with the edge weight being the maximum of the UAV's tour time and UGV travel time from the release to the collect point.
    \item Connect each feasible collect point to the release point of the next tour (or $p^{\mu}_f$ in the case of the last tour), with an edge weight equal to the maximum of the UGV's travel time and the UAV's recharge time after the previous tour.
\end{itemize}
With this structure, the resulting GTSP solution has the form $(p^{\mu}_o, X^{\mu}_{1,1}, c^{\mu}_1, \hdots, X^{\mu}_{n,1}, c^{\mu}_{n_{\mu}}, p^{\mu}_f)$, where $c^{\mu}_i$ is the selected collect point for the $i^{\text{th}}$ tour from $C^{\mu}_i$. In line~50, GTSP is solved using the GLNS heuristic \cite{smith2016GLNS} with a bounded runtime of $\sigma n_{\mu}^3$, where $\sigma$ is a user-defined constant and $n_{\mu} = |\mathcal{P}^{\mu}_{\text{UAV}}|$. Note that the start and end nodes are fixed. This procedure returns the optimized collect points $(c^{\mu}_1, \dots, c^{\mu}_{n_{\mu}})$. Finally, $X^{\mu}$ is completed by setting $X^{\mu}_{i,n_{\mu}+2}$ to $c^{\mu}_i$ for all $1 \leq i \leq n_{\mu}$ (line~52). After $X^{\mu}$ is completed, it is added to the overall mission plan $X$ (line~53).
\begin{algorithm}
\renewcommand{\AlCapSty}[1]{\normalfont\footnotesize{\textbf{#1}}\unskip}\footnotesize
\SetAlgoLined
\SetAlgoNoEnd
\DontPrintSemicolon

\SetKwInOut{Input}{Input}
\SetKwInOut{Output}{Output}

\Input{
    $\mathcal{P}_{\text{UAV}}$ (Set of $n$ points to visit),
    $m$ (Number of teams),
    $\{p_o^{\mu}\}_{\mu=1}^m$, $\{p_f^{\mu}\}_{\mu=1}^m$ (Team Start/Finish positions),
    $\ell$ (Distance function),
    $\mathcal{Q}_f^g$ (Feasible Ground Space),
    $\mathcal{Q}_f^a$ (Feasible Air Space),
    $\tau_a$, $\tau_g$ (UAV, UGV travel time functions),
    $\overline{\tau_{\text{a}}}$ (Maximum flight time),
    $\delta_a, \delta_g$ (Robustness parameters),
    $\sigma$ (GLNS time limit constant)
}
\Output{
    $X = \{X^{\mu}\}_{\mu=1}^{m}$ (Mission Plan)
}
\SetKw{continue}{continue}

\caption{RSPECT, Robust Multi UAV-UGV Planning Algorithm}
\label{alg:integrated_path_planning}

\textbf{Step 1: Assign points to teams and construct visit sequences}\;
\textbf{Initialize:} $\{\mathcal{S}^{\mu}\}_{\mu=1}^{m} \gets \emptyset$;\ \ 
$\{\mathcal{P}_{\text{UAV}}^{\mu}\}_{\mu=1}^{m} \gets \emptyset$\;
\uIf{$m = 1$}{
    $\mathcal{P}_{\text{UAV}}^{1} \gets \mathcal{P}_{\text{UAV}}$;\ \
    $\mathcal{S}^{1} \gets \mathrm{NN-TSP}(p_o^{1}, \mathcal{P}_{\text{UAV}}, p_f^{1})$~\cite{rosenkrantz1977analysis}\footnotemark[7]\;
}
\Else{
    $\mathcal{P}_{\text{rem}} \gets \mathcal{P}_{\text{UAV}}$\;
    \For{$\mu \gets 1$ \KwTo $m$}{
        $D_{\mu} \gets 0$\;
    }
    \While{$\mathcal{P}_{\text{rem}} \neq \emptyset$}{
        $p \gets$ $\mathrm{UniformSample}$\footnotemark[7] from $\mathcal{P}_{\text{rem}}$\;
        \For{$\mu \gets 1$ \KwTo $m$}{
            $\mathcal{T}_{\mu} \gets \mathrm{NN-TSP}(p_o^{\mu}, \mathcal{P}_{\text{UAV}}^{\mu} \cup \{p\}, p_f^{\mu})$\cite{rosenkrantz1977analysis}\footnotemark[6];\ \ 
            $D'_{\mu} \gets \mathrm{cost}(\mathcal{T}_{\mu})$\;
        }
        $\mu^{\star} \gets \mathop{\mathrm{arg\,min}}\limits_{\mu \in \{1,\ldots,m\}} \max\big(\{D_{\nu} \mid \nu \neq \mu\} \cup \{D'_{\mu}\}\big)$\;
        $\mathcal{P}_{\text{UAV}}^{\mu^{\star}} \gets \mathcal{P}_{\text{UAV}}^{\mu^{\star}} \cup \{p\}$;\ \
        $\mathcal{S}^{\mu^{\star}} \gets \mathcal{T}_{\mu^{\star}}$;\ \
        $D_{\mu^{\star}} \gets D'_{\mu^{\star}}$;\ \
        $\mathcal{P}_{\text{rem}} \gets \mathcal{P}_{\text{rem}} \setminus \{p\}$\;
    }
}
\textbf{Output of Step 1:} $\{\mathcal{S}^{\mu}\}_{\mu=1}^{m}$, $\{\mathcal{P}_{\text{UAV}}^{\mu}\}_{\mu=1}^{m}$\;

\textbf{Step 2: Generate feasible plans for each team}\;
\textbf{Initialize:} $X \leftarrow \varnothing$\;
\For{$\mu \leftarrow 1$ \KwTo $m$}{
$n_{\mu} \leftarrow |\mathcal{P}_{\text{UAV}}^{\mu}|$\;

\textbf{Step 2.1: Create feasible tours from the sequence $\mathcal{S}^{\mu}$}

\textbf{Initialize:} $X^{\mu} \leftarrow \emptyset$, $k \leftarrow 1$ \;
\For{$i \leftarrow 1$ \KwTo $n_{\mu}$}{
    \uIf{$k = n_{\mu}+1$}{
        $X_{i,1}^{\mu} \leftarrow X_{i-1,1}^{\mu}$
    }
    \Else{ $X_{i,1}^{\mu} \leftarrow \pi(\mathcal{S}_k^{\mu}) {=} \mathop{\mathrm{arg\,min}}\limits_{p \in \mathcal{Q}_f^g} \ell(p, \mathcal{S}_k^{\mu})$\;
    }
    $X_{i,n_{\mu}+2}^{\mu} \leftarrow X_{i,1}^{\mu}$

    \For{$j \leftarrow 2$ \KwTo $n_{\mu}+1$}{
        \uIf{\(k \leq n_{\mu}\)}{
                $X_{i,j}^{\mu} \leftarrow \mathcal{S}_k^{\mu}$;\ \
                \(C \leftarrow \{\}\)\;
            \For{$r \leftarrow 2$ \KwTo $j$}{
                \uIf{$\tau_{a}(X^{\mu}_{i}) {+} \delta_a {\leq} \overline{\tau_{a}}$ \textbf{and} \(\tau_g(X^{\mu}_{i,1}, \pi(X^{\mu}_{i,r})){+} \delta_g {\leq} \overline{\tau_{a}} \)}{
                    \(C \leftarrow C \cup \{\pi(X^{\mu}_{i,r})\}\)
                }
            }
            \uIf{\(C \neq \emptyset\)}{
                \(C^{\mu}_i \leftarrow C\);\ \
                $k \leftarrow k + 1$\;
            }\Else{
                $X^{\mu}_{i,j} \leftarrow X^{\mu}_{i,j-1}$\;
            }
        }\Else{
        $X^{\mu}_{i,j} \leftarrow X^{\mu}_{i,j-1}$\;
        }
}
}
\textbf{Step 2.2: Create a feasible plan \(X^{\mu}\) using GTSP}

\(V^{\mu} \leftarrow \bigcup_i (C^{\mu}_i \cup \{X_{i,1}^{\mu}\}) \cup \{p_o^{\mu}, p_f^{\mu}\}\);\ \
\(G^{\mu} \leftarrow \mathrm{Graph}(V^{\mu})\)\;
$G^{\mu}.\mathrm{add\_edge}(p_o^{\mu},X_{1,1}^{\mu}, \tau_g(p_o^{\mu}, X_{1,1}^{\mu}))$\;
\For{$i \leftarrow 1$ \KwTo $n_{\mu}$}{
\ForEach{$c \in C^{\mu}_i$}{
    $X_{i,n_{\mu}+2}^{\mu} \leftarrow c$;\ \
    $G^{\mu}.\mathrm{add\_edge}(X^{\mu}_{i,1},c,\max[ \tau_g(X^{\mu}_{i,1},c), \tau_a(X^{\mu}_{i})])$\;
    \uIf{$i=n_{\mu}$}{
        $G^{\mu}.\mathrm{add\_edge}(c,p^{\mu}_f,\max[\tau_g(c,p^{\mu}_f), \tau_c(X^{\mu}_{i})])$
    }\Else{
        $G^{\mu}.\mathrm{add\_edge}(c,X^{\mu}_{i{+}1,1},\max[ \tau_g(c,X^{\mu}_{i{+}1,1}), \tau_c(X^{\mu}_{i})])$\;
       }
}
}
$(c^{\mu}_1, \dots, c^{\mu}_{n_{\mu}}) \leftarrow \mathrm{GLNS}(G^{\mu},p^{\mu}_o,p^{\mu}_f, \{C^{\mu}_1, {\dots}, C^{\mu}_{n_{\mu}}\}, \sigma)$ \cite{smith2016GLNS}\footnotemark[7]\;
\For{$i \leftarrow 1$ \KwTo $n_{\mu}$}{
    \(X^{\mu}_{i,n_{\mu}+2} = c^{\mu}_i\)\;}
    $X \leftarrow X \cup \{X^{\mu}\}$
}

\Return{$X$}
\end{algorithm}
\footnotetext[7]{{Other TSP/GTSP solvers and \textcolor{black}{selection methods} can be used in these steps. However, this might change the worst-case time complexity of Alg.\ref{alg:integrated_path_planning} given in Theorem \ref{th-1}.}} \setcounter{footnote}{7}
\begin{theorem}
\label{th-1}
     Alg.\ref{alg:integrated_path_planning} returns a feasible solution to \eqref{opt_problem} if one exists, and it has the worst-case time complexity of $\mathcal{O}(mn+n^3)$, where $m$ is the number of UAV-UGV teams and $n=|\mathcal{P}_{\text{UAV}}|$ is the number of monitoring points.\footnote{In practice, the number of monitoring points ($n$) is typically much larger than the number of UAV-UGV teams ($m$). Hence, the worst-case complexity of Alg.\ref{alg:integrated_path_planning}, $\mathcal{O}(mn+n^3)$, is usually dominated by the growth of $n$ as $\mathcal{O}(n^3)$.}
\end{theorem}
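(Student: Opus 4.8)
The plan is to establish the two assertions in turn: feasibility whenever \eqref{opt_problem} is solvable, and then the runtime bound. For feasibility I would first dispatch the constraints that hold by construction. Every release and collect point is the image of the ground projection $\pi(\cdot)$ onto $\mathcal{Q}_f^g$, so \eqref{eq:con4} is immediate; every intermediate entry of a row is either a point of $\mathcal{S}^\mu\subseteq\mathcal{P}_{\text{UAV}}$ or a copy of the preceding entry, so \eqref{eq:con5} holds; and a collect candidate is stored in $C^\mu_i$ only after both battery checks in Step~2.2 pass, while Step~2.3 selects exactly one member of each nonempty cluster $C^\mu_i$, so the retained collect point still satisfies \eqref{eq:con2}--\eqref{eq:con3}. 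The only constraint requiring real work is \eqref{eq:con1}: that Step~2.2 never stalls and ultimately places every point of $\mathcal{S}^\mu$.

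The hard part is this no-stall guarantee, which I would isolate as a lemma: if \eqref{opt_problem} is feasible, then for each $p\in\mathcal{P}_{\text{UAV}}$ the \emph{minimal tour} that releases and collects at $\pi(p)$ and visits only $p$ already satisfies \eqref{eq:con2}--\eqref{eq:con3}. For \eqref{eq:con3} this is immediate, since the minimal tour has a zero-length ground leg and any feasible instance forces $\delta_g\le\overline{\tau_a}$ (every tour that visits a point has $\tau_g\ge 0$ and obeys that bound). For \eqref{eq:con2} I would argue that any tour visiting $p$ must travel from the ground to $p$ and back to the ground, so, because $\pi(p)$ is the nearest feasible ground point and $\tau_a$ is nondecreasing in traversed path length, its flight time is at least that of the minimal tour; hence the minimal tour also respects $\tau_a(\cdot)+\delta_a\le\overline{\tau_a}$. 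The payoff is that whenever Step~2.2 opens a new row with $X^\mu_{i,1}=\pi(\mathcal{S}^\mu_k)$, the first candidate examined (at $j{=}2$, $r{=}2$) is precisely this minimal tour, since $\pi(X^\mu_{i,2})=X^\mu_{i,1}$; by the lemma $C\neq\emptyset$, so the index $k$ advances at least once per row. With $n_\mu$ rows available, all $n_\mu$ points are placed, yielding \eqref{eq:con1} and a feasible $X^\mu$; collecting the teams gives a feasible $X$.

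For the complexity I would treat each call to $\ell$, $\tau_a$, $\tau_g$ as $O(1)$, maintaining prefix sums of leg times so that the flight time of any partial tour together with a candidate collect point is read off in constant time---this is what keeps Step~2.2 cubic rather than quartic. Step~1 does $m$ comparisons per point, i.e. $O(mn)$. For a fixed team, Step~2.1 (Christofides) costs $O(n_\mu^3)$; Step~2.2 has three nested loops over $i$, $j$, $r$, each of length $O(n_\mu)$ with $O(1)$ work inside, hence $O(n_\mu^3)$; and Step~2.3 builds a graph with $O(n_\mu)$ nodes and edges and then runs GLNS under the imposed budget $\sigma n_\mu^3$, again $O(n_\mu^3)$. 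Summing the per-team cost and using $\sum_{\mu=1}^m n_\mu=n$ gives $\sum_\mu n_\mu^3\le\big(\sum_\mu n_\mu\big)^3=n^3$, so Step~2 is $O(n^3)$ and the total is $O(mn+n^3)$.

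I expect the feasibility argument, and in particular the no-stall lemma, to be the main obstacle: it is where the choice of $\pi$ as the nearest feasible ground point is essential and where the monotonicity of $\tau_a$ in path length must be invoked. The complexity claim should then be routine bookkeeping, contingent only on the constant-time incremental evaluation of $\tau_a$ that keeps Step~2.2 within $O(n_\mu^3)$.
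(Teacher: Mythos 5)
Your proposal is correct and follows essentially the same route as the paper's proof: feasibility is argued by noting that constraints \eqref{eq:con4}--\eqref{eq:con5} hold by construction, that stored collect points pass the battery checks, and that (when \eqref{opt_problem} is feasible) the single-point tour released and collected at $\pi(p)$ satisfies \eqref{eq:con2}--\eqref{eq:con3} so Step~2.2 never stalls and places all of $\mathcal{S}^{\mu}$ across the $n_{\mu}$ rows; complexity is obtained from per-step $\mathcal{O}({n_{\mu}}^3)$ bounds and the inequality $\sum_{\mu} {n_{\mu}}^3 \le \big(\sum_{\mu} n_{\mu}\big)^3 = n^3$. Your write-up is in fact slightly more careful than the paper's in two spots: you spell out the no-stall argument (which the paper asserts in a single sentence), and you make explicit that constant-time incremental evaluation of $\tau_a$ (e.g., via prefix sums of leg times) is what keeps Step~2.2 at $\mathcal{O}({n_{\mu}}^3)$ rather than $\mathcal{O}({n_{\mu}}^4)$, a point the paper glosses over by declaring all inner-loop operations constant time.
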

\color{black}
\begin{proof} (\textit{Feasibility}:)  
In Step~1, Alg.\ref{alg:integrated_path_planning} partitions $\mathcal{P}_{\text{UAV}}$ among $m$ teams, guaranteeing each monitoring point is assigned to exactly one team. In other words, after Step~1 is completed, $\mathcal{P}_{\text{UAV}} = \bigcup_{\mu=1}^{m} \mathcal{P}^{\mu}_{\text{UAV}}$ and $\mathcal{P}^{\mu}_{\text{UAV}} \cap \mathcal{P}^{\nu}_{\text{UAV}} = \emptyset$ for $\mu \neq \nu$. When $m=1$ (line 4), all points are trivially assigned to the single team. Otherwise, the while loop (lines 9-14) iterates until $\mathcal{P}_{\text{rem}}$ is empty, and at each iteration, exactly one point is removed from $\mathcal{P}_{\text{rem}}$ and added to one team's set $\mathcal{P}^{\mu^{\star}}_{\text{UAV}}$ (line 14). Furthermore, in lines 4 and 12, Alg.\ref{alg:integrated_path_planning} computes a TSP that contains all the points in $\mathcal{P}^{\mu}_{\text{UAV}}$ as the sequence $\mathcal{S}^{\mu}$. In lines 23-27 of Step~2.1, \(X^{\mu}_{i,1}\) is either chosen as a point in \(\mathcal{Q}_f^g\) or as the release point of the previous tour. In either case, \(X^{\mu}_{i,1}\) satisfies (\ref{eq:con4}). \color{black}In Step~2.1, lines 28-39, each $\mathcal{S}^{\mu}_k$ is added to the current tour (row $i$) if there is a collect point for which the resulting tour would not violate (\ref{eq:con2}) and (\ref{eq:con3}). Note that if a feasible solution to \eqref{opt_problem} exists, then each $p\in \mathcal{P}_{\text{UAV}}$ must be reachable from the closest feasible point on the ground without violating the constraint \eqref{eq:con2}. Hence, under Alg.\ref{alg:integrated_path_planning}, for each run of the main \texttt{for} loop in Step~2.1, unless \(k > n_{\mu}\), at least one waypoint \(\mathcal{S}^{\mu}_k\) is added to the plan matrix of the team, \(X^{\mu}\) (as in line 30). Since $X^{\mu}$ has $n_{\mu}=|\mathcal{S^{\mu}}|$ rows, this means that all the waypoints in \(\mathcal{S^{\mu}}\) will be included in $X^{\mu}$ created by Alg.\ref{alg:integrated_path_planning}. Moreover, if there are no remaining points in $\mathcal{S^{\mu}}$, i.e., $k>n_{\mu}$, or adding $\mathcal{S}_k$ to tour $i$ would violate \eqref{eq:con2} or \eqref{eq:con3}, then  \(X^{\mu}_{i,j}\) is assigned to be the same as the previous point \(X^{\mu}_{i,j-1}\) (for \(j \leq n_{\mu}+1\), lines 37 and 39). Hence, \eqref{eq:con5} is also satisfied. In Step~2.2, collect points for each tour are chosen from a set of points which all satisfy \eqref{eq:con3} and \eqref{eq:con4}. Finally, the mission plan $X = \{X^{\mu}\}_{\mu=1}^{m}$ satisfies (\ref{eq:con1}) collectively since $\mathcal{P}_{\text{UAV}} = \bigcup_{\mu=1}^{m} \mathcal{P}^{\mu}_{\text{UAV}}$ and thus constitutes a feasible solution to (\ref{opt_problem}).

(\textit{Complexity}:) 
\color{black}
In Step~1, the while loop (line 9) runs $n$ times, since each iteration removes exactly one point from $\mathcal{P}_{\text{rem}}$. Each iteration performs three operations: (i) selecting a point $p$ from 
$\mathcal{P}_{\text{rem}}$ uniformly at random (line 10), which takes $\mathcal{O}(1)$ time; (ii) computing a candidate TSP path for each of the $m$ teams (lines 11-12); and (iii) determining $\mu^{\star}$ by finding the minimizer among $m$ teams (line 13) and updating the assigned team's data structures (line 14), which takes $\mathcal{O}(m)$ time. For the candidate TSP paths in (ii), we use a Nearest-Neighbor heuristic 
\cite{rosenkrantz1977analysis}, which has a worst-case complexity of $\mathcal{O}(n^2)$ for $n$ points. Accordingly, the computation of candidate TSP paths across $m$ teams takes $\sum_{\mu=1}^{m} \mathcal{O}({n_{\mu}}^2)$ time. Note that $\sum_{\mu=1}^{m} \mathcal{O}({n_{\mu}}^2) \leq \mathcal{O}(n^2)$, since the number of points allocated to teams satisfy $\sum_{\mu=1}^{m} n_{\mu} \leq n$ at any iteration. Hence, each iteration of the while loop costs $\mathcal{O}(n^2 + m)$, and the total runtime of Step~1 is $\mathcal{O}(n \cdot (n^2 + m)) = \mathcal{O}(mn + n^3)$ in the worst-case.\footnote{\color{black}Note that for $m=1$ the assignment is trivial (all points go to the single team) and Step~1 reduces to a single TSP on $n$ points (line 4), with $\mathcal{O}(n^2)$ complexity (using NN-TSP heuristic \cite{rosenkrantz1977analysis}).}

\color{black}
Steps 2.1-2.2 are executed once per team. Time complexity of projection function \(\pi(.)\) (lines 26 and 32) depends on the definition of obstacles that are known a priori. Clearly, the obstacles that are unknown a priori has no impact on this offline computation. 
Assuming that the known obstacles are expressed as convex polytopes, finding the closest point on the boundary of such polytope to an interior point can be formulated as a convex quadratic program for each face, which is solvable in polynomial time in the number of faces and the dimension of the space \cite{QP}. Checking if a point is inside a convex polytope reduces to a linear programming feasibility problem and is therefore polynomial-time solvable. Therefore, if there are \(b\) such obstacles, the total cost of a projection scales as \(\mathcal{O}(b)\), assuming each obstacle has a bounded number of faces. So the total time to find the ground projections of all $n_{\mu}$ monitoring points assigned to team $\mu$ is \(\mathcal{O}(n_{\mu}b)\), which is dominated by the runtime of Step~2.1 (discussed next), \(\mathcal{O}({n_{\mu}}^3)\), assuming that the number of known obstacles $b$ is bounded.

In Step~2.1 of Alg.\ref{alg:integrated_path_planning}, all \texttt{for} loops iterate at most \(n_{\mu}\) times. While this step includes projection operations, their costs have already been accounted for in the overall projection time discussed earlier. All remaining operations within these loops take constant time. Consequently, the time complexity of Step~2.1, excluding projections, is \(\mathcal{O}({n_{\mu}}^3)\).

In Step~2.2 of Alg.\ref{alg:integrated_path_planning}, we use GLNS to solve the GTSP. Although GLNS does not have worst-case polynomial-time complexity, we have restricted its runtime to \(\mathcal{O}({n_{\mu}}^3)\) by enforcing a time limit of \(\sigma {n_{\mu}}^3\), where \(\sigma\) is a user-defined constant. 

Steps 2.1-2.2 are executed once per team. Since the overall runtime of these steps for each team $\mu$ is \(\mathcal{O}({n_{\mu}}^3)\), the total runtime of Step~2 will be $\sum_{\mu=1}^m \mathcal{O}({n_\mu}^3)$. Since $\{\mathcal P_{\mathrm{UAV}}^{\mu}\}_{\mu=1}^m$ is a partition of $\mathcal P_{\mathrm{UAV}}$, we have $\sum_{\mu=1}^m n_\mu = n$ and $n_\mu \geq 0$. For any set of non-negative numbers, the sum of their cubes is less than or equal to cube of their sum. This gives the inequality $\sum_{\mu=1}^m {n_\mu}^3 \leq (\sum_{\mu=1}^m n_\mu)^3$, which simplifies to $\sum_{\mu=1}^m {n_\mu}^3 \leq n^3$. Hence the worst-case complexity of Step~2 is $\mathcal{O}(n^3)$. Combining this with the complexity of Step~1, $\mathcal{O}(mn + n^3)$, we obtain the overall complexity of Alg.\ref{alg:integrated_path_planning} as $\mathcal{O}(mn+n^3)$.
\end{proof}

\color{black}

\subsection{Robustness Analysis}
\label{subsec:robustness}
Among the various sources of uncertainty, a priori unknown obstacles are arguably one of the most common and significant ones. Such unknown obstacles may render certain release/collect points in $X = \{X^{\mu}\}_{\mu=1}^{m}$ inaccessible to UGVs during actual execution and/or increase traverse time by the UAVs/UGVs during each tour. However, if \( X \) has sufficient tour robustness values, i.e., $\hat{\delta}^{\mu,i}_a(X)$ and $ \hat{\delta}^{\mu,i}_g(X)$ are sufficiently large, it may still be possible to follow the plan by only making small modifications in the release/collect points (as opposed to a complete replaning). In this section, we formally analyze the robustness of our method to such unknown obstacles.

Let \( \tilde{\mathcal{Q}}_f \subseteq \mathcal{Q}_f \) be the actual feasible space, i.e., the feasible points remaining after the a priori unknown obstacles are removed from $\mathcal{Q}_f$. Accordingly, let \( \tilde{\mathcal{Q}}_f^g \subseteq \mathcal{Q}_f^g \) be the set of actual feasible points on ground. Same as before, we assume that both \( \tilde{\mathcal{Q}}_f \) and  \( \tilde{\mathcal{Q}}_f^g \) are connected sets. Moreover, $\mathcal{P}_{\text{UAV}} \subseteq \tilde{\mathcal{Q}}_f$. These mild assumptions imply that the mission is still achievable within the actual feasible space. Furthermore, let
$\tilde{\tau}_a:\tilde{\mathcal{Q}}_f^{\,n+2} \rightarrow \mathbb{R}_{\ge 0}$
and 
$\tilde{\tau}_g : \tilde{\mathcal{Q}}_f \times \tilde{\mathcal{Q}}_f \rightarrow \mathbb{R}_{\ge 0}$ be actual traversal-time functions i.e., the times required for the UAVs and UGVs to traverse the corresponding paths within the actual feasible space $\tilde{\mathcal{Q}}_f$.
Finally, let $\tilde{X} = \{\tilde{X}^{\mu}\}_{\mu=1}^{m}$ be any modified overall plan.
In the following theorem, we state sufficient conditions under which the plan generated by Alg.~\ref{alg:integrated_path_planning}, $X = \{X^{\mu}\}_{\mu=1}^{m}$, can be executed in the face of unknown obstacles by simply adjusting the release and collect points as needed.

\begin{theorem}\label{th2}
Let $X = \{X^{\mu}\}_{\mu=1}^{m}$ be a feasible solution to~\eqref{opt_problem} with tour robustness values
$\hat{\delta}^{\mu,i}_a(X),\hat{\delta}^{\mu,i}_g(X)$ (see Def.\ref{def1}), and
let $\tilde{X} = \{\tilde{X}^{\mu}\}_{\mu=1}^{m}$ be a modified overall plan such that 
\begin{itemize}
    \item Each $\tilde{X}^{\mu}$ differs from $X^{\mu}$ only in its release and collect points, i.e., $\tilde{X}^{\mu}_{i,1}, \tilde{X}^{\mu}_{i,n_{\mu}+2} \in \tilde{\mathcal{Q}}^{g}_f$ and ${X}^{\mu}_{i,1}, {X}^{\mu}_{i,n_{\mu}+2} \in {\mathcal{Q}}^{g}_f$.
    \item {The following holds for each $\tilde{X}^{\mu}$ and $\forall i=\{1,\dots,n_{\mu}\}$:
    \begin{align}
    \tilde{\tau}_a(\tilde X^{\mu}_{i}) &\le \tau_a(X^{\mu}_{i}) + \hat{\delta}^{\mu,i}_a(X), \label{eq:condA}\\
    \tilde{\tau}_g(\tilde X^{\mu}_{i,1}, \tilde X^{\mu}_{i,n_{\mu}+2}) &\le \tau_g(X^{\mu}_{i,1}, X^{\mu}_{i,n_{\mu}+2}) + \hat{\delta}^{\mu,i}_g(X), 
    \label{eq:condG}
    \end{align}}
\end{itemize}
where $n_{\mu}=|\mathcal{P}^{\mu}_{\text{}UAV}|$. Then under $\tilde X$, every point in $\mathcal P_{\text{UAV}}$ is visited while respecting the energy limitation of the UAVs, i.e.,
$\tilde{\tau}_a(\tilde{X}^{\mu}_{i}) \le \overline{\tau_{a}}$ and
$\tilde{\tau}_g(\tilde{X}^{\mu}_{i,1}, \tilde X^{\mu}_{i,n_{\mu}+2}) \le \overline{\tau_{a}}$ for all $i=\{{1,{\dots},n_{\mu}\}}$ and $\mu=\{1,{\dots},m\}$.
\end{theorem}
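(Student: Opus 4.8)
The plan is to prove the two conclusions — energy feasibility and full visitation — separately, since each follows from a distinct hypothesis. The energy bounds will come directly from substituting the definition of the tour robustness values (Def.~\ref{def1}) into the assumed inequalities \eqref{eq:condA}--\eqref{eq:condG}, while the visitation claim will follow from the observation that the modified plan $\tilde{X}$ leaves the interior waypoints of every tour untouched.

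First I would establish the aerial energy bound. Fixing an arbitrary team $\mu$ and tour index $i$, I substitute $\hat{\delta}^{\mu,i}_a(X) = \overline{\tau_a} - \tau_a(X^{\mu}_{i})$ into the hypothesis \eqref{eq:condA}; the two $\tau_a(X^{\mu}_{i})$ terms cancel, leaving $\tilde{\tau}_a(\tilde{X}^{\mu}_{i}) \le \overline{\tau_a}$ immediately. The same one-line argument applied to \eqref{eq:condG} with $\hat{\delta}^{\mu,i}_g(X) = \overline{\tau_a} - \tau_g(X^{\mu}_{i,1}, X^{\mu}_{i,n_{\mu}+2})$ yields $\tilde{\tau}_g(\tilde{X}^{\mu}_{i,1}, \tilde{X}^{\mu}_{i,n_{\mu}+2}) \le \overline{\tau_a}$. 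Since $\mu$ and $i$ were arbitrary, both energy bounds hold for all teams and all tours, which is exactly the stated feasibility with respect to \eqref{eq:con2} and \eqref{eq:con3}.

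Next I would argue visitation. By the first hypothesis, $\tilde{X}^{\mu}$ agrees with $X^{\mu}$ in every column except the release column ($j=1$) and the collect column ($j=n_{\mu}+2$); in particular the interior entries satisfy $\tilde{X}^{\mu}_{i,j} = X^{\mu}_{i,j}$ for all $j \in \{2,\dots,n_{\mu}+1\}$ and all $i,\mu$. Because $X$ is feasible for \eqref{opt_problem}, it satisfies \eqref{eq:con1}, so every $p_k \in \mathcal{P}_{\text{UAV}}$ appears as some interior entry $X^{\mu}_{i,j}$. Since these entries are unchanged in $\tilde{X}$, the same appearances persist, and hence $\tilde{X}$ also visits every point of $\mathcal{P}_{\text{UAV}}$.

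I do not expect a genuine obstacle here: both parts reduce to an exact cancellation and a structural observation, so the value of the theorem lies in the clean interpretation of the robustness margins rather than in any technical difficulty. The only point requiring care is to state precisely that modifying the release and collect points does not disturb the tour-membership structure \eqref{eq:con5} — i.e., the set of monitoring points assigned to each tour is invariant under the substitution — so that the visitation argument is airtight and the conditions \eqref{eq:condA}--\eqref{eq:condG} are indeed comparing like-for-like tours.
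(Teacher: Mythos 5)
Your proposal is correct and takes essentially the same route as the paper's own proof: the energy bounds follow by combining \eqref{eq:condA}--\eqref{eq:condG} with Definition~\ref{def1} (the paper phrases the definition as the inequalities $\tau_a(X^\mu_i) + \hat{\delta}^{\mu,i}_a(X) \le \overline{\tau_{a}}$ and $\tau_g(X^\mu_{i,1}, X^\mu_{i,n_\mu+2}) + \hat{\delta}^{\mu,i}_g(X) \le \overline{\tau_{a}}$ and adds them to the hypotheses, which is exactly the cancellation you perform by direct substitution), and the visitation claim is argued identically from the observation that $X$ satisfies \eqref{eq:con1} and that $\tilde{X}$ differs from $X$ only in release and collect points.
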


\begin{proof}
As a feasible solution to \eqref{opt_problem}, $X$ satisfies \eqref{eq:con1}, and UAVs visit all points in $\mathcal{P}_{\text{UAV}}$ under $X$. Since \( X \) and \( \tilde{X} \) only differ in release and collect points, the UAVs visit all points in $\mathcal{P}_{\text{UAV}}$ also under \( \tilde{X} \). By Def. \ref{def1}, for all $i$ and $\mu$, we have:
\begin{align}
   \tau_a(X^\mu_i) + \hat{\delta}^{\mu,i}_a(X) \le \overline{\tau_{a}} \label{eq:def1condA}, \\
   \tau_g(X^\mu_{i,1}, X^\mu_{i,n_\mu+2}) + \hat{\delta}^{\mu,i}_g(X) \le \overline{\tau_{a}}. \label{eq:def1condG}
\end{align}
Using (\ref{eq:condA}) and (\ref{eq:def1condA}) gives
$\tilde{\tau}_a(\tilde{X}^{\mu}_{i}) \le \overline{\tau_{a}}$.
Similarly, using (\ref{eq:condG}) and (\ref{eq:def1condG}) gives
$\tilde{\tau}_g(\tilde{X}^{\mu}_{i,1}, \tilde{X}^{\mu}_{i,n_{\mu}+2}) \le \overline{\tau_{a}}$.
Consequently, under \( \tilde{X} \), all points in $\mathcal{P}_{\text{UAV}}$ are visited while respecting the energy limits of the UAVs.
\end{proof}
\color{black}


\subsection{Special Case: VTOL UAV, Unknown Ground Obstacles}
\label{sec:vtol}
In light of Theorem \ref{th2}, when obstacles on the ground are the primary source of uncertainty, an offline overall plan  $X = \{X^{\mu}\}_{\mu=1}^{m}$ with sufficiently robust tours allows the UAV-UGV teams to complete the mission by only adjusting their own release/collect points if needed, without any other changes to $X = \{X^{\mu}\}_{\mu=1}^{m}$. Under some mild conditions on the vehicles and the environment, as given in Assumption \ref{assump1}, this online modification of the release/collect points becomes even simpler: feasible points within a certain radius, which depends on $\hat{\delta}^{\mu,i}_a(X)$, from the original release/collect points can be used as the modified release/collect points as long as the resulting travel time of UGV between those release and collect points are not increased by more than $\hat{\delta}^{\mu,i}_g(X)$ relative to $X$.
\begin{assumption}\footnote{\textcolor{black}{Note that Assumption \ref{assump1} is not required for our generic results in Theorems \ref{th-1} and \ref{th2}. It is only utilized in Corollary \ref{cor}, which enables a simple, local plan modification strategy for the UAV-UGV teams in response to ground obstacles (Remark \ref{remark}).} The VTOL capability is typical for multi-rotor UAVs, whose take-off and landing maneuvers are usually repeatable within a constant duration $\tau_{TL}$. Obstacle-free VTOL can be achieved when the obstacles are represented via their bounding boxes. The increase in flight time due to obstacles above the minimum flight altitude ${\underbar{$z$}}$ is typically negligible when ${\underbar{$z$}}$ is sufficiently large. }
UAVs can perform vertical take-off/landing (VTOL) at any feasible ground point to/from a minimum flight altitude ${\underbar{$z$}}$ in constant time $\tau_{TL}$. No obstacles prevent VTOL or exist at altitudes higher than  ${\underbar{$z$}}$.
\label{assump1}
\end{assumption}

\begin{corollary}
\label{cor}
    Let Assumption \ref{assump1} hold, let $X = \{X^{\mu}\}_{\mu=1}^{m}$ be a feasible solution to \eqref{opt_problem} with robustness \( \hat\delta_a^{\mu,i}(X), \hat\delta^{\mu,i}_g(X)\), and let $\tilde{X} = \{\tilde{X}^{\mu}\}_{\mu=1}^{m}$ be any modified plan that differs from \( X \) only in some release/collect points while satisfying the following conditions for all $\mu=\{1,\dots,m\}$ and $i=\{{1},\dots,n_{\mu}\}$:
\begin{equation}
    \|\tilde{X}^\mu_{i,1}-X^\mu_{i,1}\| + \|\tilde{X}^\mu_{i,n_\mu+2}-X^\mu_{i,n_\mu+2}\| \le v_H^{\text{avg}} \cdot \hat{\delta}^{\mu,i}_a(X),
    \label{eq:prop1conditionA} 
\end{equation}
\begin{equation}
    \tilde{\ell}_{g,i}^\mu \le \ell_{g,i}^\mu + v_g^{\text{avg}} \cdot \hat{\delta}^{\mu,i}_g(X),
    \label{eq:prop1conditionG}
\end{equation}
where  $v^{\text{avg}}_{H}$ is the average speed the UAV can sustain along any fixed-altitude path, $v_g^{\text{avg}}$ is the average speed that can be sustained by the UGV, $\ell^{\mu}_{g,i}$ and $\tilde{\ell}^{\mu}_{g,i}$ denote the lengths of shortest feasible ground paths between release and collect points of $\mu^{\text{th}}$ team's $i^{\text{th}}$ tour under $X$ and $\tilde{X}$. Then under $\tilde X$, every point in $\mathcal P_{\text{UAV}}$ is visited while respecting the energy limitation of the UAVs, i.e.,
$\tilde{\tau}_a(\tilde{X}^{\mu}_{i}) \le \overline{\tau_{a}}$ and
$\tilde{\tau}_g(\tilde{X}^{\mu}_{i,1}, \tilde X^{\mu}_{i,n_{\mu}+2}) \le \overline{\tau_{a}}$ for all $\mu$ and $i$. 
\end{corollary}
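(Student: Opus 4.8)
The plan is to prove the corollary by reducing it to Theorem~\ref{th2}: I will show that the geometric/length conditions \eqref{eq:prop1conditionA} and \eqref{eq:prop1conditionG}, together with Assumption~\ref{assump1}, imply the time-based hypotheses \eqref{eq:condA} and \eqref{eq:condG} of Theorem~\ref{th2}. Once those are established, the conclusion $\tilde{\tau}_a(\tilde X^{\mu}_{i})\le\overline{\tau_{a}}$ and $\tilde{\tau}_g(\tilde X^{\mu}_{i,1},\tilde X^{\mu}_{i,n_{\mu}+2})\le\overline{\tau_{a}}$ follows immediately by invoking Theorem~\ref{th2}, since $\tilde{X}$ differs from $X$ only in its release/collect points, which remain in the actual feasible ground space.

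First, for the aerial leg, I would use Assumption~\ref{assump1} to decompose each tour. Under VTOL with constant take-off/landing time $\tau_{TL}$ and an obstacle-free airspace above the minimum altitude $\underline{z}$, the tour $\tilde X^{\mu}_{i}$ and the original tour $X^{\mu}_{i}$ share an identical middle portion (the flight among the fixed monitoring points of $\mathcal{P}^{\mu}_{\text{UAV}}$, which are unchanged) and identical VTOL durations. They can differ only in the two horizontal legs connecting the release/collect points (lifted to altitude $\underline{z}$) to the first/last monitoring waypoints. Because the airspace above $\underline{z}$ is obstacle-free, each such leg is a straight segment, so displacing an endpoint changes the segment's length by at most the displacement magnitude (triangle inequality). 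Since the release/collect points lie on the ground, this displacement equals $\|\tilde X^{\mu}_{i,1}-X^{\mu}_{i,1}\|$ and $\|\tilde X^{\mu}_{i,n_{\mu}+2}-X^{\mu}_{i,n_{\mu}+2}\|$, respectively. Converting the extra path length to time at the sustainable fixed-altitude speed $v_H^{\text{avg}}$ and applying \eqref{eq:prop1conditionA} then yields $\tilde{\tau}_a(\tilde X^{\mu}_{i})\le\tau_a(X^{\mu}_{i})+\hat{\delta}^{\mu,i}_a(X)$, which is exactly \eqref{eq:condA}.

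Second, for the ground leg, the argument is simpler. Writing the UGV travel time as the shortest ground path length divided by the sustainable average speed $v_g^{\text{avg}}$ gives $\tilde{\tau}_g(\tilde X^{\mu}_{i,1},\tilde X^{\mu}_{i,n_{\mu}+2})=\tilde{\ell}^{\mu}_{g,i}/v_g^{\text{avg}}$ and $\tau_g(X^{\mu}_{i,1},X^{\mu}_{i,n_{\mu}+2})=\ell^{\mu}_{g,i}/v_g^{\text{avg}}$. Dividing the length inequality \eqref{eq:prop1conditionG} by $v_g^{\text{avg}}$ immediately produces \eqref{eq:condG}. With both \eqref{eq:condA} and \eqref{eq:condG} verified for every $\mu$ and every $i$, Theorem~\ref{th2} delivers the claim.

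I expect the aerial bound to be the main obstacle. The delicate point is to argue rigorously, via Assumption~\ref{assump1}, that relocating the release/collect points leaves the take-off/landing times and the inter-waypoint flight among the fixed monitoring points untouched, so that the \emph{entire} change in $\tau_a$ is localized to the two connecting horizontal legs and is therefore controlled by the endpoint displacements through the triangle inequality. The ground-leg step and the final invocation of Theorem~\ref{th2} are routine once this localization is in place.
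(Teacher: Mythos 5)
Your proposal is correct and follows essentially the same route as the paper's proof: the same reduction to Theorem~\ref{th2}, the same VTOL-based decomposition of the tour time (constant $2\tau_{TL}$ plus horizontal legs, with only the two legs adjacent to the lifted release/collect points changing), the same triangle-inequality bound on those legs converted to time via $v_H^{\text{avg}}$, and the same division of the ground-length inequality by $v_g^{\text{avg}}$. The only cosmetic difference is that the paper phrases the leg bounds through an abstract minimum-travel-time function $\Delta$ and states the ground-leg relation as an inequality on the time difference rather than an equality, but the substance is identical.
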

\begin{proof}
We will show that, under Assumption \ref{assump1}, (\ref{eq:prop1conditionA}) and (\ref{eq:prop1conditionG}) imply (\ref{eq:condA}) and (\ref{eq:condG}) given in the premise of Theorem \ref{th2}.

 For any \( X \) and \(\tilde{X}\) satisfying the premise, let's consider each non-trivial tour (rows with at least one point in $\mathcal{P}_{\text{UAV}}$) under the VTOL behavior in Assumption \ref{assump1}. Without loss of generality, let's assume that these non-trivial tours $i$ contain no repetitions of the release point, i.e., $X^{\mu}_{i,j}=\tilde{X}^{\mu}_{i,j} \in \mathcal{P}_{\text{UAV}}$ for all $j \in \{2, {\dots}, n_{\mu}+1\}$. Furthermore, since no obstacles prevent vertical take-off/landing or exist above the minimum flight altitude under Assumption \ref{assump1}, we can express the times for such non-trivial tours as:
\begin{equation*}
\resizebox{\columnwidth}{!}{$
\begin{aligned}
   {\tau}_a({X}^{\mu}_i) &= 2{\tau_{TL}} + \Delta(\mathcal{X}^\mu_{i,1},X^\mu_{i,2}) + \Delta(X^\mu_{i,2},X^\mu_{i,3}) + \cdots + \Delta(X^\mu_{i,n_\mu+1},\mathcal{X}^\mu_{i,n_\mu+2}), \\
   {\tilde{\tau}}_a(\tilde{X}^{\mu}_i) &= 2{\tau_{TL}} + \Delta(\tilde{\mathcal{X}}^\mu_{i,1},X^\mu_{i,2}) + \Delta(X^\mu_{i,2},X^\mu_{i,3}) + \cdots + \Delta(X^\mu_{i,n_\mu+1},\tilde{\mathcal{X}}^\mu_{i,n_\mu+2}),
\end{aligned}
$}
\end{equation*}
where the function $\Delta$ gives the minimum time it takes a UAV to travel between the respective points \textcolor{black}{(note that it is not a separate function but component of $\tau_a(\cdot), \tilde{\tau}_a(\cdot)$ under VTOL behavior from Assumption \ref{assump1})}, and  $\mathcal{X}^\mu_{i,1}, \tilde{\mathcal{X}}^\mu_{i,1}$ and $\mathcal{X}^\mu_{i,n_\mu+2}, \tilde{\mathcal{X}}^\mu_{i,n_\mu+2}$ are the additional waypoints due to VTOL. For example, $\mathcal{X}^\mu_{i,1}$ is at the minimum flight altitude ${\underbar{$z$}}$ with the same $x$ and $y$ coordinates as the release point $X^\mu_{i,1}$. In both equations above, total time for take-off and landing takes constant time $2\tau_{TL}$. Additionally, in both cases, the time to traverse $X^{\mu}_{i,2}, \hdots, X^{\mu}_{i,n_{\mu}+1}$ are equal (no obstacles above ${\underbar{$z$}}$). Accordingly,
\begin{equation}
\resizebox{\columnwidth}{!}{$
    \tilde{\tau}_a(\tilde{X}^{\mu}_i) {-} {\tau}_a({X}^{\mu}_i) {=} \Delta(\tilde{\mathcal{X}}^\mu_{i,1}, X^\mu_{i,2}){+} \Delta(X^\mu_{i,n_\mu+1}, \tilde{\mathcal{X}}^\mu_{i,n_\mu+2}){-}\Delta(\mathcal{X}^\mu_{i,1},X^\mu_{i,2}){-}\Delta(X^\mu_{i,n_\mu+1}, \mathcal{X}^\mu_{i,n_\mu+2}).
    \label{eq:partial_time_differences}
    $}
\end{equation}

Moreover, since $\Delta$ denotes the minimum travel time between two points, it satisfies the triangle inequality, i.e., for any three points $p_1,p_2,p_3$, $\Delta(p_1,p2) \leq\Delta(p_1,p_3)+\Delta(p_3,p_2)$. Hence,
\begin{align}
  \Delta(\tilde{\mathcal{X}}^\mu_{i,1}, X^\mu_{i,2}) \le \Delta(\tilde{\mathcal{X}}^\mu_{i,1}, \mathcal{X}^\mu_{i,1}) + \Delta(\mathcal{X}^\mu_{i,1}, X^\mu_{i,2}) \label{eq:triangle_ineq_1},\\
  \Delta(X^\mu_{i,n_\mu+1},\tilde{\mathcal{X}}^\mu_{i,n_\mu+2}) \le \Delta(\tilde{\mathcal{X}}^\mu_{i,n_\mu+2}, \mathcal{X}^\mu_{i,n_\mu+2}) + \Delta(X^\mu_{i,n_\mu+1}, \mathcal{X}^\mu_{i,n_\mu+2}) .\label{eq:triangle_ineq_2}
\end{align}

Using \eqref{eq:partial_time_differences}, \eqref{eq:triangle_ineq_1},  and \eqref{eq:triangle_ineq_2}, we obtain:
\begin{equation}
   \tilde{\tau}_a(\tilde{X}^{\mu}_i) - {\tau}_a({X}^{\mu}_i) \le \Delta( \tilde{\mathcal{X}}^\mu_{i,1}, \mathcal{X}^\mu_{i,1}) + \Delta( \tilde{\mathcal{X}}^\mu_{i,n_\mu+2}, \mathcal{X}^\mu_{i,n_\mu+2}).
   \label{eq:triangle_distance}
\end{equation}

Since the distance between the respective release/collect points under $X$ and $\tilde{X}$ are equal to the distance between the projections of those points at the minimum flight altitude, i.e., ${\| \tilde{X}_{i,1} - X_{i,1} \| = \| \tilde{\mathcal{X}}_{i,1} - \mathcal{X}_{i,1} \|}$ and ${\| \tilde{X}_{i,n+2} - X_{i,n+2} \| = \| \tilde{\mathcal{X}}_{i,n+2} - \mathcal{X}_{i,n+2} \| }$, \eqref{eq:triangle_distance} implies
\begin{equation}
\tilde{\tau}_a(\tilde{X}_{i}) - {\tau}_a({X}_{i}) \le \frac{\|\tilde{X}^\mu_{i,1}-X^\mu_{i,1}\|}{v^{\text{avg}}_H} + \frac{\|\tilde{X}^\mu_{i,n_\mu+2}-X^\mu_{i,n_\mu+2}\|}{v^{\text{avg}}_H} ,\label{eq:air_dist_bound}
\end{equation} where $v^{\text{avg}}_H$ is the the average speed the UAV can sustain along any fixed-altitude path. Using (\ref{eq:prop1conditionA}) and (\ref{eq:air_dist_bound}), we obtain
\begin{equation}
    \tilde\tau_a(\tilde{X}^\mu_i) - \tau_a(X^\mu_i) \le  \hat{\delta}^{\mu,i}_a(X).
    \label{eq:final_uav_bound}
\end{equation}
Hence, $\tilde{X}$ satisfies (\ref{eq:condA}) of Theorem \ref{th2}. Furthermore, for the UGVs, using $v^{\text{avg}}_g$, we can write:
\begin{equation}
    \tilde\tau_g(\tilde{X}^\mu_{i,1}, \tilde{X}^\mu_{i,n_{\mu+2}}) - \tau_g(X^\mu_{i,1}, {X}^\mu_{i,n_{\mu+2}})  \le \frac{\tilde{\ell}_{g,i}^\mu - \ell_{g,i}^\mu}{v_g^{\text{avg}}}.
    \label{eq:ugv_dist_to_timee}
\end{equation}
Using (\ref{eq:prop1conditionG}) and (\ref{eq:ugv_dist_to_timee}) together, we have
\begin{equation}
    \tilde\tau_g(\tilde{X}^\mu_{i,1}, \tilde{X}^\mu_{i,n_{\mu+2}}) - \tau_g(X^\mu_{i,1}, {X}^\mu_{i,n_{\mu+2}})  \le \hat{\delta}^{\mu,i}_g(X).
    \label{eq:final_ugv_bound}
\end{equation}
Accordingly, $\tilde{X}$ also satisfies (\ref{eq:condG}) of Theorem \ref{th2}. Hence, using Theorem \ref{th2}, we can conclude that every point in $\mathcal{P}_{\text{UAV}}$ is visited under $\tilde{X}$ while respecting the energy limitations of the UAVs.
\end{proof}


\begin{remark}
\label{remark}
Corollary \ref{cor} shows that, under Assumption \ref{assump1}, UAV-UGV teams can follow the offline plan $X$ by only changing the release/collect points if they are occupied as long as 
\begin{itemize}
    \item The combined deviation of modified release and collect points from the offline plan remain within $v_H^{\text{avg}} \cdot \hat{\delta}^{\mu,i}_a(X)$.
    \item The ground path length between the modified release and collect points increases by no more than $v_g^{\text{avg}} \cdot \hat{\delta}^{\mu,i}_g(X)$.
\end{itemize}
 Figure \ref{fig:robustness} provides a visualization of these two conditions.\footnote{\color{black}\eqref{eq:prop1conditionA} and \eqref{eq:prop1conditionG} provide a selection rule for $(\delta_a, \delta_g)$. Let $S$ denote the maximum tolerable per-tour detour; then $\delta_a \geq S/v_H^{\text{avg}}$ and $\delta_g \geq S/v_g^{\text{avg}}$ suffice. In practice, $S$ can be estimated from pre-deployment surveys, terrain or wind data, or vehicle test data. When only obstacle statistics are available, results from continuous first-passage percolation on Poisson Boolean obstacle models~\cite{goueretheret2017positivity, willot2015power} characterize how expected shortest-path lengths scale with obstacle density and can serve as a guide for estimating $S$.}
\end{remark}

\begin{figure}[htpb]
    \centering
    \includegraphics[width=1.0\columnwidth]{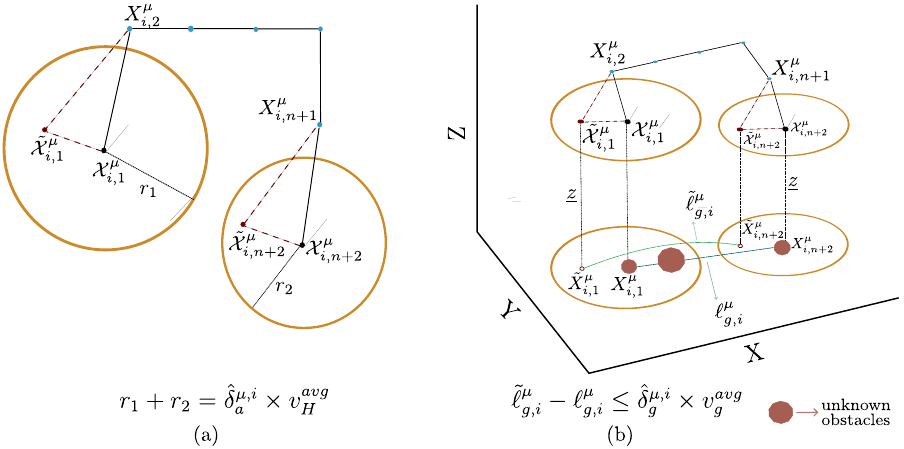}
    \caption{Illustration for Remark \ref{remark}. A tour with its original and modified release/collect points due to unknown obstacles. (a) top view, (b) 3D view.} 
    \label{fig:robustness}
\end{figure}

\color{black}
\section{Simulations}
\label{sec:simulations}

We evaluate the performance, scalability, and component contributions of Alg.~\ref{alg:integrated_path_planning} through sets of simulations: (1) a scalability analysis across different $m$ and $n$ (Section~\ref{subsec:scalability}); (2) comparisons against Branch and 
Cut~\cite{mitchell2002branch}, two metaheuristics --Simulated Annealing (SA) and Genetic Algorithm (GA)-- and three state-of-the-art heuristics from the literature~\cite{ropero2019terra,yu2018algorithms,
maini2019coverage} developed for similar energy-aware UAV-UGV planning (Sections~\ref{sec:bnc}--\ref{sec:sota}); and (3) an ablation study evaluating the contribution of each algorithmic component, the sensitivity 
to robustness parameters $\delta_a$ and $\delta_g$, and the algorithm's performance under randomly generated initial and final team positions (Section~\ref{sec:ablation}). All implementations were performed using Python 3.9 on an Intel i9-13900H CPU with 16GB RAM. In all simulations, a VTOL UAV is considered with a minimum flight altitude of $\underline{z} = 100\,m$ (see Assumption~\ref{assump1}). In practice, most UAVs move horizontally at speeds greater than their vertical speeds~\cite{guo2020vertical}, 
which is consistent with our configuration (Table~\ref{tab:mission_parameters}).

For different values of $m$ (number of UAV-UGV teams) and $n$ (number of monitoring points), 25 random $\mathcal{P}_{\text{UAV}}$ were created via uniform distribution from $\mathcal{Q}_f^a$ with a fixed $z$-coordinate equal to $\underline{z}$ inside a cuboid environment as defined in~\eqref{env}, with $\bar{x} = \bar{y} = 4000\,m$. Unless otherwise specified, computation time and overall mission time $\max_{\mu}\tau(X^{\mu})$ are reported as mean $\pm$ standard deviation over those 25 realizations. Mission parameters are presented in Table~\ref{tab:mission_parameters} and the following linear recharge 
relationship is used:
\begin{equation}
    \tau_{\text{c}}(X_{i}^{\mu}) = \gamma \times \max\!\big(\tau_a({X}^{\mu}_{i}), 
    \tau_g({X}^{\mu}_{i,1}, {X}^{\mu}_{i,n_{\mu}+2})\big), \quad \gamma \geq 0,
    \label{eq:linear_charging_equation}
\end{equation}
where $\gamma$ is the recharge ratio. Hence, $\gamma=0$ approximates a 
battery swap system, whereas a larger $\gamma$ models a slower recharging 
mechanism.

Across all simulations throughout this section, $\tau_a$ and $\tau_g$ are instantiated using Euclidean 
path lengths and the fixed speeds in Table~\ref{tab:mission_parameters}. For the UAV, horizontal and vertical speeds are treated separately to model VTOL behavior. The UGV travel time is computed from Euclidean ground distances at the fixed ground speed. Simulations are conducted in an obstacle-free environment to enable fair comparison with~\cite{ropero2019terra,yu2018algorithms,maini2019coverage} (see Sec. \ref{sec:sota} for details).

\begin{table}[!htpb]
\centering
\caption{System Parameters Used in the Simulations}
\label{tab:mission_parameters}
\begin{tabular}{|c|c|c|c|}
\hline
\textbf{Parameter} & \textbf{Value} & \textbf{Parameter} & \textbf{Value} \\
\hline
UGV Speed & $2.5\,m/s$ & UAV Speed & H: $10\,m/s$, V: $2\,m/s$ \\
\hline
$\overline{\tau_{\text{a}}}$ (Max flight time) & $600\,s$ & 
$\gamma$ (Recharge Ratio) & 1 \\
\hline
\end{tabular}
\end{table}
\color{black}

\color{black}
\subsection{Scalability of the Proposed Algorithm}
\label{subsec:scalability}

The performance of Alg.~\ref{alg:integrated_path_planning} was evaluated for 
$n \in \{25, 50, 75, 100\}$ and $m \in \{1, 2, 3, 4, 7, 10\}$. For all 
teams, $p_{\text{o}}^{\mu} = (0,0,0)$ and $p_{\text{f}}^{\mu} = (4000,4000,0)$ 
for all $\mu \in \{1,\ldots,m\}$, and mission parameters from 
Table~\ref{tab:mission_parameters} are used throughout. Results are 
summarized in Table~\ref{tab:heuristic_performance}.


Computation time increases with $n$ as expected from the
$\mathcal{O}(mn + n^3)$ complexity established in
Theorem~\ref{th-1}, with empirical scaling exponents consistently
well within the theoretical cubic bound across all values of $m$.
The exponent decreases for larger $m$, consistent with the structure
of Alg.~\ref{alg:integrated_path_planning}: as $m$ increases, each
team is assigned fewer waypoints on average, reducing the cost of
the per-team TSP subproblems that dominate the $mn+n^3$ term in the
worst-case bound.

Overall mission time $\max_{\mu}\tau(X^{\mu})$ generally increases with $n$, 
as larger problems require longer tours, and decreases with $m$, as teams 
operate in parallel and share the workload. The results confirm that 
Alg.~\ref{alg:integrated_path_planning} scales well across a wide range of 
problem sizes and team configurations, completing in under $7\,\text{s}$ 
even at $n=100$, $m=2$ i.e., the most computationally demanding configuration 
tested.


\begin{table}[htpb]
\centering
\color{black}
\small
\setlength{\tabcolsep}{4pt}
\caption{Computation Time and Overall Mission Time, $\max_{\mu}\tau(X^{\mu})$, under Alg.~\ref{alg:integrated_path_planning} for Different Numbers of Teams ($m$) and Problem Sizes ($n$).}
\label{tab:heuristic_performance}
\begin{tabular}{ccc cc cc}
\toprule
 & & \multicolumn{2}{c}{$\boldsymbol{m=1}$} & \multicolumn{2}{c}{$\boldsymbol{m=2}$} \\
\cmidrule(lr){3-4}\cmidrule(lr){5-6}
 & $n$ & \textbf{Comp. (s)} & $\boldsymbol{\max_{\mu}\tau(X^{\mu})}$ \textbf{(s)}
       & \textbf{Comp. (s)} & $\boldsymbol{\max_{\mu}\tau(X^{\mu})}$ \textbf{(s)} \\
\midrule
& 25  & $0.08 \pm 0.01$ & $5150 \pm 630$ & $0.26 \pm 0.03$ & $4130 \pm 580$ \\
& 50  & $0.22 \pm 0.03$ & $6250 \pm 680$ & $1.00 \pm 0.07$ & $4070 \pm 390$ \\
& 75  & $0.50 \pm 0.10$ & $7270 \pm 390$ & $2.79 \pm 0.18$ & $4630 \pm 390$ \\
& 100 & $0.82 \pm 0.17$ & $8200 \pm 540$ & $6.30 \pm 0.44$ & $5160 \pm 410$ \\
\midrule
 & & \multicolumn{2}{c}{$\boldsymbol{m=3}$} & \multicolumn{2}{c}{$\boldsymbol{m=4}$} \\
\cmidrule(lr){3-4}\cmidrule(lr){5-6}
 & $n$ & \textbf{Comp. (s)} & $\boldsymbol{\max_{\mu}\tau(X^{\mu})}$ \textbf{(s)}
       & \textbf{Comp. (s)} & $\boldsymbol{\max_{\mu}\tau(X^{\mu})}$ \textbf{(s)} \\
\midrule
& 25  & $0.29 \pm 0.03$ & $3650 \pm 560$ & $0.36 \pm 0.04$ & $3500 \pm 480$ \\
& 50  & $0.86 \pm 0.04$ & $3710 \pm 340$ & $0.86 \pm 0.04$ & $3530 \pm 430$ \\
& 75  & $2.16 \pm 0.10$ & $3870 \pm 380$ & $1.92 \pm 0.08$ & $3690 \pm 360$ \\
& 100 & $4.57 \pm 0.32$ & $4100 \pm 480$ & $3.79 \pm 0.28$ & $3830 \pm 470$ \\
\midrule
 & & \multicolumn{2}{c}{$\boldsymbol{m=7}$} & \multicolumn{2}{c}{$\boldsymbol{m=10}$} \\
\cmidrule(lr){3-4}\cmidrule(lr){5-6}
 & $n$ & \textbf{Comp. (s)} & $\boldsymbol{\max_{\mu}\tau(X^{\mu})}$ \textbf{(s)}
       & \textbf{Comp. (s)} & $\boldsymbol{\max_{\mu}\tau(X^{\mu})}$ \textbf{(s)} \\
\midrule
& 25  & $0.50 \pm 0.03$ & $3420 \pm 210$ & $0.57 \pm 0.06$ & $3430 \pm 200$ \\
& 50  & $1.07 \pm 0.05$ & $3510 \pm 270$ & $1.27 \pm 0.04$ & $3480 \pm 220$ \\
& 75  & $1.94 \pm 0.06$ & $3580 \pm 390$ & $2.20 \pm 0.06$ & $3530 \pm 240$ \\
& 100 & $3.25 \pm 0.16$ & $3580 \pm 440$ & $3.41 \pm 0.11$ & $3610 \pm 290$ \\
\bottomrule
\end{tabular}
\end{table}

\color{black}

\color{black}
\subsection{Comparison to Branch and Cut}
\label{sec:bnc}

Table~\ref{tab:global_vs_proposed} presents a comparison between Branch and Cut and Alg.~\ref{alg:integrated_path_planning} for $m \in \{1, 2\}$ and $n \in \{2,3,4,5\}$. Branch and Cut is implemented using Gurobi~\cite{gurobi} with a \texttt{MipGap} tolerance of $5\%$ and a \texttt{TimeLimit} of $200\,\text{s}$. Encoding details and further parameters are available in Appendix \ref{app:bnc}. For each instance, Branch and Cut returns either a solution guaranteed to be within $5\%$ of the global optimum or, if the time limit is reached, the best feasible solution found so far. Start and end positions are set to $p_{\text{o}}^{\mu} = (0,0,0)$ and $p_{\text{f}}^{\mu} = (4000,4000,0)$ 
for all $\mu \in \{1,\ldots,m\}$.

The achieved optimality gap of Branch and Cut is consistently small: mean gaps remain below $2\%$ across all configurations. The median gap is $0\%$ for all but two configurations ($(m\!=\!1, n\!=\!5)$ and $(m\!=\!2, n\!=\!4)$ with medians of $0.52\%$ and $1.28\%$ respectively), indicating that Branch and Cut recovers the global optimum in the majority of instances. The solutions returned by Branch and Cut therefore serve as a meaningful near-optimal reference for evaluating Alg.~\ref{alg:integrated_path_planning}'s solution quality. However, Branch and Cut computation time grows rapidly with both $n$ and $m$, consistent with the NP-hardness of~\eqref{opt_problem}: computation 
times reach $10.5 \pm 8.8\,\text{s}$ and $16 \pm 12\,\text{s}$ for $(m\!=\!1, n\!=\!5)$ and $(m\!=\!2, n\!=\!5)$ respectively.

In contrast, Alg.~\ref{alg:integrated_path_planning} completes in under $0.10\,\text{s}$ across all tested configurations. The mission times produced by Alg.~\ref{alg:integrated_path_planning} are competitive with the near-optimal Branch and Cut reference: the gap narrows as $n$ increases for both $m=1$ and $m=2$, suggesting that solution quality improves relative to the near-optimal bound at larger $n$.

\begin{table}[htpb]
\centering
\color{black}
\small
\caption{Comparing Computation Time and Overall Mission Time, \textbf{$\max_{\mu}\tau(X^{\mu})$ (s)}, Between Branch and Cut and Alg.~\ref{alg:integrated_path_planning} for Different Problem Sizes, $n$ and Different Number of Teams, $m$.}
\label{tab:global_vs_proposed}
\begin{tabular}{cc ccc cc}
\toprule
 & & \multicolumn{3}{c}{\textbf{Branch and Cut}} & \multicolumn{2}{c}{\textbf{Alg. 1}} \\
\cmidrule(lr){3-5}\cmidrule(lr){6-7}
$m$ & $n$ & \textbf{Comp. (s)} & \textbf{$\max_{\mu}\tau(X^{\mu})$ (s)} & \textbf{Gap (\%)} & \textbf{Comp. (s)} & \textbf{$\max_{\mu}\tau(X^{\mu})$ (s)} \\
\midrule
\multirow{4}{*}{1}
 & 2 & $0.07 \pm 0.10$ & $2380 \pm 230$ & $1.45 \pm 1.94$ & $0.02 \pm 0.01$ & $2860 \pm 620$ \\
 & 3 & $0.22 \pm 0.24$ & $2590 \pm 370$ & $1.00 \pm 1.65$ & $0.03 \pm 0.00$ & $2910 \pm 580$ \\
 & 4 & $0.98 \pm 1.51$ & $3120 \pm 620$ & $0.74 \pm 1.43$ & $0.03 \pm 0.01$ & $3250 \pm 670$ \\
 & 5 & $10.5  \pm 8.8$  & $3030 \pm 620$ & $1.79 \pm 2.01$ & $0.03 \pm 0.00$ & $3320 \pm 650$ \\
\midrule
\multirow{4}{*}{2}
 & 2 & $0.21 \pm 0.23$ & $2290 \pm 71$  & $0.35 \pm 0.94$ & $0.06 \pm 0.01$ & $2950 \pm 180$ \\
 & 3 & $1.17 \pm 0.94$ & $2480 \pm 280$ & $1.30 \pm 1.66$ & $0.07 \pm 0.01$ & $3000 \pm 290$ \\
 & 4 & $3.83 \pm 3.46$ & $2920 \pm 660$ & $1.76 \pm 1.67$ & $0.08 \pm 0.01$ & $3210 \pm 410$ \\
 & 5 & $16.4 \pm 11.9$ & $3030 \pm 530$ & $1.84 \pm 2.17$ & $0.09 \pm 0.01$ & $3170 \pm 240$ \\
\bottomrule
\end{tabular}
\end{table}
\color{black}

\color{black}
\subsection{Comparison to Metaheuristics}

We compare two metaheuristics, GA and SA, against Alg.~\ref{alg:integrated_path_planning} on 5 randomly generated $\mathcal{P}_{\text{UAV}}$ instances (uniform distribution from $\mathcal{Q}^a_f$) with $n=10$, evaluated in the setting $m \in \{1, 2\}$, $p_{\text{o}}^{\mu} = (0,0,0)$ and $p_{\text{f}}^{\mu} = (4000,4000,0)$ for all $\mu \in \{1,\ldots,m\}$. Larger $n$ values are not computationally tractable for GA and SA, which motivates this reduced problem size. Each instance was repeated 3 times per $\mathcal{P}_{\text{UAV}}$ (15 total runs), and results are reported as mean $\pm$ standard deviation (practically min-max range) across runs. In contrast, 
Alg.~\ref{alg:integrated_path_planning} is deterministic and yields the same result for the same input. For this setting, Alg.~\ref{alg:integrated_path_planning} achieves $\max_{\mu}\tau(X^{\mu}) = 4480 \pm 680\,\text{s}$ with a computation time of $0.03 \pm 0.00\,\text{s}$ for $m=1$, and $\max_{\mu}\tau(X^{\mu}) = 3720 \pm 840\,\text{s}$ with a computation time of $0.13 \pm 0.01\,\text{s}$ for $m=2$. 

SA and GA hyperparameters were tuned systematically using Optuna~\cite{akiba2019optuna}, a black-box hyperparameter optimization framework, prior to evaluation, as detailed in the following subsections. Both algorithms run to a fixed termination condition: a maximum number of cooling steps for SA and a 
maximum number of generations for GA, with the resulting computation times reported alongside overall mission times. Even at the largest tested configurations, neither SA nor GA approaches the overall mission time of Alg.~\ref{alg:integrated_path_planning}, despite requiring $100$-$200\, \text{s}$ compared to under $0.2\,\text{s}$.
\color{black}

\color{black}
\subsubsection{Comparison to Simulated Annealing}
\label{sec:sa}

The performance of SA is sensitive to hyperparameter selection, namely the initial temperature $T_{\text{max}}$, the final temperature $T_{\text{min}}$, the number of cooling steps, and the cooling schedule~\cite{kirkpatrick1983optimization,nourani1998sa}. Hyperparameters were tuned using Optuna~\cite{akiba2019optuna}. The search space covered $T_{\text{max}} \in \{500,750, 1000, 1500, 2000\}$, $T_{\text{min}} \in \{0.01, 0.1, 1\}$, steps $\in \{50\text{k}, 100\text{k}, 150\text{k}, 200\text{k}\}$, and cooling schedules $\in \{\text{logarithmic, linear, quadratic}\}$. Selection was based on minimizing $\max_{\mu}\tau(X^{\mu})$ while maintaining low variance across runs, yielding $T_{\text{min}} = 0.1$ and a logarithmic cooling schedule, which provides a principled balance between exploration and exploitation~\cite{nourani1998sa}. The remaining parameters $T_{\text{max}} \in \{500, 750, 1000\}$ and steps $\in \{100\text{k}, 150\text{k}, 200\text{k}\}$ were then varied to assess sensitivity, with results reported in Table~\ref{tab:sa_results}.

The initial feasible solution provided to SA is constructed via a round-robin assignment, where points are distributed one per tour in sequence. For $m=2$, points are split evenly between teams prior 
to the round-robin tour assignment. This yields $\max_{\mu}\tau(X^{\mu}) 
= 10540 \pm 1040\,\text{s}$ for $m=1$ and $\max_{\mu}\tau(X^{\mu}) = 
12350 \pm 1320\,\text{s}$ for $m=2$.

We note that even at the largest configuration (200k steps), SA requires approximately $108\,\text{s}$
for $m=1$ and $203\,\text{s}$ for $m=2$, several orders of magnitude greater than Alg.~\ref{alg:integrated_path_planning}, which completes in under $0.15\,\text{s}$ in all tested configurations. Despite this substantial computation budget, SA consistently yields higher $\max_{\mu}\tau(X^{\mu})$ across all settings. Furthermore, improvements tend to plateau beyond 100k
steps in most configurations, suggesting that additional computation causes diminishing returns.

The results also reveal that SA degrades with increasing $m$. For $m=2$, both overall mission times and their standard deviations increase substantially relative to $m=1$ under identical hyperparameter settings. For example, at $T_{\text{max}} = 1000$ and 100k steps, $\max_{\mu}\tau(X^{\mu})$ increases
from $5500 \pm 480\,\text{s}$ for $m=1$ to $8000 \pm 1200\,\text{s}$ for $m=2$, with both the mean and variance deteriorating significantly. This indicates that SA struggles to navigate the larger joint
assignment-and-routing search space that arises with multiple teams. This degradation is expected for a generic metaheuristic, as the neighborhood structure explored by SA does not adapt to the combinatorial structure introduced by multi-team coordination. In contrast, Alg.~\ref{alg:integrated_path_planning} explicitly exploits the structure of~(\ref{opt_problem}), achieving $\max_{\mu}\tau(X^{\mu}) = 4480 \pm 680\,\text{s}$ for $m=1$ and $3720 \pm 840\,\text{s}$ for $m=2$.
\color{black}

\begin{table}[htpb]
\centering
\small
\color{black}
\setlength{\tabcolsep}{2pt}
\caption{Simulated Annealing Average Computation Time and Overall Mission Time, ${\max_{\mu}\tau(X^{\mu})}$, for Different $T_{\text{max}}$, Number of Steps, and Number of Teams $m$}
\label{tab:sa_results}
\begin{tabular}{cc ccc ccc}
\toprule
 & & \multicolumn{3}{c}{\textbf{Comp. Time (s)}}
   & \multicolumn{3}{c}{$\boldsymbol{\max_{\mu}\tau(X^{\mu})}$ \textbf{(s)}} \\
\cmidrule(lr){3-5}\cmidrule(lr){6-8}
 & & \multicolumn{3}{c}{\textbf{Steps}}
   & \multicolumn{3}{c}{\textbf{Steps}} \\
\cmidrule(lr){3-5}\cmidrule(lr){6-8}
$m$ & $T_{\text{max}}$
  & \textbf{100k} & \textbf{150k} & \textbf{200k}
  & \textbf{100k} & \textbf{150k} & \textbf{200k} \\
\midrule
\multirow{3}{*}{1}
& 500  & $53.76 \pm 0.27$ & $80.77 \pm 0.24$ & $107.75 \pm 0.66$
       & $5840 \pm 510$   & $5500 \pm 830$   & $5230 \pm 690$ \\
& 750  & $53.85 \pm 0.21$ & $80.76 \pm 0.36$ & $107.55 \pm 0.41$
       & $5730 \pm 540$   & $5280 \pm 490$   & $4960 \pm 500$ \\
& 1000 & $54.02 \pm 0.42$ & $80.83 \pm 0.39$ & $107.75 \pm 0.56$
       & $5500 \pm 480$   & $5370 \pm 730$   & $4850 \pm 500$ \\
\midrule
\multirow{3}{*}{2}
& 500  & $101.0 \pm 0.38$ & $151.7 \pm 0.68$ & $202.5 \pm 1.5$
       & $6950 \pm 910$   & $6200 \pm 1100$  & $5700 \pm 1300$ \\
& 750  & $101.2 \pm 0.50$ & $152.1 \pm 0.74$ & $202.5 \pm 0.96$
       & $7600 \pm 1400$  & $5900 \pm 1300$  & $4900 \pm 1300$ \\
& 1000 & $101.4 \pm 0.44$ & $152.7 \pm 1.5$  & $202.3 \pm 0.73$
       & $8000 \pm 1200$  & $6600 \pm 1200$  & $5500 \pm 1700$ \\
\bottomrule
\end{tabular}
\end{table}

\color{black}

\color{black}
\subsubsection{Comparison to Genetic Algorithm}
\label{sec:ga}

The performance of GA is sensitive to hyperparameter selection, including population size, number of generations, mutation rate, number of mating parents, and selection strategy~\cite{gen1999genetic}. Hyperparameters were tuned using Optuna~\cite{akiba2019optuna}, with the search space covering population size $\in \{100, 200, 300, 400, 500\}$, generations $\in \{100, 150, 200, 250\}$, mutation rate $\in \{5, 10, 25, 50\}\%$, number of mating parents $\in \{50, 75, 100\}$, and selection strategies $\in \{\text{steady-state, roulette wheel, tournament}\}$. Selection was based on minimizing $\max_{\mu}\tau(X^{\mu})$ while maintaining low variance across runs, yielding a mutation rate of $50\%$, 75 mating parents, and steady-state selection (SSS), which replaces a portion of the population at each generation to promote gradual convergence~\cite{smith2007steady}. The remaining parameters, 
population size $\in \{300, 400, 500\}$ and generations $\in \{100, 150, 200\}$, were then varied to assess sensitivity, with results reported in Table~\ref{tab:ga_results}.

The initial feasible solution provided to GA is constructed via the same round-robin assignment as SA.

Even at the largest configuration (population size of 500 and 200 generations), GA requires approximately 
$62\,\text{s}$ for $m=1$ and $123\,\text{s}$ for $m=2$, several orders of magnitude greater than Alg.~\ref{alg:integrated_path_planning}. Despite this computation budget, GA consistently yields higher $\max_{\mu}\tau(X^{\mu})$ across all settings, with the best achieved value of $7000 \pm 860\,\text{s}$ for $m=1$ and $9700 \pm 1500\,\text{s}$ for $m=2$, compared to $4960 \pm 500\,\text{s}$ and $4900 \pm 1300\,\text{s}$ for SA. As expected, larger populations and more generations generally improve overall mission time, though improvements plateau with additional computation.

The results further reveal that GA degrades with increasing $m$, similar to SA. For $m=2$, overall mission times increase by approximately $35$-$45\%$ relative to $m=1$ under identical settings, and standard deviations nearly double. As with SA, this degradation is expected for a
generic metaheuristic, as it does not exploit the combinatorial structure of (\ref{opt_problem}). In contrast, Alg.~\ref{alg:integrated_path_planning} explicitly leverages this 
structure, achieving consistently lower $\max_{\mu}\tau(X^{\mu})$ with negligible computation time.

\begin{table}[htpb]
\centering
\small
\color{black}
\setlength{\tabcolsep}{2pt}
\caption{Genetic Algorithm Average Computation Time and Overall Mission Time, $\max_{\mu}\tau(X^{\mu})$ for Different Population Sizes, Generations, and Number of Teams $m$}
\label{tab:ga_results}
\begin{tabular}{cc ccc ccc}
\toprule
 & & \multicolumn{3}{c}{\textbf{Comp. Time (s)}}
   & \multicolumn{3}{c}{$\boldsymbol{\max_{\mu}\tau(X^{\mu})}$ \textbf{(s)}} \\
\cmidrule(lr){3-5}\cmidrule(lr){6-8}
 & & \multicolumn{3}{c}{\textbf{Generations}}
   & \multicolumn{3}{c}{\textbf{Generations}} \\
\cmidrule(lr){3-5}\cmidrule(lr){6-8}
$m$ & \textbf{Pop.}
  & \textbf{100} & \textbf{150} & \textbf{200}
  & \textbf{100} & \textbf{150} & \textbf{200} \\
\midrule
\multirow{3}{*}{1}
& 300 & $19.12 \pm 0.47$ & $28.51 \pm 0.71$ & $38.35 \pm 0.82$
      & $8000 \pm 1200$  & $7600 \pm 1000$  & $7300 \pm 920$ \\
& 400 & $25.48 \pm 0.50$ & $38.13 \pm 0.51$ & $50.21 \pm 0.67$
      & $7700 \pm 1100$  & $7300 \pm 870$   & $7000 \pm 830$ \\
& 500 & $31.46 \pm 0.59$ & $46.90 \pm 0.57$ & $62.42 \pm 0.78$
      & $7500 \pm 1100$  & $7200 \pm 920$   & $7000 \pm 860$ \\
\midrule
\multirow{3}{*}{2}
& 300 & $37.57 \pm 0.92$ & $56.06 \pm 0.96$ & $74.6 \pm 1.4$
      & $10900 \pm 1700$ & $10500 \pm 1700$ & $10200 \pm 1600$ \\
& 400 & $49.28 \pm 0.67$ & $73.5 \pm 1.5$   & $97.8 \pm 1.2$
      & $10700 \pm 1600$ & $10300 \pm 1500$ & $10000 \pm 1500$ \\
& 500 & $61.6 \pm 1.1$   & $91.5 \pm 1.5$   & $122.8 \pm 1.6$
      & $10500 \pm 1800$ & $10100 \pm 1700$ & $9700 \pm 1500$ \\
\bottomrule
\end{tabular}
\end{table}

\color{black}

\subsection{Comparison to the State-of-the-Art}
\label{sec:sota}
We compared our approach to three heuristics \cite{ropero2019terra,yu2018algorithms,maini2019coverage} that investigate similar energy-aware UAV-UGV planning problems. A fundamental difference between our method and these works is that \cite{ropero2019terra,yu2018algorithms,maini2019coverage} explicitly assume 
a single UAV-UGV team and cannot generalize to multi-team scenarios ($m > 1$), whereas Alg.~\ref{alg:integrated_path_planning} accommodates multiple teams; these baselines are therefore evaluated only under the single-team configuration $m=1$. Another significant difference is the 
notion of robustness: in \cite{ropero2019terra,yu2018algorithms,maini2019coverage}, the map is assumed to be known a priori, making those methods prone to infeasibility in the presence of uncertainty and potentially requiring complete mission re-planning. In contrast, Alg.~\ref{alg:integrated_path_planning} 
maintains feasibility without replanning (Theorem~\ref{th2}).

Noting these fundamental differences, we focus on comparing the algorithms based on mission efficiency ($\max_{\mu}\tau(X^{\mu})$ for $m=1$) and computation time. To ensure a fair comparison, all methods are evaluated under identical conditions: an obstacle-free environment (consistent with the assumptions of \cite{ropero2019terra,yu2018algorithms,maini2019coverage}), the VTOL and recharge model in (\ref{eq:linear_charging_equation}) with $\gamma \in \{0, 1, 2\}$, mission parameters from Table~\ref{tab:mission_parameters}, and $p_{\text{o}}^{1} = p_{\text{f}}^{1} = (0,0,0)$ since \cite{ropero2019terra,yu2018algorithms,maini2019coverage} cannot accommodate $p_{\text{o}} \neq p_{\text{f}}$. TSPs and GTSPs of the baselines are solved using Nearest Neighbor heuristic for TSP \cite{rosenkrantz1977analysis} and GLNS \cite{smith2016GLNS}, respectively.

\color{black}
The three baseline methods are deterministic and run to completion without a user-defined termination criterion or computational budget: their computation times are therefore an intrinsic property of each algorithm and are reported as measured. All results are averaged over 25 randomly generated $\mathcal{P}_{\text{UAV}}$ instances (uniform distribution from $\mathcal{Q}_f^a$) for $n \in \{25, 50, 75, 100\}$, and reported as mean $\pm$ std.
\color{black}

\subsubsection{Comparison to \cite{ropero2019terra}} In
\cite{ropero2019terra}, the authors propose an algorithm, TERRA, to plan for a UAV-UGV team in a monitoring mission using charging stops (rendezvous points). TERRA has five stages: (1) Voronoi tessellations assign monitoring points ($\mathcal{P}_{\text{UAV}}$) to charging stops; (2) a greedy hitting set algorithm refines these assignments; (3) charging stop locations are refined with a Gravitational Optimization Algorithm; (4) a TSP is solved for the UGV path; and (5) a modified A* algorithm computes the path of energy-constrained UAV. Main limitations of the TERRA are: the UGV must remain stationary at charging stops while the UAV visits monitoring points (i.e., UGV cannot move until the tour/cycle is completed), which can severely slow missions when monitoring points are far apart. Moreover, there is no takeoff/landing modeling and no explicit charging model for UAV, as instantaneous are swaps assumed, which might be impractical in real-world deployment due to the need to carry high number of batteries.

Fig.\ref{fig:terra_comparison_mission_time} shows that Alg.\ref{alg:integrated_path_planning} consistently outperforms TERRA in terms of $\tau(X^{1})$ across all $\gamma$. As $n$ increases, the gap becomes more significant, even with $\gamma=0$, (as assumed in TERRA). 

\begin{figure}[htpb]
    \centering
    \includegraphics[width=\linewidth]{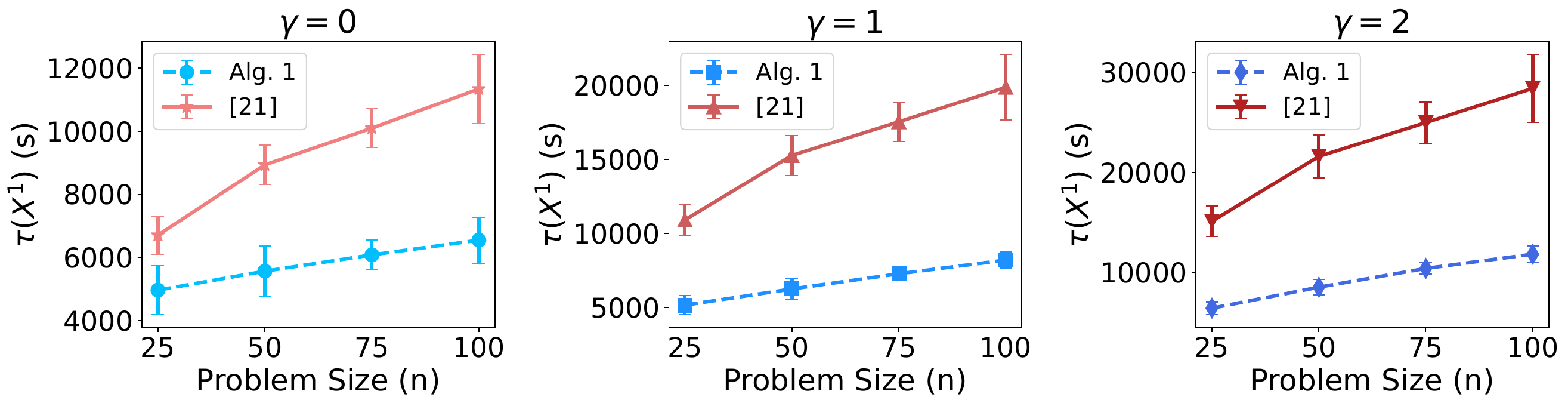}
    \caption{Comparing overall mission time ($\tau(X^{1})$) of Alg.\ref{alg:integrated_path_planning} and TERRA \cite{ropero2019terra} for different problem sizes ($n$) and recharge ratio ($\gamma$).}
    \label{fig:terra_comparison_mission_time}
\end{figure}

As Fig. \ref{fig:terra_comparison_computation_time} shows, Alg.~\ref{alg:integrated_path_planning} consistently achieves lower runtimes, the difference becomes more significant as $n$ increases, emphasizing Alg. \ref{alg:integrated_path_planning}'s scalability.

\begin{figure}[htpb]
    \centering
    \includegraphics[width=\linewidth]{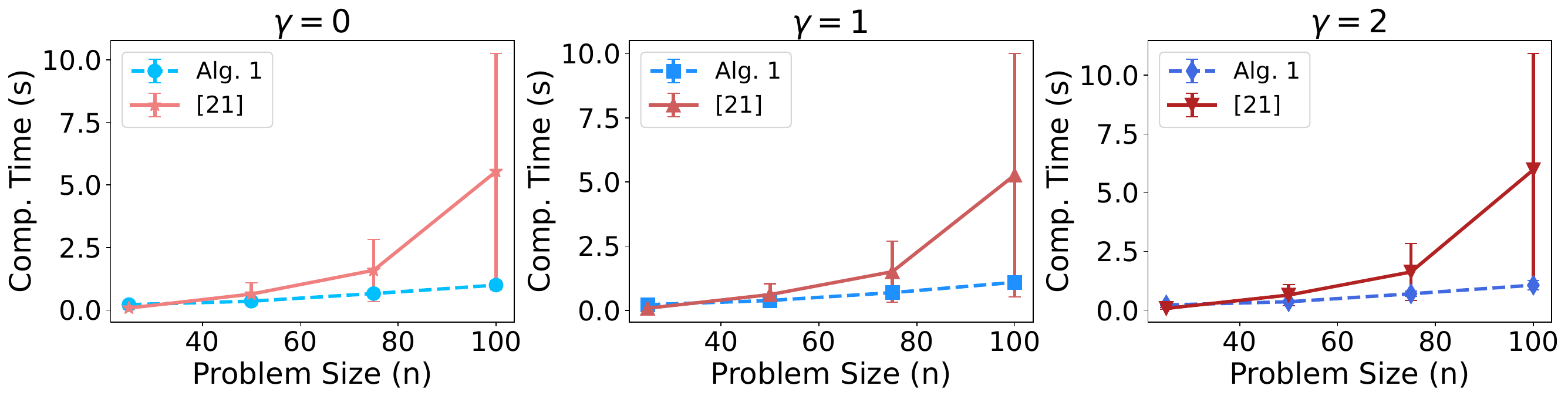}
    \caption{Comparing computation time of Alg.\ref{alg:integrated_path_planning} and TERRA \cite{ropero2019terra} for different problem sizes ($n$) and recharge ratio ($\gamma$).}
    \label{fig:terra_comparison_computation_time}
\end{figure}

\subsubsection{Comparison to \cite{maini2019coverage}}
In \cite{maini2019coverage}, the authors propose a heuristic for UAV-UGV route planning with energy constraints, where UGV operates in a road network. Main limitation of \cite{maini2019coverage} is the road network, restricting UGV  to predefined paths (e.g., square grid) and requires UAV monitoring points to be sufficiently close to this roads. Additionally, the method does not model charging. It first solves a TSP to generate the UAV path, and if the energy constraints are violated, a repair algorithm modifies the path (inserts charging stops from a set of predefined charging sites).
Alg.\ref{alg:integrated_path_planning} can operate under mobility constraints, including being constrained to a pre-defined road network. In our comparison, the UGV was constrained to a square road network (an example case in \cite{maini2019coverage}) with edges of $2.5\,km$, discretized at $10\,m$ resolution when running Alg.\ref{alg:integrated_path_planning}. Fig.\ref{fig:rn_comparison_mission_time} shows that even under this restriction (i.e., the use of road network, which naturally aligns with the setup of \cite{maini2019coverage}), Alg.\ref{alg:integrated_path_planning} achieves significantly lower $\tau(X^{1})$ when  $\gamma = 1,2$. Moreover, Fig. \ref{fig:rn_comparison_computation_time} shows that Alg.\ref{alg:integrated_path_planning} maintains competitive computational performance.

\begin{figure}[htpb]
    \centering
    \includegraphics[width=\linewidth]{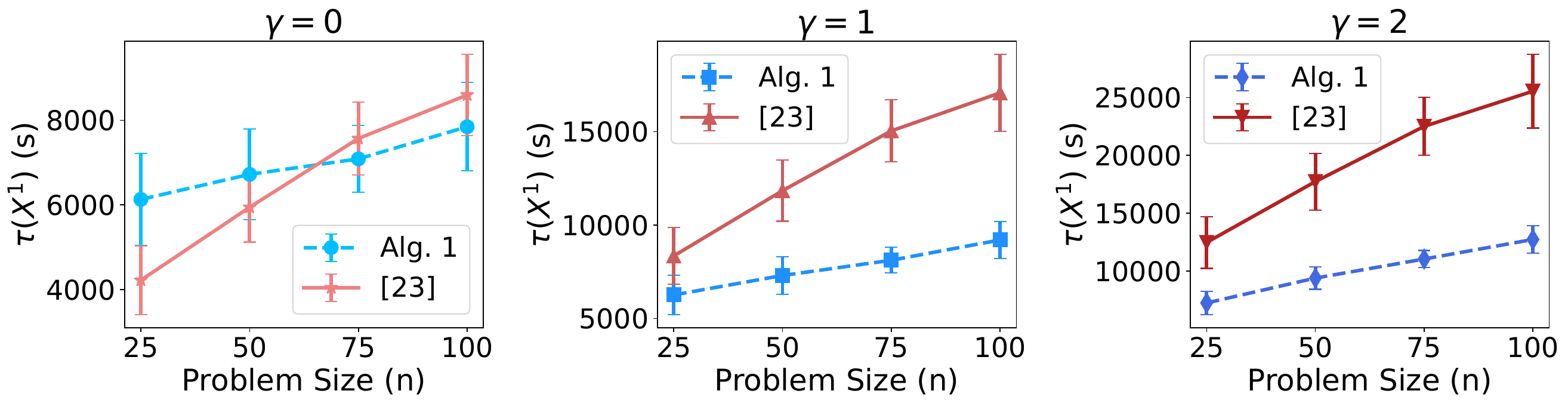}
    \caption{Comparing  overall mission time $(\tau(X^{1}))$ of Alg.\ref{alg:integrated_path_planning} and the method in \cite{maini2019coverage} for different problem sizes ($n$) and recharge ratios ($\gamma$).}
    \label{fig:rn_comparison_mission_time}
\end{figure}

\begin{figure}[htpb]
    \centering
    \includegraphics[width=\linewidth]{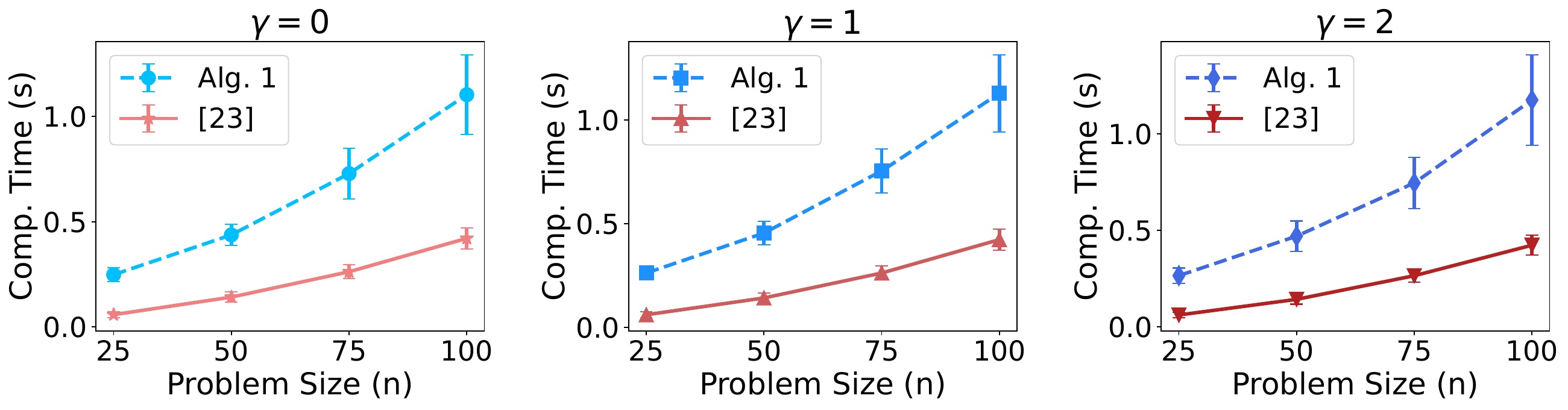}
    \caption{Comparing computation time of Alg.\ref{alg:integrated_path_planning} and the method in \cite{maini2019coverage} for different problem sizes ($n$) and recharge ratios ($\gamma$).}
    \label{fig:rn_comparison_computation_time}
\end{figure}

\subsubsection{Comparison to \cite{yu2018algorithms}}
In \cite{yu2018algorithms}, the authors solve a GTSP where nodes represent UAV monitoring points ($\mathcal{P}_{\text{UAV}}$) at discrete battery levels, and edges model transitions. They define two edge types for direct UAV flights with battery depletion, and for UAV recharging while transported by the UGV. Main limitation of this approach is the battery discretization, which effectively limits the consecutive monitoring points the UAV can visit in each tour. We used 10 battery levels for their algorithm to mitigate this limitation while keeping the problem tractable. 
In \cite{yu2018algorithms}, the UGV is expected to rendezvous with the UAV at a possibly different location at the end of each flight. Accordingly, the UAV needs to hover at the rendezvous location if it arrives earlier than the UGV. This behavior might cause energy constraint violation, i.e., battery depletion while hovering if the UGV does not arrive at the rendezvous location on time. Our approach avoids this problem with constraints (\ref{eq:con2}) and (\ref{eq:con3}), even in the face of uncertainty.

For evaluation, we tested various UGV speeds ($10$, $5$, and $2.5\, m/s$), problem sizes $(n)$, and recharge ratios ($\gamma$). Results are shown in Table \ref{tab:comparison_combined} (only averages of succesful runs are reported). Based on these, \cite{yu2018algorithms} shows high energy constraint violation with low UGV speeds, failing  up to 100\% of the missions at 2.5 $m/s$ for some instances. In terms of $\tau(X^{1})$, Alg.\ref{alg:integrated_path_planning} generally matches the performance of the method in \cite{yu2018algorithms} and outperforms it in some cases. This is largely because \cite{yu2018algorithms} uses discrete battery levels, and forces the UAV to takeoff/land frequently, causing additional delays. Moreover, Alg.\ref{alg:integrated_path_planning} has significantly lower computation times, as \cite{yu2018algorithms} suffers combinatorial growth caused by battery levels.

\begin{sidewaystable}
\centering
\caption{Evaluating Computation Time, Energy Constraint Violation, and Overall Mission Time, $\tau(X^{1})$, with different UGV speeds, problem sizes ($n$), and recharge ratios ($\gamma$). \textbf{Top:} Alg.~\ref{alg:integrated_path_planning}. \textbf{Bottom:} Method in \cite{yu2018algorithms}.}
\label{tab:comparison_combined}

\vspace{0.5cm}

\begin{tabular}{|c|c|ccc|ccc|ccc|}
\hline
\multirow{2}{*}{UGV Speed} & \multirow{2}{*}{$n$} & \multicolumn{3}{c|}{\textbf{Comp. Time (s)}} & \multicolumn{3}{c|}{\textbf{Energy Const. Violation}} & \multicolumn{3}{c|}{$\boldsymbol{\tau(X^{1})}$ (s)} \\
& & $\gamma{=}0$ & $\gamma{=}1$ & $\gamma{=}2$ & $\gamma{=}0$ & $\gamma{=}1$ & $\gamma{=}2$ & $\gamma{=}0$ & $\gamma{=}1$ & $\gamma{=}2$ \\
\hline
\multirow{4}{*}{2.5} & 25 & $0.09 \pm 0.02$ & $0.09 \pm 0.01$ & $0.09 \pm 0.01$ & 0\% & 0\% & 0\% & $5800 \pm 1100$ & $5150 \pm 630$ & $7300 \pm 1000$ \\
 & 50 & $0.24 \pm 0.05$ & $0.24 \pm 0.04$ & $0.24 \pm 0.05$ & 0\% & 0\% & 0\% & $6260 \pm 930$ & $6250 \pm 680$ & $9210 \pm 790$ \\
 & 75 & $0.51 \pm 0.12$ & $0.50 \pm 0.11$ & $0.50 \pm 0.12$ & 0\% & 0\% & 0\% & $6630 \pm 710$ & $7270 \pm 390$ & $10890 \pm 570$ \\
 & 100 & $0.92 \pm 0.20$ & $0.92 \pm 0.20$ & $0.91 \pm 0.20$ & 0\% & 0\% & 0\% & $6970 \pm 560$ & $8200 \pm 540$ & $12160 \pm 640$ \\
\hline
\multirow{4}{*}{5.0} & 25 & $0.08 \pm 0.01$ & $0.08 \pm 0.01$ & $0.08 \pm 0.01$ & 0\% & 0\% & 0\% & $2870 \pm 310$ & $3810 \pm 350$ & $5420 \pm 510$ \\
 & 50 & $0.23 \pm 0.05$ & $0.23 \pm 0.05$ & $0.23 \pm 0.05$ & 0\% & 0\% & 0\% & $3550 \pm 280$ & $5250 \pm 320$ & $7620 \pm 540$ \\
 & 75 & $0.49 \pm 0.13$ & $0.49 \pm 0.12$ & $0.48 \pm 0.12$ & 0\% & 0\% & 0\% & $4030 \pm 200$ & $6360 \pm 350$ & $9310 \pm 560$ \\
 & 100 & $0.87 \pm 0.20$ & $0.87 \pm 0.20$ & $0.86 \pm 0.19$ & 0\% & 0\% & 0\% & $4420 \pm 180$ & $7300 \pm 280$ & $10720 \pm 410$ \\
\hline
\multirow{4}{*}{10.0} & 25 & $0.07 \pm 0.01$ & $0.07 \pm 0.01$ & $0.07 \pm 0.01$ & 0\% & 0\% & 0\% & $2280 \pm 190$ & $3570 \pm 350$ & $5130 \pm 540$ \\
 & 50 & $0.22 \pm 0.05$ & $0.23 \pm 0.05$ & $0.22 \pm 0.05$ & 0\% & 0\% & 0\% & $2990 \pm 100$ & $5020 \pm 150$ & $7290 \pm 200$ \\
 & 75 & $0.48 \pm 0.12$ & $0.48 \pm 0.12$ & $0.48 \pm 0.12$ & 0\% & 0\% & 0\% & $3540 \pm 110$ & $6150 \pm 120$ & $9020 \pm 150$ \\
 & 100 & $0.86 \pm 0.19$ & $0.87 \pm 0.20$ & $0.86 \pm 0.19$ & 0\% & 0\% & 0\% & $4050 \pm 150$ & $7240 \pm 230$ & $10680 \pm 330$ \\
\hline
\end{tabular}

\vspace{0.8cm}

\begin{tabular}{|c|c|ccc|ccc|ccc|}
\hline
\multirow{2}{*}{UGV Speed} & \multirow{2}{*}{$n$} &
\multicolumn{3}{c|}{\textbf{Comp. Time (s)}} &
\multicolumn{3}{c|}{\textbf{Energy Const. Violation}} &
\multicolumn{3}{c|}{$\boldsymbol{\tau(X^{1})}$ (s)} \\
& & $\gamma{=}0$ & $\gamma{=}1$ & $\gamma{=}2$
  & $\gamma{=}0$ & $\gamma{=}1$ & $\gamma{=}2$
  & $\gamma{=}0$ & $\gamma{=}1$ & $\gamma{=}2$ \\
\hline
\multirow{4}{*}{2.5}
& 25  & - & - & - & 100\% & 100\% & 100\% & - & - & - \\
& 50  & - & - & $2.76 \pm 0.16$ & 100\% & 100\% & 64\% & - & - & $9470 \pm 390$ \\
& 75  & - & $7.45 \pm 0.73$ & $7.56 \pm 0.84$ & 100\% & 88\% & 40\% & - & $9160 \pm 370$ & $13040 \pm 250$ \\
& 100 & - & $14.4 \pm 1.4$ & $17.6 \pm 2.3$ & 100\% & 72\% & 16\% & - & $11770 \pm 330$ & $16130 \pm 400$ \\
\hline
\multirow{4}{*}{5.0}
& 25  & $0.52 \pm 0.05$ & $0.51 \pm 0.05$ & $0.50 \pm 0.08$ & 80\% & 8\% & 0\% & $2710 \pm 230$ & $3790 \pm 250$ & $4440 \pm 290$ \\
& 50  & $2.69 \pm 0.38$ & $2.48 \pm 0.30$ & $2.93 \pm 0.39$ & 44\% & 0\% & 0\% & $3630 \pm 190$ & $5470 \pm 190$ & $7130 \pm 160$ \\
& 75  & $6.84 \pm 0.82$ & $6.89 \pm 0.59$ & $9.90 \pm 2.0$ & 28\% & 0\% & 0\% & $4250 \pm 140$ & $7490 \pm 170$ & $9940 \pm 120$ \\
& 100 & $14.5 \pm 2.2$ & $15.4 \pm 2.4$ & $18.2 \pm 1.5$ & 4\% & 0\% & 0\% & $5000 \pm 200$ & $9450 \pm 200$ & $12770 \pm 150$ \\
\hline
\multirow{4}{*}{10.0}
& 25  & $0.48 \pm 0.04$ & $0.48 \pm 0.04$ & $0.54 \pm 0.06$ & 0\% & 0\% & 0\% & $2230 \pm 130$ & $2910 \pm 150$ & $3490 \pm 110$ \\
& 50  & $2.23 \pm 0.20$ & $2.53 \pm 0.27$ & $2.72 \pm 0.28$ & 0\% & 0\% & 0\% & $3000 \pm 110$ & $4780 \pm 85$ & $6280 \pm 72$ \\
& 75  & $5.8 \pm 0.6$ & $7.6 \pm 1.2$ & $7.8 \pm 0.6$ & 0\% & 0\% & 0\% & $3670 \pm 92$ & $6830 \pm 62$ & $9150 \pm 47$ \\
& 100 & $13.9 \pm 1.5$ & $14.8 \pm 1.7$ & $23.4 \pm 6.0$ & 0\% & 0\% & 0\% & $4270 \pm 120$ & $8800 \pm 110$ & $11990 \pm 81$ \\
\hline
\end{tabular}

\end{sidewaystable}

\color{black}
\subsection{Ablation Studies}
\label{sec:ablation}
This section evaluates the contribution of each algorithmic component and the sensitivity of Alg.~\ref{alg:integrated_path_planning} to key design choices. We examine three aspects: (1) the step-by-step contribution of each planning stage (Section~\ref{sec:ablation_steps}); (2) the behavior of the algorithm with arbitrary initial and final team positions (Section~\ref{sec:ablation_positions}); and (3) the empirical robustness-efficiency trade-off (Section~\ref{sec:ablation_robustness}). Unless otherwise specified, the same simulation setup as Section~\ref{sec:simulations} is used throughout.

\subsubsection{Impact of Algorithmic Steps}
\label{sec:ablation_steps}
To assess the contribution of each step of RSPECT to the final overall mission time, we conduct a per-step ablation where the algorithm is run with successive steps disabled or replaced by simpler alternatives. Three conditions are compared:

\begin{enumerate}
    \item[(i)] \textbf{TSP Lower Bound (Step 1):} The monitoring points are visited in the TSP order produced from Step~1 by the UAVs, ignoring energy limitations entirely. The resulting plan is infeasible in general, but its mission time serves as a lower-bound reference: the best achievable mission time if energy were not a limiting factor.
    
    \item[(ii)] \textbf{Feasible Tours (Steps 1 + 2.1, $c \in C_i^{\mu}$):} Tours are constructed by Step~2.1. For each tour $i$, we sweep over every candidate collect point in the feasible set $C_i^{\mu}$ and record the resulting mission time. The reported value is the mean $\pm$ standard deviation across this sweep, characterizing the distribution of mission times obtainable from valid collect-point selections in $C_i^{\mu}$.
    
    \item[(iii)] \textbf{Full Pipeline (Steps 1 + 2.1 + 2.2).} This is the complete RSPECT algorithm.
\end{enumerate}

The three conditions are evaluated across number of teams $m \in \{1, 2, 3, 4, 7, 10\}$ and number of monitoring points $n \in \{25, 50, 75, 100\}$. Each $(m, n)$ cell reports the mean and 
standard deviation of overall mission times over $25$ randomly generated $\mathcal{P}_\text{UAV}$ (uniform sampling from $\mathcal{Q}^a_f$).

Table~\ref{tab:ablation_raw} reports overall mission times in seconds for each condition. Table~\ref{tab:ablation_delta} reports the price of energy feasibility (\%-increase from TSP Lower Bound to Feasible Tours) and the gain of GTSP-based collect point selection (\%-decrease from Feasible Tours to Full Pipeline).

\begin{table}[htpb]
\centering
\color{black}
\small
\setlength{\tabcolsep}{4pt}
\caption{Overall Mission Times $\max_\mu\tau(X^\mu)$ for the per-step Ablation. TSP Lower Bound (TSP LB) is the Infeasible Lower Bound; Feasible Tours (Feas. Tour) and Full Pipeline (Full) Demonstrates Different Algorithmic Steps.}
\label{tab:ablation_raw}
\begin{tabular}{ccc cc ccc}
\toprule
 & & \multicolumn{3}{c}{$\boldsymbol{m=1}$} & \multicolumn{3}{c}{$\boldsymbol{m=2}$} \\
\cmidrule(lr){3-5}\cmidrule(lr){6-8}
 & $n$ & \textbf{TSP LB} & \textbf{Feas. Tour} & \textbf{Full} & \textbf{TSP LB} & \textbf{Feas. Tour} & \textbf{Full} \\
\midrule
& 25  & $1720 \pm 140$ & $5790 \pm 640$ & $5150 \pm 630$ & $1230 \pm 100$ & $4480 \pm 610$ & $4130 \pm 580$ \\
& 50  & $2380 \pm 120$ & $6610 \pm 560$ & $6250 \pm 680$ & $1530 \pm 70$  & $4490 \pm 440$ & $4070 \pm 390$ \\
& 75  & $2890 \pm 120$ & $7730 \pm 510$ & $7270 \pm 390$ & $1810 \pm 80$  & $5030 \pm 380$ & $4630 \pm 390$ \\
& 100 & $3310 \pm 110$ & $8570 \pm 540$ & $8200 \pm 540$ & $2070 \pm 90$  & $5510 \pm 420$ & $5160 \pm 410$ \\
\midrule
 & & \multicolumn{3}{c}{$\boldsymbol{m=3}$} & \multicolumn{3}{c}{$\boldsymbol{m=4}$} \\
\cmidrule(lr){3-5}\cmidrule(lr){6-8}
 & $n$ & \textbf{TSP LB} & \textbf{Feas. Tour} & \textbf{Full} & \textbf{TSP LB} & \textbf{Feas. Tour} & \textbf{Full} \\
\midrule
& 25  & $1010 \pm 100$ & $3940 \pm 490$ & $3650 \pm 560$ & $920 \pm 80$   & $3730 \pm 390$ & $3500 \pm 480$ \\
& 50  & $1200 \pm 70$  & $4090 \pm 340$ & $3710 \pm 340$ & $1060 \pm 70$  & $3910 \pm 360$ & $3530 \pm 430$ \\
& 75  & $1380 \pm 70$  & $4280 \pm 400$ & $3870 \pm 380$ & $1190 \pm 50$  & $4050 \pm 370$ & $3690 \pm 360$ \\
& 100 & $1530 \pm 80$  & $4500 \pm 480$ & $4100 \pm 480$ & $1300 \pm 80$  & $4200 \pm 420$ & $3830 \pm 470$ \\
\midrule
 & & \multicolumn{3}{c}{$\boldsymbol{m=7}$} & \multicolumn{3}{c}{$\boldsymbol{m=10}$} \\
\cmidrule(lr){3-5}\cmidrule(lr){6-8}
 & $n$ & \textbf{TSP LB} & \textbf{Feas. Tour} & \textbf{Full} & \textbf{TSP LB} & \textbf{Feas. Tour} & \textbf{Full} \\
\midrule
& 25  & $830 \pm 30$   & $3570 \pm 180$ & $3420 \pm 210$ & $830 \pm 30$   & $3570 \pm 180$ & $3430 \pm 200$ \\
& 50  & $890 \pm 40$   & $3730 \pm 170$ & $3510 \pm 270$ & $850 \pm 20$   & $3720 \pm 180$ & $3480 \pm 220$ \\
& 75  & $970 \pm 40$   & $3860 \pm 280$ & $3580 \pm 390$ & $880 \pm 20$   & $3750 \pm 170$ & $3530 \pm 240$ \\
& 100 & $1000 \pm 50$  & $3940 \pm 300$ & $3580 \pm 440$ & $910 \pm 20$   & $3820 \pm 180$ & $3610 \pm 290$ \\
\bottomrule
\end{tabular}
\end{table}
\color{black}

\begin{table}[htpb]
\centering
\color{black}
\small
\setlength{\tabcolsep}{4pt}
\caption{Price of Energy Feasibility ($\Delta_{\text{feas}}\%$: \%-increase from TSP Lower Bound to Feasible Tours) and Gain from GTSP-based selection ($\Delta_{\text{GTSP}}\%$: \%-decrease from Feasible Tours to Full Pipeline).}
\label{tab:ablation_delta}
\begin{tabular}{ccc cc cc c} 
\toprule
 & & \multicolumn{2}{c}{$\boldsymbol{m=1}$} & \multicolumn{2}{c}{$\boldsymbol{m=2}$} & \multicolumn{2}{c}{$\boldsymbol{m=3}$} \\
\cmidrule(lr){3-4}\cmidrule(lr){5-6}\cmidrule(lr){7-8}
 & $n$ & \textbf{$\Delta_{\text{feas}}\%$} & \textbf{$\Delta_{\text{GTSP}}\%$} & \textbf{$\Delta_{\text{feas}}\%$} & \textbf{$\Delta_{\text{GTSP}}\%$} & \textbf{$\Delta_{\text{feas}}\%$} & \textbf{$\Delta_{\text{GTSP}}\%$} \\
\midrule
& 25  & $+236.6\%$ & $-11.1\%$ & $+264.2\%$ & $-7.8\%$ & $+290.1\%$ & $-7.4\%$ \\
& 50  & $+177.7\%$ & $-5.5\%$  & $+193.5\%$ & $-9.4\%$ & $+240.8\%$ & $-9.3\%$ \\
& 75  & $+167.5\%$ & $-5.9\%$  & $+177.9\%$ & $-8.0\%$ & $+210.1\%$ & $-9.6\%$ \\
& 100 & $+158.9\%$ & $-4.3\%$  & $+166.2\%$ & $-6.4\%$ & $+194.1\%$ & $-8.9\%$ \\
\midrule
 & & \multicolumn{2}{c}{$\boldsymbol{m=4}$} & \multicolumn{2}{c}{$\boldsymbol{m=7}$} & \multicolumn{2}{c}{$\boldsymbol{m=10}$} \\
\cmidrule(lr){3-4}\cmidrule(lr){5-6}\cmidrule(lr){7-8}
 & $n$ & \textbf{$\Delta_{\text{feas}}\%$} & \textbf{$\Delta_{\text{GTSP}}\%$} & \textbf{$\Delta_{\text{feas}}\%$} & \textbf{$\Delta_{\text{GTSP}}\%$} & \textbf{$\Delta_{\text{feas}}\%$} & \textbf{$\Delta_{\text{GTSP}}\%$} \\
\midrule
& 25  & $+305.4\%$ & $-6.2\%$ & $+330.1\%$ & $-4.2\%$ & $+330.1\%$ & $-3.9\%$ \\
& 50  & $+268.9\%$ & $-9.7\%$ & $+319.1\%$ & $-5.9\%$ & $+337.6\%$ & $-6.5\%$ \\
& 75  & $+240.3\%$ & $-8.9\%$ & $+297.9\%$ & $-7.3\%$ & $+326.1\%$ & $-5.9\%$ \\
& 100 & $+223.1\%$ & $-8.8\%$ & $+294.0\%$ & $-9.1\%$ & $+319.8\%$ & $-5.5\%$ \\
\bottomrule
\end{tabular}
\end{table}

\color{black}
Two observations follow from Tables~\ref{tab:ablation_raw} and~\ref{tab:ablation_delta}. First, the 
energy-feasibility cost $\Delta_{\text{feas}}$ is large (around 150-340\% across all $(m, n)$), confirming that the explicit feasibility construction in Step~2.1 is responsible for the dominant share of mission time relative to a non-energy-aware tour. Second, $\Delta_{\text{GTSP}}$ is consistently negative across all $(m, n)$ tested (3-11\%), indicating that the GTSP-based selection in Step~2.2 consistently finds collect-point combinations that lie below the mean of mission times obtainable from valid choices in $C_i^{\mu}$. The $\Delta_{\text{GTSP}}$ gap is more pronounced at lower $m$ and moderate $n$, where the GTSP solver has more inter-tour coupling to exploit; at higher $m$ where each team has fewer tours, the gap narrows but remains consistently in RSPECT's favor.

\subsubsection{Sensitivity to Initial and Final Team Positions}
\label{sec:ablation_positions}

To assess the behavior of Alg.~\ref{alg:integrated_path_planning} to the spatial configuration 
of team start and end positions, we test under a configuration where $p_o^\mu$ and $p_f^\mu$ are sampled uniformly from $\mathcal{Q}_f^g$ for each team and each instance. 
This complements the scalability analysis in Section~\ref{subsec:scalability}, which used fixed $p_o^\mu$ and $p_f^\mu$. Evaluations are made across number of teams $m \in \{1, 2, 3, 4, 7, 10\}$ and number of monitoring points $n \in \{25, 50, 75, 100\}$. Each $(m, n)$ cell reports the mean and 
standard deviation of mission times over $25$ randomly generated $\mathcal{P}_\text{UAV}$ (uniform sampling from $\mathcal{Q}^a_f$).

Fig.~\ref{fig:ablation_positions} illustrates the algorithm output for a representative instance under this configuration with $m=4$ teams. Fig. \ref{fig:ablation_positions_ascending} illustrates the plans for the same representative configuration with ascending number of teams i.e., $m \in \{1,2,3,4\}$. Results across all tested configurations are summarized in 
Table~\ref{tab:ablation_positions}.

\begin{figure}[htpb]
    \centering
    \color{black}
    \includegraphics[width=\linewidth]{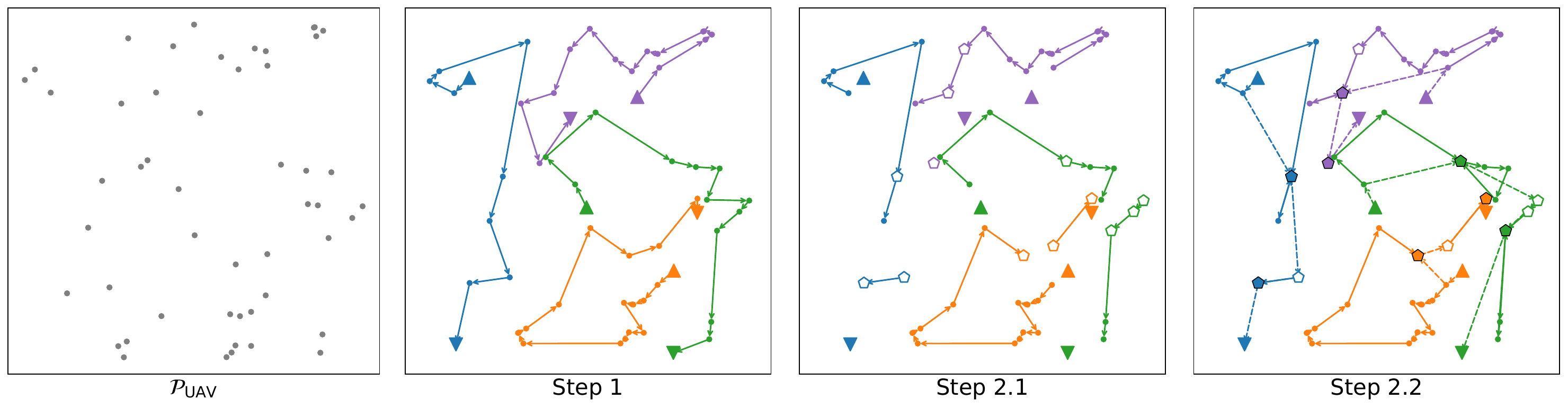}
    \caption{Output of Alg.~\ref{alg:integrated_path_planning} for 
    a representative instance under randomly generated 
    $p_o^\mu$/$p_f^\mu$ with $m=4$ teams (top view). Left: 
    $\mathcal{P}_{\text{UAV}}$ (gray points). Step~1: Assigned points and sequences, with color-coded
    upward/downward triangles showing $p_o^\mu$/$p_f^\mu$ for each team. 
    Step~2.1: Energy-feasible tours with feasible collect point candidates (hollow pentagons) 
    and Step 2.2: final plan with selected collect points (filled pentagons). Color-coded dashed lines are UGV paths.}
    \label{fig:ablation_positions}
\end{figure}

\begin{figure}
    \centering
    \color{black}
    \includegraphics[width=1.0\linewidth]{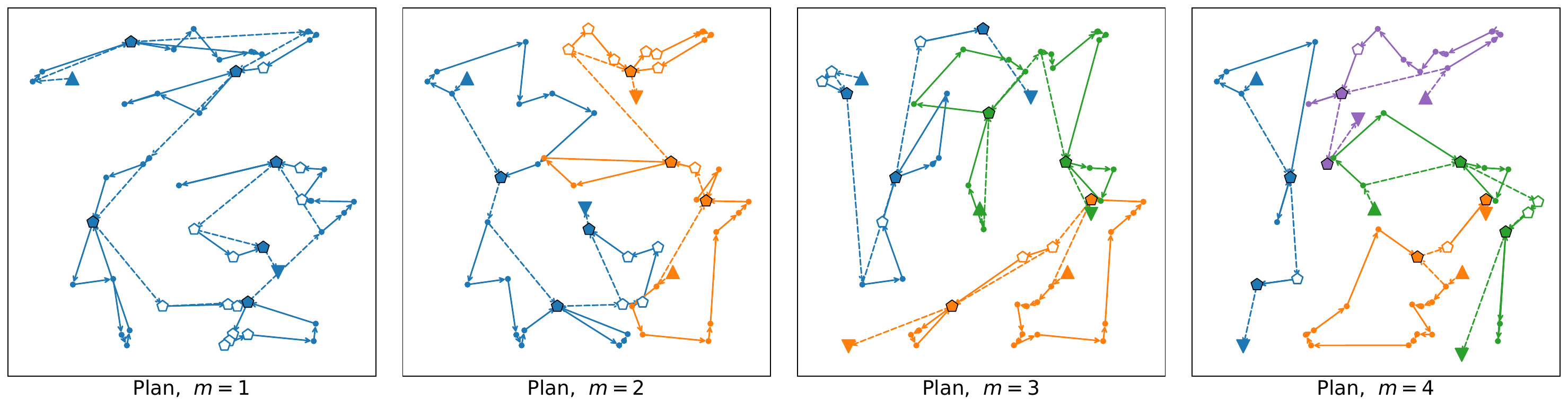}
    \caption{Output of Alg.~\ref{alg:integrated_path_planning} for 
    a representative instance under randomly generated 
    $p_o^\mu$/$p_f^\mu$ with $m\in \{1,2,3,4\}$ teams (top view). Teams' plans are color coded, hollow pentagons are feasible collect points, filled pentagons are selected collect points, and dashed lines are the UGV paths.}
    \label{fig:ablation_positions_ascending}
\end{figure}

Computation times are consistent with those reported in 
Table~\ref{tab:heuristic_performance}, confirming that the 
$\mathcal{O}(mn + n^3)$ complexity is independent of the team 
initial/final position configuration. Mission times are also comparable across 
problem sizes and team counts, demonstrating that 
Alg.~\ref{alg:integrated_path_planning} generalizes well to 
arbitrary start and end configurations without any modification to 
the algorithm.

\begin{table}[htpb]
\centering
\color{black}
\small
\setlength{\tabcolsep}{4pt}
\caption{Computation Time and Overall Mission Time $\max_\mu\tau(X^\mu)$ Under Randomly Generated Team Initial ($p_o^\mu$) and Final Positions ($p_o^\mu$), for different Number of Teams $m$ and Number of Monitoring Points $n$.}
\label{tab:ablation_positions}
\begin{tabular}{ccc cc cc}
\toprule
 & & \multicolumn{2}{c}{$\boldsymbol{m=1}$} & \multicolumn{2}{c}{$\boldsymbol{m=2}$} \\
\cmidrule(lr){3-4}\cmidrule(lr){5-6}
 & $n$ & \textbf{Comp. (s)} & $\boldsymbol{\max_{\mu}\tau(X^{\mu})}$ \textbf{(s)}
       & \textbf{Comp. (s)} & $\boldsymbol{\max_{\mu}\tau(X^{\mu})}$ \textbf{(s)} \\
\midrule
& 25  & $0.08 \pm 0.01$ & $4830 \pm 880$ & $0.26 \pm 0.01$ & $3060 \pm 760$ \\
& 50  & $0.21 \pm 0.03$ & $6120 \pm 640$ & $1.03 \pm 0.05$ & $3730 \pm 630$ \\
& 75  & $0.45 \pm 0.08$ & $6860 \pm 430$ & $2.91 \pm 0.20$ & $4030 \pm 470$ \\
& 100 & $0.82 \pm 0.13$ & $7810 \pm 400$ & $6.83 \pm 0.38$ & $4590 \pm 430$ \\
\midrule
 & & \multicolumn{2}{c}{$\boldsymbol{m=3}$} & \multicolumn{2}{c}{$\boldsymbol{m=4}$} \\
\cmidrule(lr){3-4}\cmidrule(lr){5-6}
 & $n$ & \textbf{Comp. (s)} & $\boldsymbol{\max_{\mu}\tau(X^{\mu})}$ \textbf{(s)}
       & \textbf{Comp. (s)} & $\boldsymbol{\max_{\mu}\tau(X^{\mu})}$ \textbf{(s)} \\
\midrule
& 25  & $0.30 \pm 0.02$ & $2380 \pm 510$ & $0.36 \pm 0.02$ & $2170 \pm 590$ \\
& 50  & $0.92 \pm 0.04$ & $2950 \pm 560$ & $0.93 \pm 0.03$ & $2630 \pm 620$ \\
& 75  & $2.27 \pm 0.12$ & $3110 \pm 490$ & $2.06 \pm 0.09$ & $2590 \pm 440$ \\
& 100 & $4.83 \pm 0.34$ & $3480 \pm 480$ & $4.02 \pm 0.15$ & $2790 \pm 490$ \\
\midrule
 & & \multicolumn{2}{c}{$\boldsymbol{m=7}$} & \multicolumn{2}{c}{$\boldsymbol{m=10}$} \\
\cmidrule(lr){3-4}\cmidrule(lr){5-6}
 & $n$ & \textbf{Comp. (s)} & $\boldsymbol{\max_{\mu}\tau(X^{\mu})}$ \textbf{(s)}
       & \textbf{Comp. (s)} & $\boldsymbol{\max_{\mu}\tau(X^{\mu})}$ \textbf{(s)} \\
\midrule
& 25  & $0.52 \pm 0.04$ & $2040 \pm 440$ & $0.63 \pm 0.04$ & $1770 \pm 180$ \\
& 50  & $1.14 \pm 0.05$ & $2170 \pm 490$ & $1.36 \pm 0.04$ & $2080 \pm 450$ \\
& 75  & $2.10 \pm 0.07$ & $2410 \pm 500$ & $2.38 \pm 0.08$ & $2240 \pm 490$ \\
& 100 & $3.55 \pm 0.13$ & $2390 \pm 550$ & $3.75 \pm 0.14$ & $2100 \pm 430$ \\
\bottomrule
\end{tabular}
\end{table}
\color{black}
\color{black}
\subsubsection{Sensitivity to Robustness Parameters}
\label{sec:ablation_robustness}

To empirically evaluate the robustness-efficiency trade-off, we vary the robustness parameters 
$\delta_a, \delta_g \in \{0, 50, 100, 200\}\,\text{s}$ and measure the resulting $\max_\mu\tau(X^\mu)$ for $n=100$ and $m \in \{1, 2, 3, 4, 7, 10\}$ with 25 randomly generated $\mathcal{P}_{\text{UAV}}$ (uniform sampling from $\mathcal{Q}^a_f$). Recall that $\delta_a$ and $\delta_g$ impose lower bounds 
on the tour robustness of every tour (Definition~\ref{def1}): larger values force the planner to reserve more slack in each tour, shrinking the feasible space and potentially increasing mission time. The 
parameter configuration $\delta_a = \delta_g = 0$ corresponds to planning without any robustness margin.

Results are summarized in Table~\ref{tab:ablation_robustness}. As expected, mission time increases with the slack level for all configurations. However, the increase is modest: even at $\max(\delta_a,\delta_g) = 200\,\text{s}$ (a third of the maximum flight time $\overline{\tau_a} = 600\,\text{s}$)
mission time increases by at most $14\%$ for $m=1$ and less than $5\%$ for $m \geq 7$ relative to the no-slack baseline. This confirms that substantial robustness margins can be achieved at a small cost in 
mission efficiency, particularly when multiple teams are available to share the workload. 

\begin{table}[htpb]
\centering
\color{black}
\small
\setlength{\tabcolsep}{4pt}
\caption{Overall Mission Time, $\max_\mu\tau(X^\mu)$ for Different 
Robustness Parameters Configurations, $\max(\delta_a,\delta_g)$ and Number of 
Teams $m$, with $n=100$.}
\label{tab:ablation_robustness}
\begin{tabular}{cc cccc}
\toprule
$m$ &\text{$\max(\delta_a,\delta_g)$=0 s} & \text{$\max(\delta_a,\delta_g)$=50 s} & 
\text{$\max(\delta_a,\delta_g)$=100 s} & \text{$\max(\delta_a,\delta_g)$=200 s} \\
\midrule
1  & $8200 \pm 540$ & $8370 \pm 520$ & $8710 \pm 400$ & $9380 \pm 680$ \\
2  & $5160 \pm 410$ & $5400 \pm 480$ & $5530 \pm 510$ & $6120 \pm 600$ \\
3  & $4100 \pm 480$ & $4330 \pm 420$ & $4420 \pm 470$ & $4810 \pm 450$ \\
4  & $3830 \pm 470$ & $3960 \pm 550$ & $4200 \pm 610$ & $4500 \pm 490$ \\
7  & $3580 \pm 440$ & $3760 \pm 420$ & $3810 \pm 380$ & $3880 \pm 240$ \\
10 & $3610 \pm 290$ & $3660 \pm 240$ & $3650 \pm 280$ & $3690 \pm 180$ \\
\bottomrule
\end{tabular}
\end{table}

\color{black}

\color{black}

\section{Experiments}
Experiments were performed with two teams, each consisting of a Crazyflie 2.1 (UAV) and a Turtlebot4 (UGV), to demonstrate the applicability of the proposed approach in a multi-team setting (see Fig. \ref{fig:rspect_experiment_snapshot}). An OptiTrack motion capture system covering a volume of $5\,\text{m} \times 5\,\text{m} \times 3\,\text{m}$ was used for real-time localization. The software architecture was built on ROS2, with motion capture streaming real-time pose data for all robots. UGVs navigated using motion capture localization, and UAVs were controlled via takeoff, land, and waypoint navigation commands. A Kalman filter handled UAV state estimation and a Mellinger controller managed flight control. 

\begin{figure}[htpb]
    \centering
    \color{black}
    \includegraphics[width=1.0\linewidth]{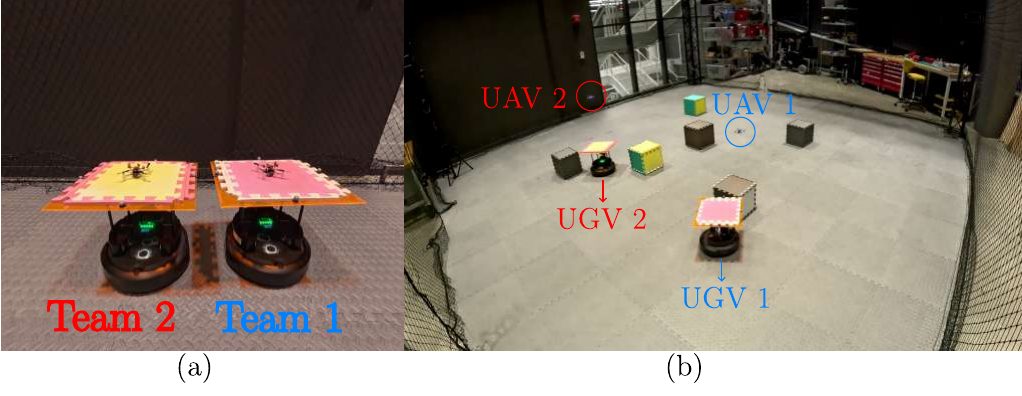}
    \caption{Experimental setup. (a) Both UAV-UGV teams, each consisting a Crazyflie 2.1 UAV resting on a Turtlebot4 UGV. (b) A snapshot of the experiment, with unknown ground obstacles (colored boxes) placed in the environment.}
    \label{fig:rspect_experiment_snapshot}
\end{figure}

The maximum flight time was set to $\overline{\tau_a} = 60\,\text{s}$, with robustness parameters $\delta_a = \delta_g = 15\,\text{s}$. A fixed recharging time of $5$ s was used.

The scenario had 20 monitoring points in $\mathcal{P}_{\text{UAV}}$. Alg.\ref{alg:integrated_path_planning} was used to generate an offline plan without any knowledge of the obstacles. Each teams' plan had 2 tours. The offline plan, its overlay with the unknown environment, and the actual online execution are shown in Fig.\ref{fig:rspect_experiment_plan}(a), (b), and (c) respectively.

\begin{figure}[htpb]
    \centering
    \color{black}
    \includegraphics[width=1.0\linewidth]{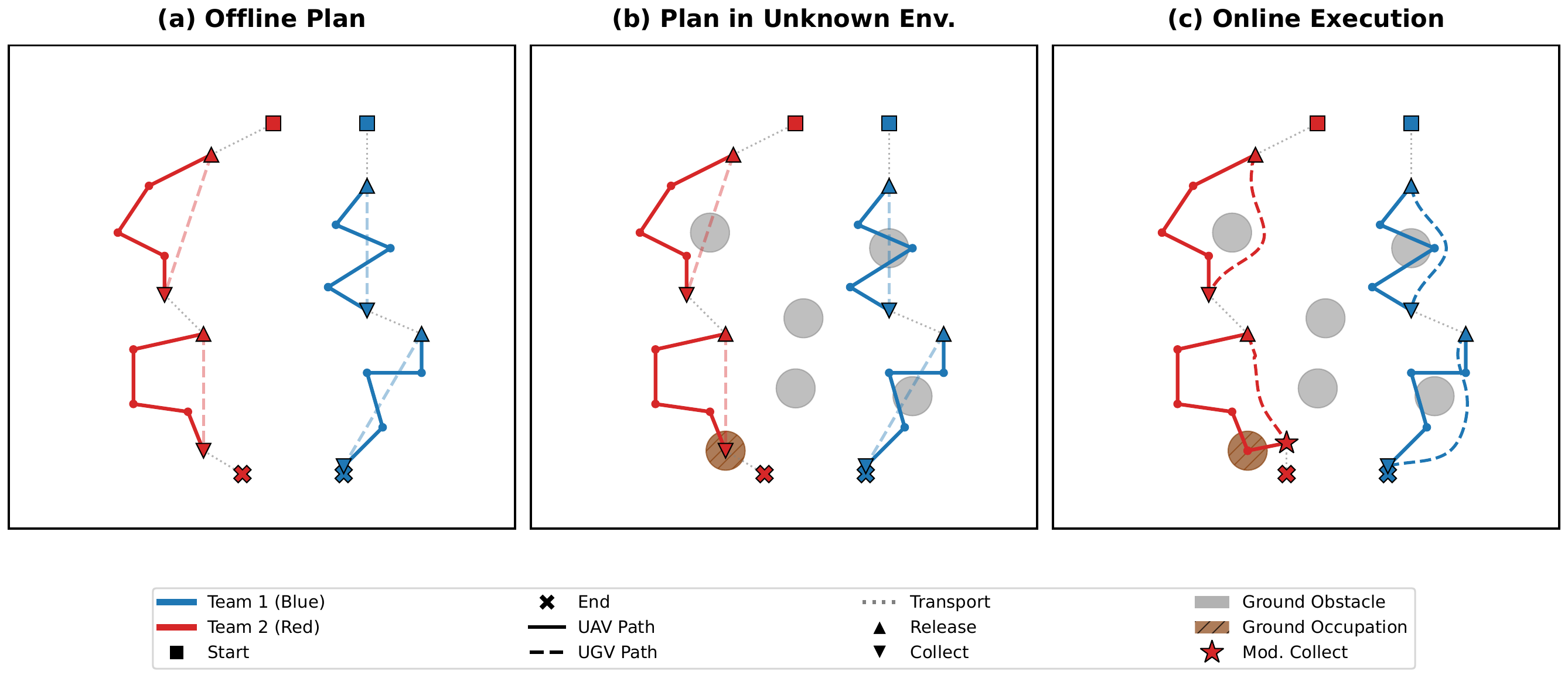}
    \caption{Plan of the two-team mission. (a) Offline plan generated by Alg. \ref{alg:integrated_path_planning} prior to execution in an obstacle-free environment. (b) The same offline plan overlaid with the unknown obstacles discovered during execution. (c) Online execution showing UGV detours around unknown obstacles and the collect point relocation event for Team 2's second tour: the planned collect point (red downward triangle) was occupied and relocated to an alternative (red $\star$),}
    \label{fig:rspect_experiment_plan}
\end{figure}

As the teams followed the plan, unknown ground obstacles were encountered by the UGVs, causing path deviations. In addition, Team 2's planned collect point in Tour 2 was found to be occupied during execution. Following Corollary \ref{cor}, an alternative collect point was selected.

Table~\ref{tab:experiment_summary} summarizes the per-tour recorded quantities for both teams. All realized UAV and UGV travel times remain within $\overline{\tau_a} = 60\,\text{s}$.

The realized tour robustness values $\hat{\delta}_a^{\mu,i}$ and $\hat{\delta}_g^{\mu,i}$ in Table~\ref{tab:experiment_summary} illustrate the role of the design parameters $\delta_a$ and $\delta_g$. 
By constraints \eqref{eq:con2}-\eqref{eq:con3}, the offline plan $X$ is constructed such that every tour satisfies $\hat{\delta}_a^{\mu,i}(X) \geq \delta_a$ and $\hat{\delta}_g^{\mu,i}(X) \geq \delta_g$ (see Def. \ref{def1}); the design parameters impose lower bounds on the per-tour robustness at planning time. At execution time, the realized values $\hat{\delta}_a^{\mu,i}$ and $\hat{\delta}_g^{\mu,i}$ can be either above or below the design margin depending on how much of the budget each disturbance consumed.

For $(\mu, i) \in \{(1,1), (1,2), (2,1)\}$, the realized $\hat{\delta}_a^{\mu,i}$ exceeds the design $\delta_a = 15\,\text{s}$, indicating that the actual UAV traversal consumed less than the design 
budget; similarly, all realized $\hat{\delta}_g^{\mu,i}$ values exceed $\delta_g = 15\,\text{s}$ by a wide margin. These tours demonstrate that \eqref{eq:con2}-\eqref{eq:con3} act as lower bounds rather than 
exact predictions of the realized robustness.

For $(\mu, i) = (2, 2)$, the planned collect point was found to be occupied during execution, and the local adjustment mechanism of Corollary \ref{cor} was invoked. Corollary \ref{cor} bounds the total release and collect deviations; in our implementation, this budget is split equally between release and collect, giving a per-point displacement bound of $v_H^{\text{avg}} \cdot \hat{\delta}_a^{2,2}(X) / 2 = 3.20\,\text{m}$, where $\hat{\delta}_a^{2,2}(X) \approx 16\,\text{s}$ is evaluated on the offline plan. The realized displacement was $0.51\,\text{m}$, well within this bound. The relocation itself consumed approximately $9\,\text{s}$ of additional UAV flight time, which is why the realized $\hat{\delta}_a^{2,2} = 7\,\text{s}$ in Table~\ref{tab:experiment_summary} is below the design $\delta_a = 15\,\text{s}$.

\begin{table}[htpb]
\centering
\color{black}
\caption{Experiment logs. UAV time and UGV time are the realized 
traversal times under execution. $\hat{\delta}_a^{\mu,i}$ and 
$\hat{\delta}_g^{\mu,i}$ are the realized per-tour robustness values.}
\label{tab:experiment_summary}
\begin{tabular}{|c|c|c|c|c|}
\hline
\textbf{$(\mu, i)$} & \textbf{UAV time (s)} & \textbf{UGV time (s)} & 
\textbf{$\hat{\delta}_a^{\mu,i}$ (s)} & \textbf{$\hat{\delta}_g^{\mu,i}$ (s)} \\ \hline
$(1, 1)$ & $42$ & $11.9$ & $18$ & $48.1$ \\ \hline
$(1, 2)$ & $43$ & $14.7$ & $17$ & $45.3$ \\ \hline
$(2, 1)$ & $43$ & $13.3$ & $17$ & $46.7$ \\ \hline
$(2, 2)$ & $53$ & $8.5$  & $7$  & $51.5$ \\ \hline
\end{tabular}
\end{table}

The experiment video is available in the supplementary materials and online.\footnote{Video link: \url{https://youtu.be/jQkGfwVTdBc}}

\color{black}

\color{black}
\section{Conclusion}
\label{sec:conclusion}

We presented RSPECT, a scalable and efficient heuristic algorithm for robust  planning of energy-constrained UAVs and mobile charging stations (UGVs) to perform long-horizon aerial monitoring missions. We provided theoretical results on the complexity and robustness of our approach. To our knowledge, this combination of formal complexity and robustness guarantees has not been previously achieved for this type of energy-aware UAV-UGV planning problems. These theoretical results were also supported by simulations that demonstrated the performance of our approach and compared it to some standard optimization techniques as well as other related works (\cite{ropero2019terra,yu2018algorithms,maini2019coverage}). Experiments were also provided for validation. 

\color{black}
Future work includes extending our approach to monitoring missions with more generic temporal constraints (e.g., visiting the air points periodically, in a specific order or within specific time windows). In addition, as an alternative to the deterministic robustness margins used in \eqref{opt_problem}, we plan to explore stochastic formulations where the energy-feasibility imposed by \eqref{eq:con2}, \eqref{eq:con3} is instead encoded as chance constraints. Furthermore, we also plan to extend our approach to other long-horizon missions such as patrolling, SLAM, and search. Moreover, we plan to enhance our approach with more on-board autonomy (e.g., for energy monitoring/harvesting, reactive planning and coordination, dynamic environment/obstacles). Finally, we plan to incorporate learning-based approaches for improving the overall performance in stochastic scenarios.
\color{black}

\section*{Data Availability}
The authors declare that no datasets were generated or analyzed during this study.

\color{black}

\section*{Nomenclature}
\begin{longtable}{p{0.18\linewidth} p{0.62\linewidth} p{0.12\linewidth}}
\hline
\textbf{Symbol} & \textbf{Definition} & \textbf{First use} \\
\hline
\endfirsthead
\hline
\textbf{Symbol} & \textbf{Definition} & \textbf{First use} \\
\hline
\endhead
\hline
\endfoot

\multicolumn{3}{l}{\textit{Environment}} \\ \hline
$\mathcal{Q}$ & Environment & Eq. \eqref{env} \\
$\mathcal{Q}_f$ & Feasible space & Sec. \ref{sec:problem_formulation} \\
$\mathcal{Q}_f^g$ & Feasible ground space ($z=0$ slice of $\mathcal{Q}_f$) & Sec. \ref{sec:problem_formulation}\\
$\mathcal{Q}_f^a$ & Feasible air space ($\mathcal{Q}_f \setminus \mathcal{Q}_f^g$) & Sec.~\ref{sec:problem_formulation} \\
$\tilde{\mathcal{Q}}_f$ & Actual feasible space & Sec.~\ref{subsec:robustness} \\
$\tilde{\mathcal{Q}}_f^g$ & Actual feasible ground space & Sec.~\ref{subsec:robustness} \\
$\ell(\cdot,\cdot)$ & Shortest-path distance function within $\mathcal{Q}_f$ & Sec.~\ref{sec:problem_formulation} \\
$\underline{z}$ & Minimum UAV flight altitude (Assumption~1) & Sec.~\ref{sec:vtol} \\ \hline

\multicolumn{3}{l}{\textit{Mission}} \\ \hline
$\mathcal{P}_{\text{UAV}}$ & Set of $n$ aerial monitoring points & Sec. \ref{sec:problem_formulation} \\
$n$ & Number of monitoring points ($n = |\mathcal{P}_{\text{UAV}}|$) & Sec. \ref{sec:problem_formulation} \\
$m$ & Number of UAV--UGV teams & Sec. \ref{sec:problem_formulation} \\
$\mu$ & Team index ($\mu \in \{1,\ldots,m\}$) & Sec. \ref{sec:problem_formulation} \\
$p_o^\mu$ & Initial (start) position of team $\mu$ & Sec. \ref{sec:problem_formulation} \\
$p_f^\mu$ & Final (end) position of team $\mu$ & Sec. \ref{sec:problem_formulation} \\
$\mathcal{P}_{\text{UAV}}^\mu$ & Monitoring points assigned to team $\mu$ & Sec.~\ref{subsec:proposed_algorithm} \\
$n_\mu$ & Number of points assigned to team $\mu$ ($n_\mu = |\mathcal{P}_{\text{UAV}}^\mu|$) & Sec.~\ref{subsec:proposed_algorithm} \\ \hline

\multicolumn{3}{l}{\textit{Mission plan}} \\ \hline
$X^\mu$ & Mission plan matrix for team $\mu$ & Sec.~\ref{sec:problem_formulation} \\
$X^\mu_i$ & $i$-th row of $X^\mu$ (tour $i$ of team $\mu$) & Sec.~\ref{sec:problem_formulation} \\
$X^\mu_{i,j}$ & $j$-th entry of tour $i$ of team $\mu$ & Sec.~\ref{sec:problem_formulation} \\
$X^\mu_{i,1}$ & Release point of tour $i$ of team $\mu$ & Sec.~\ref{sec:problem_formulation} \\
$X^\mu_{i,n_\mu+2}$ & Collect point of tour $i$ of team $\mu$ & Sec.~\ref{sec:problem_formulation} \\
$X$ & Overall mission plan $\{X^1,\ldots,X^m\}$ & Sec.~\ref{sec:problem_formulation} \\
$\tilde{X}^\mu$ & Modified plan for team $\mu$ & Sec.~\ref{subsec:robustness} \\
$\tilde{X}$ & Modified overall plan $\{\tilde{X}^1,\ldots,\tilde{X}^m\}$ & Sec.~\ref{subsec:robustness} \\ \hline

\multicolumn{3}{l}{\textit{Time functions}} \\ \hline
$\tau_a(\cdot)$ & UAV travel time function & Sec.~\ref{sec:problem_formulation} \\
$\tau_g(\cdot,\cdot)$ & UGV travel time function & Sec.~\ref{sec:problem_formulation} \\
$\tau_c(\cdot)$ & UAV recharge time function & Sec.~\ref{sec:problem_formulation} \\
$\tau(X^\mu)$ & Total execution time of team $\mu$'s plan & Eq.~(\ref{eq:objective_function}) \\
$\overline{\tau_a}$ & Maximum UAV flight time (endurance limit) & Sec.~\ref{sec:problem_formulation} \\
$\gamma$ & Recharge ratio & Eq.~\eqref{eq:linear_charging_equation} \\
$\tilde{\tau}_a(\cdot)$ & Actual UAV traversal time & Sec.~\ref{subsec:robustness}\\
$\tilde{\tau}_g(\cdot,\cdot)$ & Actual UGV traversal time  & Sec.~\ref{subsec:robustness} \\
$\tau_{\text{TL}}$ & UAV takeoff/landing time (constant under Assmp.~\ref{assump1}) & Sec.~\ref{sec:vtol} \\ \hline

\multicolumn{3}{l}{\textit{Robustness}} \\ \hline
$\delta_a$ & Aerial robustness parameter & Eq.~(\ref{eq:con2}) \\
$\delta_g$ & Ground robustness parameter  & Eq.~(\ref{eq:con3}) \\
$\hat{\delta}_a^{\mu,i}(X)$ & Tour robustness of aerial leg & Def.~\ref{def1} \\
$\hat{\delta}_g^{\mu,i}(X)$ & Tour robustness of ground leg & Def.~\ref{def1} \\
$v_H^{\text{avg}}$ & Average UAV horizontal speed & Cor.~\ref{cor} \\
$v_g^{\text{avg}}$ & Average UGV ground speed & Cor.~\ref{cor} \\
$\ell_{g,i}^\mu$ & Length of shortest feasible ground path between release and collect of team $\mu$'s tour $i$ under $X$ & Cor.~\ref{cor} \\
$\tilde{\ell}_{g,i}^\mu$ & Length of shortest feasible ground path between release and collect of team $\mu$'s tour $i$ under $\tilde{X}$ & Cor.~\ref{cor} \\ 
$\Delta(\cdot,\cdot)$ & Minimum UAV travel time between two points at fixed altitude & Cor.~\ref{cor}\\\hline

\multicolumn{3}{l}{\textit{Algorithm}} \\ \hline
$\mathcal{S}^\mu$ & TSP sequence for team $\mu$ (permutation of $\mathcal{P}_{\text{UAV}}^\mu$) & Sec.~\ref{subsec:proposed_algorithm} \\
$\mathcal{C}_i^\mu$ & Set of feasible collect-point candidates for tour $i$ of team $\mu$ & Sec.~\ref{subsec:proposed_algorithm} \\
$\pi(\cdot)$ & Projection function onto $\mathcal{Q}_f^g$ & Sec.~\ref{subsec:proposed_algorithm} \\
$\sigma$ & GLNS runtime constant (time limit $= \sigma n_\mu^3$) & Sec.~\ref{subsec:proposed_algorithm} \\
 \hline

\end{longtable}

\color{black}
\bibliography{references}

@Article{manfreda2018uavmonitoring,
AUTHOR = {Manfreda, Salvatore and McCabe, Matthew F. and Miller, Pauline E. and Lucas, Richard and Pajuelo Madrigal, Victor and Mallinis, Giorgos and Ben Dor, Eyal and Helman, David and Estes, Lyndon and Ciraolo, Giuseppe and Müllerová, Jana and Tauro, Flavia and De Lima, M. Isabel and De Lima, João L. M. P. and Maltese, Antonino and Frances, Felix and Caylor, Kelly and Kohv, Marko and Perks, Matthew and Ruiz-Pérez, Guiomar and Su, Zhongbo and Vico, Giulia and Toth, Brigitta},
TITLE = {On the Use of Unmanned Aerial Systems for Environmental Monitoring},
JOURNAL = {Remote Sensing},
VOLUME = {10},
YEAR = {2018},
NUMBER = {4},
ARTICLE-NUMBER = {641},
URL = {https://www.mdpi.com/2072-4292/10/4/641},
ISSN = {2072-4292},
DOI = {10.3390/rs10040641}
}

@article{boccardo2015uav,
  title={UAV deployment exercise for mapping purposes: Evaluation of emergency response applications},
  author={Boccardo, Piero and Chiabrando, Filiberto and Dutto, Furio and Giulio Tonolo, Fabio and Lingua, Andrea},
  journal={Sensors},
  volume={15},
  number={7},
  pages={15717--15737},
  year={2015},
  publisher={mdpi}
}

@article{tokekar2016sensor,
  title={Sensor planning for a symbiotic UAV and UGV system for precision agriculture},
  author={Tokekar, Pratap and Vander Hook, Joshua and Mulla, David and Isler, Volkan},
  journal={IEEE transactions on robotics},
  volume={32},
  number={6},
  pages={1498--1511},
  year={2016},
  publisher={IEEE}
}

@inproceedings{asarkaya2021temporal,
  title={Temporal-logic-constrained hybrid reinforcement learning to perform optimal aerial monitoring with delivery drones},
  author={Asarkaya, Ahmet Semi and Aksaray, Derya and Yaz{\i}c{\i}o{\u{g}}lu, Yasin},
  booktitle={2021 International Conference on Unmanned Aircraft Systems (ICUAS)},
  pages={285--294},
  year={2021},
  organization={IEEE}
}

@inproceedings{shi2022risk,
  title={Risk-aware uav-ugv rendezvous with chance-constrained markov decision process},
  author={Shi, Guangyao and Karapetyan, Nare and Asghar, Ahmad Bilal and Reddinger, Jean-Paul and Dotterweich, James and Humann, James and Tokekar, Pratap},
  booktitle={2022 IEEE 61st Conference on Decision and Control (CDC)},
  pages={180--187},
  year={2022},
  organization={IEEE}
}

@inproceedings{Mulgaonkar2014AutonomousCharging,
author = {Yash Mulgaonkar and Vijay Kumar},
title = {{Autonomous charging to enable long-endurance missions for small aerial robots}},
volume = {9083},
booktitle = {Micro- and Nanotechnology Sensors, Systems, and Applications VI},
pages = {90831S},
keywords = {Autonomous Recharging, Long Endurance Missions, Persistent Surveillance, MAVs, Quadrotors},
year = {2014},
doi = {10.1117/12.2051111},
}

@article{Nigam2012ControlMultipleUAVs,
  title={Control of multiple UAVs for persistent surveillance: Algorithm and flight test results},
  author={Nigam, Nikhil and Bieniawski, Stefan and Kroo, Ilan and Vian, John},
  journal={IEEE transactions on control systems technology},
  volume={20},
  number={5},
  pages={1236--1251},
  year={2011},
  publisher={IEEE},
  doi = {10.1109/TCST.2011.2167331}
}

@ARTICLE{Lin2022,
  author={Lin, Xiaoshan and Yazıcıoğlu, Yasin and Aksaray, Derya},
  journal={IEEE Robotics and Automation Letters}, 
  title={Robust Planning for Persistent Surveillance With Energy-Constrained UAVs and Mobile Charging Stations}, 
  year={2022},
  volume={7},
  number={2},
  pages={4157-4164},
  keywords={Surveillance;Charging stations;Planning;Partitioning algorithms;Trajectory;Approximation algorithms;Fuels;Multi-robot systems;path planning for multiple mobile robots or agents},
  doi={10.1109/LRA.2022.3146938}}

@article{ham2016visual,
  title={Visual monitoring of civil infrastructure systems via camera-equipped Unmanned Aerial Vehicles (UAVs): a review of related works},
  author={Ham, Youngjib and Han, Kevin K and Lin, Jacob J and Golparvar-Fard, Mani},
  journal={Visualization in Engineering},
  doi = {10.1186/s40327-015-0029-z},
  volume={4},
  number={1},
  pages={1},
  year={2016},
  publisher={Springer}
}

@article{ren2019uavmonitoring,
author = {Ren, He and Zhao, Yanling and Xiao, Wu and Hu, Zhenqi},
year = {2019},
month = {08},
pages = {},
title = {A review of UAV monitoring in mining areas: current status and future perspectives},
volume = {6},
journal = {International Journal of Coal Science \& Technology},
doi = {10.1007/s40789-019-00264-5}
}

@INPROCEEDINGS{bacanli2021charging,
  author={Bacanli, Salih Safa and Elgeldawi, Enas and Turgut, Damla},
  booktitle={IEEE International Conference on Communications}, 
  title={Charging Station Placement in Unmanned Aerial Vehicle Aided Opportunistic Networks}, 
  year={2021},
  volume={},
  number={},
  pages={1-5},
  keywords={Spirals;Shape;Conferences;Clustering algorithms;Charging stations;Unmanned aerial vehicles;Batteries},
  doi={10.1109/ICC42927.2021.9500848}
}

@article{ropero2019terra,
title = {TERRA: A path planning algorithm for cooperative UGV–UAV exploration},
journal = {Engineering Applications of Artificial Intelligence},
volume = {78},
pages = {$260-272$},
year = {2019},
issn = {$0952-1976$},
doi = {https://doi.org/10.1016/j.engappai.2018.11.008},
url = {https://www.sciencedirect.com/science/article/pii/S095219761830246X},
author = {Fernando Ropero and Pablo Muñoz and María D. R-Moreno},
keywords = {Exploration, Cooperation, Routing, Heterogeneous robots},

}

@article{yu2018algorithms,
  title={Algorithms and experiments on routing of unmanned aerial vehicles with mobile recharging stations},
  author={Yu, Kevin and Budhiraja, Ashish Kumar and Buebel, Spencer and Tokekar, Pratap},
  journal={Journal of Field Robotics},
  volume={36},
  number={3},
  pages={602--616},
  year={2019},
  publisher={Wiley Online Library}
}

@article{seyedi2019persistent,
  author = {Sepehr Seyedi and Yasin Yazicioğlu and Derya Aksaray},
  title   = {Persistent surveillance with energy-constrained UAVs and mobile charging stations},
  journal = {IFAC-PapersOnLine},
  volume  = {52},
  number  = {20},
  pages   = {193--198},
  year    = {2019},
  doi     = {10.1016/j.ifacol.2019.12.157}
}

@article{nourani1998sa,
doi = {10.1088/0305-4470/31/41/011},
url = {https://dx.doi.org/10.1088/0305-4470/31/41/011},
year = {1998},
month = {oct},
publisher = {},
volume = {31},
number = {41},
pages = {8373},
author = {Yaghout Nourani and Bjarne Andresen},
title = {A comparison of simulated annealing cooling strategies},
journal = {Journal of Physics A: Mathematical and General},
}

@article{mitchell2002branch,
  title={Branch-and-cut algorithms for combinatorial optimization problems},
  author={Mitchell, John E},
  journal={Handbook of applied optimization},
  volume={1},
  number={1},
  pages={65--77},
  year={2002},
  publisher={Oxford, UK},
  label={Mitchell2002}
}

@article{maini2019coverage,
  author={Maini, Parikshit and Sundar, Kaarthik and Singh, Mandeep and Rathinam, Sivakumar and Sujit, P. B.},
  journal={IEEE Transactions on Aerospace and Electronic Systems}, 
  title={Cooperative Aerial–Ground Vehicle Route Planning With Fuel Constraints for Coverage Applications}, 
  year={2019},
  volume={55},
  number={6},
  pages={3016-3028},
  keywords={Fuels;Roads;Planning;Unmanned aerial vehicles;Routing;Land vehicles;Mixed integer linear programming;Air–ground mission planning;fuel constraints;mixed-integer linear programming (MILP);route planning;unmanned aerial vehicle (UAV)},
  doi={10.1109/TAES.2019.2917578}}

@book{gen1999genetic,
  title={Genetic algorithms and engineering optimization},
  author={Gen, Mitsuo and Cheng, Runwei},
  year={1999},
  publisher={John Wiley \& Sons},
  address={New York}
}

@article{smith2007steady,
    author = {Smith, Jim},
    title = {On Replacement Strategies in Steady State Evolutionary Algorithms},
    journal = {Evolutionary Computation},
    volume = {15},
    number = {1},
    pages = {29-59},
    year = {2007},
    month = {03},
    issn = {1063-6560},
    doi = {10.1162/evco.2007.15.1.29},
    url = {https://doi.org/10.1162/evco.2007.15.1.29},
    eprint = {https://direct.mit.edu/evco/article-pdf/15/1/29/1493717/evco.2007.15.1.29.pdf},
}

@article{QP,
  title={An extension of Karmarkar's projective algorithm for convex quadratic programming},
  author={Ye, Yinyu and Tse, E.},
  journal={Mathematical Programming},
  volume={44},
  number={1-3},
  pages={157--179},
  year={1989},
  publisher={Springer}
}

@Article{smith2016GLNS,
  author =    {S. L. Smith and F. Imeson},
  title =     {{GLNS}: An Effective Large Neighborhood Search Heuristic
  				for the Generalized Traveling Salesman Problem},
  journal =   {Computers \& Operations Research},
  volume =    87,
  pages =     {1-19},
  year =      2017,
}

@article{guo2020vertical,
  title={Precision landing test and simulation of the agricultural UAV on apron},
  author={Guo, Yangyang and Guo, Jiaqian and Liu, Chang and Xiong, Hongting and Chai, Lilong and He, Dongjian},
  journal={Sensors},
  volume={20},
  number={12},
  pages={3369},
  year={2020},
  publisher={MDPI}
}

@article{wang2019vehicle,
  title={Vehicle routing problem with drones},
  author={Wang, Zheng and Sheu, Jiuh-Biing},
  journal={Transportation research part B: methodological},
  volume={122},
  pages={350--364},
  year={2019},
  publisher={Elsevier}
}

@inproceedings{tang2019study,
  title={A study on the traveling salesman problem with a drone},
  author={Tang, Ziye and Hoeve, Willem-Jan van and Shaw, Paul},
  booktitle={International conference on integration of constraint programming, artificial intelligence, and operations research},
  pages={557--564},
  year={2019},
  organization={Springer}
}

@article{murray2015flying,
title = {The flying sidekick traveling salesman problem: Optimization of drone-assisted parcel delivery},
journal = {Transportation Research Part C: Emerging Technologies},
volume = {54},
pages = {86-109},
year = {2015},
issn = {0968-090X},
doi = {https://doi.org/10.1016/j.trc.2015.03.005},
url = {https://www.sciencedirect.com/science/article/pii/S0968090X15000844},
author = {Chase C. Murray and Amanda G. Chu},
keywords = {Unmanned aerial vehicle, Vehicle routing problem, Traveling salesman problem, Logistics, Integer programming, Heuristics},
}

@article{ha2018min,
  title={On the min-cost traveling salesman problem with drone},
  author={Ha, Quang Minh and Deville, Yves and Pham, Quang Dung and H{\`a}, Minh Ho{\`a}ng},
  journal={Transportation Research Part C: Emerging Technologies},
  volume={86},
  pages={597--621},
  year={2018},
  publisher={Elsevier}
}

@article{bouman2018dynamic,
  title={Dynamic programming approaches for the traveling salesman problem with drone},
  author={Bouman, Paul and Agatz, Niels and Schmidt, Marie},
  journal={Networks},
  volume={72},
  number={4},
  pages={528--542},
  year={2018},
  publisher={Wiley Online Library}
}

@article{chen2021delivery,
  title={Delivery path planning of heterogeneous robot system under road network constraints},
  author={Chen, Yang and Chen, Mengqing and Chen, Zhihuan and Cheng, Lei and Yang, Yanhua and Li, Hui},
  journal={Computers \& Electrical Engineering},
  volume={92},
  pages={107197},
  year={2021},
  publisher={Elsevier}
}

@article{li2021ground,
  title={Ground-vehicle and unmanned-aerial-vehicle routing problems from two-echelon scheme perspective: A review},
  author={Li, Hongqi and Chen, Jun and Wang, Feilong and Bai, Ming},
  journal={European Journal of Operational Research},
  volume={294},
  number={3},
  pages={1078--1095},
  year={2021},
  publisher={Elsevier}
}

@article{luo2017two,
  title={A two-echelon cooperated routing problem for a ground vehicle and its carried unmanned aerial vehicle},
  author={Luo, Zhihao and Liu, Zhong and Shi, Jianmai},
  journal={Sensors},
  volume={17},
  number={5},
  pages={1144},
  year={2017},
  publisher={MDPI}
}

@article{liu2020two,
  title={Two-echelon routing problem for parcel delivery by cooperated truck and drone},
  author={Liu, Yao and Liu, Zhong and Shi, Jianmai and Wu, Guohua and Pedrycz, Witold},
  journal={IEEE Transactions on Systems, Man, and Cybernetics: Systems},
  volume={51},
  number={12},
  pages={7450--7465},
  year={2020},
  publisher={IEEE}
}

@inproceedings{maini2015cooperation,
  title={On cooperation between a fuel constrained UAV and a refueling UGV for large scale mapping applications},
  author={Maini, Parikshit and Sujit, PB},
  booktitle={2015 international conference on unmanned aircraft systems (ICUAS)},
  pages={1370--1377},
  year={2015},
  organization={IEEE}
}

@article{manyam2019cooperative,
  title={Cooperative routing for an air--ground vehicle team—exact algorithm, transformation method, and heuristics},
  author={Manyam, Satyanarayana G and Sundar, Kaarthik and Casbeer, David W},
  journal={IEEE Transactions on Automation Science and Engineering},
  volume={17},
  number={1},
  pages={537--547},
  year={2019},
  publisher={IEEE}
}

@inproceedings{ramasamy2021cooperative,
  title={Cooperative route planning of multiple fuel-constrained Unmanned Aerial Vehicles with recharging on an Unmanned Ground Vehicle},
  author={Ramasamy, Subramanian and Reddinger, Jean-Paul F and Dotterweich, James M and Childers, Marshal A and Bhounsule, Pranav A},
  booktitle={2021 International Conference on Unmanned Aircraft Systems (ICUAS)},
  pages={155--164},
  year={2021},
  organization={IEEE}
}

@article{ramasamy2022coordinated,
  title={Coordinated route planning of multiple fuel-constrained unmanned aerial systems with recharging on an unmanned ground vehicle for mission coverage},
  author={Ramasamy, Subramanian and Reddinger, Jean-Paul F and Dotterweich, James M and Childers, Marshal A and Bhounsule, Pranav A},
  journal={Journal of Intelligent \& Robotic Systems},
  volume={106},
  number={1},
  pages={30},
  year={2022},
  publisher={Springer}
}

@inproceedings{mondal2023optimizing,
  title={Optimizing fuel-constrained uav-ugv routes for large scale coverage: Bilevel planning in heterogeneous multi-agent systems},
  author={Mondal, Md Safwan and Ramasamy, Subramanian and Humann, James D and Reddinger, Jean-Paul F and Dotterweich, James M and Childers, Marshal A and Bhounsule, Pranav},
  booktitle={2023 International Symposium on Multi-Robot and Multi-Agent Systems (MRS)},
  pages={114--120},
  year={2023},
  organization={IEEE}
}

@inproceedings{mondal2025risk,
  title={Risk-Aware Energy-Constrained UAV-UGV Cooperative Routing Using Attention-Guided Reinforcement Learning},
  author={Mondal, Md Safwan and Ramasamy, Subramanian and Rownak, Ragib and Russo, Luca and Humann, James D and Dotterweich, James M and Bhounsule, Pranav},
  booktitle={2025 IEEE International Conference on Robotics and Automation (ICRA)},
  pages={13000--13006},
  year={2025},
  organization={IEEE}
}

@article{mondal2025cooperative,
  title={Cooperative multi-agent planning framework for fuel constrained uav-ugv routing problem},
  author={Mondal, Md Safwan and Ramasamy, Subramanian and Humann, James D and Dotterweich, James M and Reddinger, Jean-Paul F and Childers, Marshal A and Bhounsule, Pranav A},
  journal={Journal of Intelligent \& Robotic Systems},
  volume={111},
  number={1},
  pages={12},
  year={2025},
  publisher={Springer}
}

@inproceedings{mondal2024robust,
  title={A robust uav-ugv collaborative framework for persistent surveillance in disaster management applications},
  author={Mondal, Md Safwan and Ramasamy, Subramanian and Humann, James D and Dotterweich, James M and Reddinger, Jean-Paul F and Childers, Marshal A and Bhounsule, Pranav},
  booktitle={2024 International Conference on Unmanned Aircraft Systems (ICUAS)},
  pages={1239--1246},
  year={2024},
  organization={IEEE}
}

@article{rosenkrantz1977analysis,
author = {Rosenkrantz, Daniel and Stearns, Richard and II, Philip},
year = {1977},
month = {09},
pages = {563-581},
title = {An Analysis of Several Heuristics for the Traveling Salesman Problem},
volume = {6},
isbn = {978-1-4020-9687-7},
journal = {SIAM J. Comput.},
doi = {10.1137/0206041}
}

@article{ben2009robust,
  title={Robust optimization},
  author={Ben-Tal, Aharon and Nemirovski, Arkadi and El Ghaoui, Laurent},
  year={2009},
  publisher={Princeton university press}
}

@misc{gurobi,
  author = {{Gurobi Optimization, LLC}},
  title = {{Gurobi Optimizer Reference Manual}},
  year = {2026},
  url = {https://www.gurobi.com}
}

@inproceedings{akiba2019optuna,
  title={Optuna: A next-generation hyperparameter optimization framework},
  author={Akiba, Takuya and Sano, Shotaro and Yanase, Toshihiko and Ohta, Takeru and Koyama, Masanori},
  booktitle={Proceedings of the 25th ACM SIGKDD international conference on knowledge discovery \& data mining},
  pages={2623--2631},
  year={2019}
}

@article{kirkpatrick1983optimization,
  title={Optimization by simulated annealing},
  author={Kirkpatrick, Scott and Gelatt Jr, C Daniel and Vecchi, Mario P},
  journal={science},
  volume={220},
  number={4598},
  pages={671--680},
  year={1983},
  publisher={American association for the advancement of science}
}

@article{bertsimas2011theory,
  title={Theory and applications of robust optimization},
  author={Bertsimas, Dimitris and Brown, David B and Caramanis, Constantine},
  journal={SIAM review},
  volume={53},
  number={3},
  pages={464--501},
  year={2011},
  publisher={SIAM}
}

@inproceedings{abdilla2015power,
  title={Power and endurance modelling of battery-powered rotorcraft},
  author={Abdilla, Analiza and Richards, Arthur and Burrow, Stephen},
  booktitle={2015 IEEE/RSJ International Conference on Intelligent Robots and Systems (IROS)},
  pages={675--680},
  year={2015},
  organization={IEEE}
}

@article{stolaroff2018energy,
  title={Energy use and life cycle greenhouse gas emissions of drones for commercial package delivery},
  author={Stolaroff, Joshuah K and Samaras, Constantine and O’Neill, Emma R and Lubers, Alia and Mitchell, Alexandra S and Ceperley, Daniel},
  journal={Nature communications},
  volume={9},
  number={1},
  pages={409},
  year={2018},
  publisher={Nature Publishing Group UK London}
}

@article{goueretheret2017positivity,
  title={Positivity of the time constant in a continuous model of first passage percolation},
  author={Gou{\'e}r{\'e}, Jean-Baptiste and Th{\'e}ret, Marie},
  year={2017}
}

@inproceedings{willot2015power,
  title={The power laws of geodesics in some random sets with dilute concentration of inclusions},
  author={Willot, Fran{\c{c}}ois},
  booktitle={International Symposium on Mathematical Morphology and Its Applications to Signal and Image Processing},
  pages={535--546},
  year={2015},
  organization={Springer}
}
\color{black}
\appendix
\section{Baseline Encodings}
\label{app:baselines}

This appendix describes the encoding of problem~\eqref{opt_problem} for the Branch and Cut, SA, and GA baselines used in Section~\ref{sec:simulations}. The three literature heuristics \cite{ropero2019terra,yu2018algorithms,maini2019coverage} are used as published. All methods share the same travel-time instantiation, platform speeds, and recharge model (Section~\ref{sec:simulations}); SA and GA hyperparameter details are in Sections~\ref{sec:sa} and~\ref{sec:ga}.

\subsection{Branch and Cut}
\label{app:bnc}
Problem~\eqref{opt_problem} is presented in a problem-natural form using logical quantifiers, non-smooth $\max(\cdot)$ terms, and Euclidean distances. For exact-solver evaluation, we encode it as a mixed-integer
quadratically constrained program (MIQCP) with the following decision variables:
\begin{itemize}
    \item $X^{\mu}_{i,j} \in \mathbb{R}^3$: continuous point position for team $\mu$, tour $i$, slot $j$;
    \item $z_{\mu,i,j,k} \in \{0,1\}$: binary indicator that slot $j$ of tour $i$ of team $\mu$ corresponds to monitoring point $p_k$;
    \item $T_{\max} \in \mathbb{R}_{\geq 0}$: makespan epigraph variable.
\end{itemize}

The constraints are encoded as follows:
\begin{itemize}
\item \textit{Objective~\eqref{eq:eq1}:} $T_{\max} \geq \tau(X^{\mu})$ for 
all $\mu$; minimize $T_{\max}$.

    \item \textit{Constraint ~\eqref{eq:con1}:} 
$\sum_{\mu,i,j} z_{\mu,i,j,k} \geq 1$ for each $k$, ensuring every 
monitoring point is visited by at least one team.

    \item \textit{Position assignment (Big-$M$):} When $z_{\mu,i,j,k} = 1$, slot 
    $j$ must equal $p_k$; enforced via Big-$M$ constraints 
    $|X^{\mu}_{i,j,d} - p_{k,d}| \leq M(1 - z_{\mu,i,j,k})$ for each 
    coordinate $d \in \{x, y, z\}$.

\item \textit{Constraints \eqref{eq:con2} and \eqref{eq:con3}:} Enforced 
directly via $\tau_a(X^{\mu}_i) + \delta_a \leq \overline{\tau_a}$ and 
$\tau_g(X^{\mu}_{i,1}, X^{\mu}_{i,n_\mu+2}) + \delta_g \leq 
\overline{\tau_a}$. The $\max(\cdot)$ terms and 
Euclidean distance terms are handled using Gurobi's built-in support for 
general constraints and non-convex quadratic constraints respectively..

\item \textit{Constraint ~\eqref{eq:con4}:} Release and collect slots are 
constrained to the ground plane ($z = 0$).

\item \textit{Constraint ~\eqref{eq:con5}:} Each mid-tour slot is encoded 
as $X^{\mu}_{i,j} = \sum_k z_{\mu,i,j,k}\, p_k + (1 - \sum_k 
z_{\mu,i,j,k})\, X^{\mu}_{i,j-1}$, which reduces to copying the previous 
slot when no point is assigned.

\end{itemize}

The model is solved using Gurobi~\cite{gurobi} with stopping criteria 
\texttt{MipGap} $= 5\%$ and \texttt{TimeLimit} $= 200\,\text{s}$. 
Numerical tolerances are set to \texttt{FeasibilityTol} $= 10^{-4}$, 
\texttt{IntFeasTol} $= 10^{-5}$, and \texttt{OptimalityTol} $= 10^{-2}$; 
non-convex quadratic constraints are enabled via \texttt{NonConvex} $= 2$. 
A feasible warm start is provided via round-robin assignment of 
monitoring points to teams, with release and collect points placed 
directly below each assigned monitoring point, reducing branch-and-bound 
search time.

\subsection{Simulated Annealing}

The SA state is a list of $m$ matrices, one per team, each of shape $n \times (n+2)$, matching the structure of $X^{\mu}$ in \eqref{opt_problem}. The initial state is constructed by the same round-robin assignment used to warm-start the MIQCP.

The SA energy combines the makespan with structural penalty terms that encode the soft constraints: a large penalty for each tour violating the constraints~\eqref{eq:con2}-\eqref{eq:con3}, and a coverage penalty for each monitoring point in $\mathcal{P}_{\text{UAV}}$ not visited by any team. The remaining constraints, \eqref{eq:con4} and \eqref{eq:con5}, are enforced as hard constraints by the state representation and the move operator, which only generate states satisfying them by construction.

The neighborhood operator is structure-aware and includes both intra-team and cross-team moves: intra-team release/collect-point displacement, intra-team point resampling from $\mathcal{P}_{\text{UAV}}$, intra-team point repetition, cross-team point reassignment, and cross-team point swap. The choice among these moves and number of choices are randomized at each step.

\subsection{Genetic Algorithm}
Each chromosome encodes the $m$ team plans $X^{\mu}$ as a flat vector with the same structural layout as the SA state. The initial population is built via round-robin assignment, as in SA. The fitness function is
the negative of the SA energy function, so that maximizing fitness corresponds to minimizing makespan subject to the same soft constraints (coverage and energy). The remaining constraints \eqref{eq:con4} and
\eqref{eq:con5} are enforced as hard constraints by the chromosome representation and the crossover/mutation operators.

Crossover operates at the team-matrix block level: for each team $\mu$, the offspring inherits either parent~1's or parent~2's entire $X^{\mu}$ block with equal probability. This preserves intra-team tour structure rather than blending genes coordinate-wise. The mutation operator parallels the SA move operator, applying the same intra-team and cross-team neighborhood operations with a hyperparameter-tuned
probability per chromosome.

\end{document}